\definecolor{greyC}{RGB}{180,180,180}
\definecolor{greyL}{RGB}{235,235,235}
\definecolor{Gray}{gray}{0.9}
\theoremstyle{plain}
\newtheorem{theorem*}{Theorem}
\newtheorem{theorem}{Theorem}
\newtheorem{Definition}{Definition}
\newtheorem{lemma}{Lemma}
\newtheorem{lemma*}{Lemma}
\theoremstyle{definition}
\newtheorem{assumption}{Assumption}
\theoremstyle{remark}
\newtheorem{remark}{\textbf{Remark}}
\newtheorem{remark*}{\textbf{Remark}}
\def\*#1{\mathbf{#1}}
\definecolor{mydarkred}{rgb}{0.6,0,0}
\definecolor{myblue}{HTML}{268BD2}
\definecolor{mygreen}{HTML}{658354}
\definecolor{orangeinplot}{HTML}{e29c7a}
\definecolor{purpleinplot1}{HTML}{7373a2}
\definecolor{purpleinplot2}{HTML}{ababd9}
\definecolor{greeninplot}{HTML}{288308}
\definecolor{mydarkred}{rgb}{0.6,0,0}
\definecolor{myblue}{HTML}{268BD2}
\definecolor{mygreen}{HTML}{658354}
\newcolumntype{P}[1]{>{\RaggedRight\hspace{0pt}}p{#1}}
\newcolumntype{X}[1]{>{\RaggedRight\hspace*{0pt}}p{#1}}
\colorlet{linecol}{black!75}
\icmltitlerunning{When and How Does In-Distribution Label Help Out-of-Distribution Detection?}
\begin{document}

\twocolumn[
\icmltitle{When and How Does In-Distribution Label Help Out-of-Distribution Detection?}



\icmlsetsymbol{equal}{*}

\begin{icmlauthorlist}
\icmlauthor{Xuefeng Du}{yyy}
\icmlauthor{Yiyou Sun}{yyy}
\icmlauthor{Yixuan Li}{yyy}
\end{icmlauthorlist}

\icmlaffiliation{yyy}{Department of Computer Sciences, UW-Madison}

\icmlcorrespondingauthor{Yixuan Li}{sharonli@cs.wisc.edu}

\icmlkeywords{Machine Learning, ICML}

\vskip 0.3in
]



\printAffiliationsAndNotice{} 

\begin{abstract}

Detecting data points deviating from the training distribution is pivotal for ensuring reliable machine learning. Extensive research has been dedicated to the challenge, spanning classical anomaly detection techniques to contemporary out-of-distribution (OOD) detection approaches. While OOD detection commonly relies on supervised learning from a labeled in-distribution (ID) dataset, anomaly detection may treat the entire ID data as a single class and disregard ID labels.
This fundamental distinction raises a significant question that has yet to be rigorously explored: \textit{when and how does ID label help OOD detection?} This paper bridges this gap by offering a formal understanding to theoretically delineate the impact of ID labels on OOD detection. We employ a graph-theoretic approach, rigorously analyzing the separability of ID data from OOD data in a closed-form manner. Key to our approach is the characterization of data representations through spectral decomposition on the graph. Leveraging these representations, we establish a provable error bound that compares the OOD detection performance with and without ID labels, unveiling conditions for achieving enhanced OOD detection. Lastly, we present empirical results on both simulated and real datasets, validating theoretical guarantees and reinforcing our insights.   Code is publicly available at
\url{https://github.com/deeplearning-wisc/id_label}.

\end{abstract}

\section{Introduction}
When deployed in the real world, machine learning models often encounter unfamiliar data points that fall outside the distribution of the observed data. This problem has been studied extensively, dating from the classical anomaly detection methods~\cite{chandola2009anomaly,ahmed2020detecting,han2022adbench} to contemporary out-of-distribution (OOD) approaches~\cite{liu2020energy,yang2021generalized, fang2022learnable}.

While both anomaly detection and OOD detection share the goal of identifying test-time input that deviates from the training distribution, a crucial distinction lies in the \textit{usage of in-distribution (ID) labels in training time}. Specifically, classical anomaly detection may disregard ID labels~\cite{yang2021generalized}, treating the entire ID dataset as a single class. In contrast, OOD detection commonly relies on supervised learning from a labeled ID dataset. It is reasonable to hypothesize that incorporating ID labels during training might influence the resulting feature representations, potentially leading to distinct capabilities in separating ID from OOD samples during test time.  This raises a significant question that has yet to be rigorously explored in the field:
\vspace{-0.2cm}
\begin{center}
    \textbf{\emph{RQ: When and how does ID label help OOD detection?}}
\end{center}
\vspace{-0.2cm}

Answering this question offers the fundamental key to understanding and bridging two highly related fields of anomaly detection and OOD detection. In pursuit of this objective, we provide a formal understanding to theoretically delineate the influence of ID labels on OOD detection. We base our analysis on a graph-theoretic approach by modeling the ID data via a graph, where the vertices are all the data points and edges encode the similarity among data. This analytical framework is well-suited for our investigation, as data points' representation similarity can differ between the self-supervised and supervised learning setting, contingent upon the availability of ID labels. For instance, when ID labels are present, the supervisory signal facilitates connecting points belonging to the same class, resulting in each class manifesting as a distinct connected sub-graph. In both cases (with or without ID labels), the sub-structures can be revealed by performing spectral decomposition on the graph and can be expressed equivalently as a contrastive learning objective on neural net representations (expounded further in Section~\ref{sec:3}). Importantly, these learned feature representations allow us to rigorously analyze the separability of ID data from OOD data in a closed-form manner.

 \begin{figure*}[t]
  \begin{center}
   {\includegraphics[width=1\linewidth]{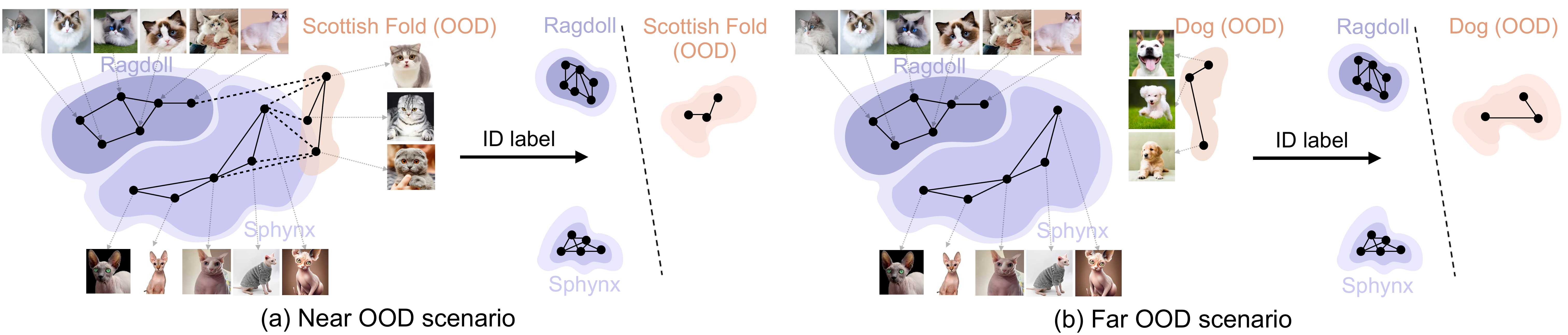}}
  \end{center}
  \caption{\small Intuitive example on the ID labels' impact on OOD detection. (a) In the near OOD scenario where the OOD data connects densely with the ID data, without ID labels, the neural network produces indistinguishable embeddings for the ID (\textcolor{purpleinplot1}{{Ragdoll}} and \textcolor{purpleinplot2}{{Sphynx}} class) and OOD data (\textcolor{orangeinplot}{{Scottish Fold}} class). By harnessing the power of the ID labeling information, the model learns more distinguishable embeddings that help ID vs. OOD separation. (b) In the far OOD scenario (\textcolor{orangeinplot}{{Dog}} class), ID labels can be less beneficial because the representations learned in an unsupervised manner can already be separable between ID vs OOD. }
     \vspace{1em}
  \label{fig:teaser}
  \end{figure*}

Based on the analytical framework, we provide a formal error bound in Theorem~\ref{MainT-2}, comparing the OOD detection performance with and without the inclusion of ID labels. This theorem reveals sufficient conditions for achieving improved OOD detection performance by leveraging ID labels. To establish the error bound, we first calculate the closed-form solution of the
ID and OOD representations based on the graph factorization and then quantify the OOD detection performance by linear probing error. As a result, Theorem~\ref{MainT-2} demonstrates that the difference in the OOD detection performance with and without ID labels can be lower bounded by a function of the adjacency matrix of ID data as well as the OOD-ID connectivity. Furthermore, we offer intuitive interpretations in Theorem~\ref{MainT-3}, and show that the ID labels are the most beneficial when either: (i) the OOD data is relatively close to the ID data, which is also known as the near OOD scenario; (ii) the ID data are connected sparsely without ID labels; and (iii) the semantic connection between each ID data to the
labeled ID data from different ID classes is sufficiently
large. To help readers understand the key insights of our theory, we provide a simple intuitive example in Figure~\ref{fig:teaser}, which demonstrates how adding ID labels in the near OOD scenario can lead to a greater benefit compared to the far OOD scenario.

Lastly, we provide empirical verifications to support our theory. In particular, we compare the OOD detection performance with and without ID labels using both simulated data and real-world datasets (Section~\ref{exp_main}). The result aligns with our
theoretical guarantee, showcasing the benefits of the ID label information under proper conditions. For example, the OOD detection result can be improved by 12.3\% (AUROC) in the near OOD scenario compared to 6.06\% in the far OOD scenario on \textsc{Cifar100}, validating our theory. 

Our main contributions are summarized as follows:

\begin{itemize}
\vspace{-0.2cm}
    \item We study an important but underexplored problem: when and how in-distribution labels can help OOD detection. Our exposition has fundamental value in understanding and bridging the two highly related fields of anomaly
detection and OOD detection.

    \item We provide an analytical framework based on graph formulation to characterize the ID and OOD representations. Based on that, we analyze the error bound for ID vs. OOD separation with and without ID labels and investigate the necessary conditions for which the labeling information can bring the most benefits.
   
    \item We present empirical analysis on both simulated and real-world datasets to verify and support our theory. The observation in practice echoes and reinforces our theoretical insights.

\end{itemize}

\section{Problem Setup}

 \textcolor{black}{Let $\mathcal{X}$ be the input space, and $\mathcal{Y}=\{1,...,c\}$ be the label space for ID data. Given an unknown ID joint distribution $\mathbb{P}_{\mathcal{X}\mathcal{Y}}$ defined over $\mathcal{X}\times \mathcal{Y}$, the \emph{labeled ID data} $\mathcal{S}_{\rm id}^{(l)} = \{(\*x_1, y_1),...,(\*x_N, y_N)\}$ are drawn independently and identically from $\mathbb{P}_{\mathcal{X}\mathcal{Y}}$. Alternatively, the \emph{unlabeled ID data} $\mathcal{S}_{\rm id}^{(u)} =  \{\*x_1,...,\*x_N\}$ is  drawn from $\mathbb{P}_{\rm id}$, which is the marginal distribution of $\mathbb{P}_{\mathcal{X}\mathcal{Y}}$ on $\mathcal{X}$. Furthermore, we denote $\mathbb{P}_{l_i}$ the distribution of labeled data with label $i \in \mathcal{Y}$. }

\begin{Definition}[{Out-of-Distribution Detection w/ ID Labels}]\label{P1}
   Given labeled ID data
   $\mathcal{S}_{\rm id}^{(l)}$, 
    the aim is to learn a predictor $\*g: \mathcal{X}\rightarrow \mathcal{Y} \cup \{ {\rm ood} \}$ such that for any test data $\mathbf{x}$:
   1) if $\mathbf{x}$ is drawn from $\mathbb{P}_{\rm id}$, then the model classifies $\mathbf{x}$ into one of ID classes $\mathcal{Y}$, and 2)
        if $\mathbf{x}$ is drawn from another distribution $\mathbb{P}_{\rm ood}$ with unknown OOD class, then $\*g$ can detect $\mathbf{x}$ as OOD data~\citep{fang2022learnable}. 
\end{Definition}

\vspace{0.2cm}
\begin{Definition}[{Out-of-Distribution Detection w/o ID Labels}]\label{P2}
The definition is similar to above, except that we are using unlabeled ID data $\mathcal{S}_{\rm id}^{(u)} =  \{\*x_1,...,\*x_N\}$ to learn the binary predictor $\*g$. This is in accordance with the classical anomaly detection problem~\cite{chandola2009anomaly}.
\end{Definition}

\section{Analysis Framework}
\label{sec:3}
\paragraph{Overview of rationale.} In this section, we introduce our analytical framework, which allows us to formalize and understand the OOD detection performance in two cases: \textbf{(1)} learning without ID labels, and \textbf{(2)} learning with ID labels, respectively.   
Our analytical framework models the ID data via a graph, where the vertices are all the data points, and edges encode the similarity among data (Section~\ref{sec:3.1}). The similarity can be defined in either a self-supervised or supervised manner, contingent on the availability of the ID labels. For example, when ID labels are present, the supervision signal can help connect points belonging to the same class, so that each class emerges clearly as a connected sub-graph. In both cases, the sub-structures can be revealed by performing spectral decomposition on the graph and can be expressed equivalently as a contrastive learning objective on neural net representations (Section~\ref{sec:3.2}). Importantly, these learned feature representations allow us to rigorously analyze the separability of ID data from OOD data in a closed form. Since the features learned can be directly impacted by the presence or absence of ID labels, the OOD detection performance can vary accordingly.

\subsection{Graph Formulation}
\label{sec:3.1}
We start by formally defining the graph and
adjacency matrix. For notation clarity, we use $\Bar{\*x}$ to indicate the natural sample (raw inputs without augmentation). Given an $\Bar{\*x}$, we use $\mathcal{T}(\*x|\Bar{\*x})$ to denote the probability of $\*x$ being augmented from $\Bar{\*x}$. For instance, when $\Bar{\*x}$ represents an image, $\mathcal{T}(\cdot|\Bar{\*x})$ can be the distribution of common augmentations \cite{chen2020simple} such as Gaussian blur, color distortion, and random cropping. {The augmentation allows us to define a general population space $\mathcal{X}_\text{id}$, which contains all the original ID data points along with their augmentations.} We denote the cardinality of the population space with $|\mathcal{X}_\text{id}|=N$.

 We define the graph $G(\mathcal{X}_{\rm id}, \zeta)$ over the finite
vertex set $\mathcal{X}_\text{id}$ with edge weights $\zeta$. To define edge weights $\zeta$, we consider two cases: (1) self-supervised connectivity $\zeta^{(u)}$ by treating all points in $\mathcal{X}_{\rm id}$ as entirely unlabeled, and (2) supervised connectivity $\zeta^{(l)}$ by utilizing 
labeling information of ID data. 

\begin{Definition}[\textbf{Unlabeled case} (u)]\label{Def4}
When all ID points are unlabeled, two samples ($\*x$, $\*x^+$) are considered a {\textbf{positive pair}} if $\*x$ and $\*x^+$ are augmented from the same image $\Bar{\*x} \sim \mathbb{P}_{\rm id}$.
For any two augmented data $\*x, \*x' \in \mathcal{X}_{\rm id}$, the edge weight $\zeta^{(u)}_{\*x \*x'}$ is defined as the marginal probability of generating the pair~\cite{haochen2021provable}:
\begin{align}
\zeta^{(u)}_{\*x \*x^{\prime}} \triangleq \mathbb{E}_{\Bar{\*x} \sim {\mathbb{P}}_{\rm id}}  \mathcal{T}(\*x| \Bar{\*x}) \mathcal{T}\left(\*x'| \Bar{\*x}\right).
    \label{eq:def_wxx}
\end{align}
\end{Definition}
The magnitude of $\zeta^{(u)}_{\*x\*x'}$ indicates the ``positiveness'' or similarity between  $\*x$ and $\*x'$.

Alternatively, when having access to the labeling information for ID data, we can define the edge weight by {adding additional supervised connectivity to the graph}. 

\begin{Definition}[\textbf{Labeled case} (l)]\label{Def5}
When all ID points are labeled, two samples ($\*x$, $\*x^+$) are considered a {\textbf{positive pair}} if $\*x$ and $\*x^+$ are augmented from two labeled samples $\Bar{\*x}_{l}$ and $\Bar{\*x}'_{l}$ \emph{with the same ID class $i$}. The overall edge weight for any pair of data $(\*x,\*x')$ is given by: 
\begin{align*}
\small
\begin{split}
&\zeta^{(l)}_{\*x \*x^{\prime}} = \phi_{u} \cdot \zeta^{(u)}_{\*x \*x^{\prime}} + \phi_{l}  \cdot \sum_{i \in \mathcal{Y}}\mathbb{E}_{\Bar{\*x}_{l} \sim {\mathbb{P}_{l_i}}} \mathbb{E}_{\Bar{\*x}'_{l} \sim {\mathbb{P}_{l_i}}} \mathcal{T}(\*x | \Bar{\*x}_{l}) \mathcal{T}\left(\*x' | \Bar{\*x}'_{l}\right),
    \label{eq:def_wxx_b}
\end{split}
\end{align*}
where $\phi_{u},\phi_{l}$ are the weight coefficients. Compared to the unlabeled case, the second term strengthens the connectivity for points belonging to the same class.
\end{Definition}

\begin{Definition}[{Adjacency matrix for unlabeled ID data}]
    We define the adjacency matrix $\*A^{(u)}$ with entry value $\zeta^{(u)}_{\*x \*x^{\prime}}$ for each $(\*x, \*x')$ pair . Further, $\zeta^{(u)}_\*x = \sum_{\*x' \in \mathcal{X}}\zeta^{(u)}_{\*x\*x'}$
denotes the total edge weights connected to a vertex $\*x$. 
\end{Definition}
\begin{Definition}[{Adjacency matrix for labeled ID data}] 
    Similarly, we define the adjacency matrix for labeled ID data $\*A^{(l)}$ with entry value $\zeta^{(l)}_{\*x \*x^{\prime}}$ for each $(\*x, \*x')$ pair and $\zeta^{(l)}_\*x = \sum_{\*x' \in \mathcal{X}}\zeta^{(l)}_{\*x\*x'}$.
\end{Definition}

 As a standard technique in graph theory~\cite{chung1997spectral}, we use the \textit{normalized adjacency matrix} of $G(\mathcal{X}_{\rm id}, \zeta)$:
\begin{equation}
    \tilde{\*A}\triangleq \*D^{-\frac{1}{2}} \*A \*D^{-\frac{1}{2}},
    \label{eq:def}
\end{equation}
where $\*A$ can be instantiated by either $\*A^{(u)}$ or $\*A^{(l)}$ defined above. ${\*D} \in \mathbb{R}^{N \times N}$ is the corresponding diagonal matrix with ${\*D}^{}_{\*x \*x}=\zeta^{(u)}_\*x$ for unlabeled case and ${\*D}^{}_{\*x \*x}=\zeta^{(l)}_\*x$ for labeled case. The normalization balances the degree of each node,  reducing the influence of vertices with very large degrees. The normalized adjacency matrix allows us to perform spectral decomposition as we show next.

\subsection{Learning Representations Based on Graph Spectral}
\label{sec:3.2}

In this section, we perform spectral decomposition or spectral clustering~\cite{ng2001spectral}---a classical approach to graph
partitioning---to the adjacency matrices defined above. This process forms a matrix where the top-$k$ eigenvectors are the columns and \emph{each row of the matrix can be viewed as a $k$-dimensional representation of an example}. The resulting
feature representations enable us to rigorously analyze the separability of ID data from OOD data in a closed form, and formally compare the OOD detection error under two scenarios either with and without ID labels (in Section~\ref{sec:theory}). 

Specifically, taking the labeled case as an example, we consider the following optimization, which performs low-rank matrix approximation on the adjacency matrix $\tilde{\*A}^{(l)}$:
\begin{equation}
     \min _{\*F^{(l)} \in \mathbb{R}^{N \times k}} \mathcal{L}(\*F^{(l)}, {\*A}^{(l)})\triangleq\left\|\tilde{\*A}^{(l)}-\*F^{(l)} \*F^{(l)\top}\right\|_F^2,
    \label{eq:lmf}
\end{equation}
where $\|\cdot\|_F$ denotes the matrix Frobenious norm. According to the Eckart–Young–Mirsky theorem~\cite{eckart1936approximation}, the minimizer of this loss function is $\*F_k^{(l)}\in \mathbb{R}^{N \times k}$ such that $\*F_k^{(l)} \*F_k^{(l)\top}$ contains the top-$k$ components of $\tilde{\*A}^{(l)}$'s eigen decomposition.

\paragraph{A surrogate objective.} In practice, directly solving objective~\ref{eq:lmf} can be computationally expensive for an extremely large matrix. To circumvent this, the feature representations
can be equivalently recovered by minimizing the following contrastive learning objective~\cite{sun2023graph,sun2023when} as shown in Lemma~\ref{lemma1_main}, which can be efficiently trained end-to-end using a neural net parameterized by $\*w$:
\begin{align}
\begin{split}
    \mathcal{L}_\text{labeled}(\*h_\*w) &\triangleq - 2\phi_{l} \mathcal{L}_1(\*h_\*w) 
- 2\phi_{u}  \mathcal{L}_2(\*h_\*w)  +\\& \phi_{l}^2 \mathcal{L}_3(\*h_\*w) + 2\phi_{l} \phi_{u} \mathcal{L}_4(\*h_\*w) +  
\phi_{u}^2 \mathcal{L}_5(\*h_\*w),
\label{eq:def_SORL}
\end{split}
\end{align} where $\*h_\*w:\mathcal{X}_{\rm id}\mapsto \mathbb{R}^k$ denotes the feature representation,
\begin{equation*}
\begin{aligned}
    & \mathcal{L}_1(\*h_\*w) = \sum_{i \in \mathcal{Y}}\underset{\substack{\Bar{\*x}_{l} \sim \mathbb{P}_{{l_i}}, \Bar{\*x}'_{l} \sim \mathbb{P}_{{l_i}},\\\*x \sim \mathcal{T}(\cdot|\Bar{\*x}_{l}), \*x^{+} \sim \mathcal{T}(\cdot|\Bar{\*x}'_l)}}{\mathbb{E}}\left[\*h_\*w(\*x)^{\top} {\*h_\*w}\left(\*x^{+}\right)\right] , \\&
    \mathcal{L}_2(\*h_\*w) = \underset{\substack{\Bar{\*x}_{u} \sim \mathbb{P}_{\rm id},\\\*x \sim \mathcal{T}(\cdot|\Bar{\*x}_{u}), \*x^{+} \sim \mathcal{T}(\cdot|\Bar{\*x}_u)}}{\mathbb{E}}
\left[\*h_\*w(\*x)^{\top} {\*h_\*w}\left(\*x^{+}\right)\right], \\&
    \mathcal{L}_3(\*h_\*w) = \sum_{i, j\in \mathcal{Y}}
    \underset{\substack{\Bar{\*x}_l \sim \mathbb{P}_{{l_i}}, \Bar{\*x}'_l \sim \mathbb{P}_{{l_{j}}},\\\*x \sim \mathcal{T}(\cdot|\Bar{\*x}_l), \*x^{-} \sim \mathcal{T}(\cdot|\Bar{\*x}'_l)}}{\mathbb{E}}
\left[\left(\*h_\*w(\*x)^{\top} {\*h_\*w}\left(\*x^{-}\right)\right)^2\right], \\&
    \mathcal{L}_4(\*h_\*w) = \sum_{i \in \mathcal{Y}}\underset{\substack{\Bar{\*x}_l \sim \mathbb{P}_{{l_i}}, \Bar{\*x}_u \sim \mathbb{P}_{\rm id},\\\*x \sim \mathcal{T}(\cdot|\Bar{\*x}_l), \*x^{-} \sim \mathcal{T}(\cdot|\Bar{\*x}_u)}}{\mathbb{E}}
\left[\left(\*h_\*w(\*x)^{\top} {\*h_\*w}\left(\*x^{-}\right)\right)^2\right], \\&
    \mathcal{L}_5(\*h_\*w) = \underset{\substack{\Bar{\*x}_u \sim \mathbb{P}_{\rm id}, \Bar{\*x}'_u \sim \mathbb{P}_{\rm id},\\\*x \sim \mathcal{T}(\cdot|\Bar{\*x}_u), \*x^{-} \sim \mathcal{T}(\cdot|\Bar{\*x}'_u)}}{\mathbb{E}}
\left[\left(\*h_\*w(\*x)^{\top} {\*h_\*w}\left(\*x^{-}\right)\right)^2\right].
\label{eq:def_SORL_detail}
\end{aligned}
\end{equation*}

\paragraph{Interpretation.}
At a high level, $\mathcal{L}_1,$ $\mathcal{L}_2$ push embeddings of \textit{positive pairs} to be closer while $\mathcal{L}_3$, $\mathcal{L}_4$ 
 and $\mathcal{L}_5$ pull away embeddings of \textit{negative pairs}. Particularly, $\mathcal{L}_1$ samples two random augmentation views of two images from labeled data with the \textit{same} label. $\mathcal{L}_2$ samples two views from the same image in $\mathcal{X}_{\rm id}$. For negative pairs, $\mathcal{L}_3$ uses two augmentation views from two labeled samples in $\mathcal{X}_{l}$ with \textit{any} label. $\mathcal{L}_4$ uses two views of one sample in $\mathcal{X}_{l}$ and another one in $\mathcal{X}_{\rm id}$. $\mathcal{L}_5$ uses two views from two random samples in $\mathcal{X}_{\rm id}$. 
 
Importantly, the contrastive loss allows drawing a theoretical equivalence between learned representations and the top-$k$ singular vectors of $\tilde{\*A}^{(l)}$, and facilitates theoretical understanding of the OOD detection on the data represented by $\tilde{\*A}^{(l)}$. We formalize the equivalence below. 

\begin{lemma}[Theoretical equivalence between two objectives]
\label{lemma1_main} 
We define each row $\*f_\*x^{\top}$ of $\*F^{(l)}$ as a scaled version of learned feature representation $\*h_\*w$, with $\*f_\*x = \sqrt{\zeta_\*x}\*h_\*w(\*x)$. Then minimizing the loss function $\mathcal{L}(\*F^{(l)}, \*A^{(l)})$ in Equation~\ref{eq:lmf} is equivalent to minimizing the surrogate loss in Equation~\ref{eq:def_SORL}. Full proof is in Appendix Section~\ref{sec:loss_deri_app}.

\end{lemma}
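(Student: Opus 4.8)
The plan is to expand the Frobenius-norm objective in Equation~\ref{eq:lmf}, discard the term that does not depend on $\*F^{(l)}$, and then match the remaining two pieces term-by-term with the five contrastive terms $\mathcal{L}_1,\dots,\mathcal{L}_5$ after substituting the rescaling $\*f_\*x=\sqrt{\zeta_\*x}\,\*h_\*w(\*x)$ (so that each row of $\*F^{(l)}$ is $\*f_\*x^\top$). Concretely, I would write $\big\|\tilde{\*A}^{(l)}-\*F^{(l)}\*F^{(l)\top}\big\|_F^2 = \|\tilde{\*A}^{(l)}\|_F^2 - 2\langle \tilde{\*A}^{(l)},\*F^{(l)}\*F^{(l)\top}\rangle + \|\*F^{(l)}\*F^{(l)\top}\|_F^2$, noting the first term is a constant. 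For the cross term, the normalization cancels cleanly: $\tilde{\*A}^{(l)}_{\*x\*x'}=\zeta^{(l)}_{\*x\*x'}/\sqrt{\zeta_\*x\zeta_{\*x'}}$ while $(\*F^{(l)}\*F^{(l)\top})_{\*x\*x'}=\sqrt{\zeta_\*x\zeta_{\*x'}}\,\*h_\*w(\*x)^\top\*h_\*w(\*x')$, so the inner product is $\sum_{\*x,\*x'}\zeta^{(l)}_{\*x\*x'}\,\*h_\*w(\*x)^\top\*h_\*w(\*x')$. Substituting the definition of $\zeta^{(l)}_{\*x\*x'}$ from Definition~\ref{Def5} and reading each double sum over $\*x,\*x'$ against the outer expectations over natural samples as an expectation over augmentation pairs identifies this with $\phi_l\mathcal{L}_1(\*h_\*w)+\phi_u\mathcal{L}_2(\*h_\*w)$; multiplying by $-2$ yields the first two terms of Equation~\ref{eq:def_SORL}.

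The bulk of the argument is the quadratic term $\|\*F^{(l)}\*F^{(l)\top}\|_F^2=\sum_{\*x,\*x'}\zeta_\*x\zeta_{\*x'}\big(\*h_\*w(\*x)^\top\*h_\*w(\*x')\big)^2$. Here I first derive a closed form for the degree $\zeta_\*x=\zeta^{(l)}_\*x=\sum_{\*x'}\zeta^{(l)}_{\*x\*x'}$: summing the expression of Definition~\ref{Def5} over $\*x'\in\mathcal{X}_{\rm id}$ and using $\sum_{\*x'}\mathcal{T}(\*x'|\Bar{\*x})=1$ for every natural sample $\Bar{\*x}$ (since $\mathcal{T}(\cdot|\Bar{\*x})$ is a probability distribution over the finite population space set up in Section~\ref{sec:3.1}) collapses it to $\zeta^{(l)}_\*x=\phi_u\,\mathbb{E}_{\Bar{\*x}\sim\mathbb{P}_{\rm id}}\mathcal{T}(\*x|\Bar{\*x})+\phi_l\sum_{i\in\mathcal{Y}}\mathbb{E}_{\Bar{\*x}_l\sim\mathbb{P}_{l_i}}\mathcal{T}(\*x|\Bar{\*x}_l)$. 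Substituting the product $\zeta_\*x\zeta_{\*x'}$ and expanding the bilinear product of the two bracketed sums produces four groups of terms: a $\phi_u^2$ group that becomes $\phi_u^2\mathcal{L}_5$, a $\phi_l^2$ group carrying a genuine double class sum over $i,j\in\mathcal{Y}$ that becomes $\phi_l^2\mathcal{L}_3$, and two $\phi_u\phi_l$ cross groups that, after relabeling $\*x\leftrightarrow\*x'$ and using the symmetry of $\big(\*h_\*w(\*x)^\top\*h_\*w(\*x')\big)^2$, merge into $2\phi_l\phi_u\mathcal{L}_4$. Summing all pieces gives $\mathcal{L}(\*F^{(l)},\*A^{(l)})=\|\tilde{\*A}^{(l)}\|_F^2+\mathcal{L}_\text{labeled}(\*h_\*w)$, so the two objectives differ only by a constant; they therefore share the same minimizer (the rescaling is bijective when every $\zeta_\*x>0$), which by Eckart--Young--Mirsky recovers the top-$k$ eigen components of $\tilde{\*A}^{(l)}$.

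The step I expect to require the most care is the quadratic term: one must keep the class indices distinct in the $\phi_l^2$ contribution so that it is a double sum $\sum_{i,j\in\mathcal{Y}}$ matching $\mathcal{L}_3$ rather than a diagonal sum, and one must correctly justify the symmetrization that turns the two separate $\phi_u\phi_l$ cross terms into the single coefficient $2\phi_l\phi_u$ in front of $\mathcal{L}_4$ — this hinges on the squared inner product being symmetric in its two arguments even though the underlying sampling distributions ($\mathbb{P}_{\rm id}$ versus $\mathbb{P}_{l_i}$) are not interchangeable. A smaller but essential point is to state explicitly that the degree identity relies on $\mathcal{T}(\cdot|\Bar{\*x})$ being normalized over $\mathcal{X}_{\rm id}$, which is exactly how the augmentation graph was constructed. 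The unlabeled case follows as the special instance $\phi_l=0,\ \phi_u=1$, so no separate argument is needed for it.
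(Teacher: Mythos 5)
Your proposal is correct and follows essentially the same route as the paper's Appendix~\ref{sec:loss_deri_app} proof: expand the Frobenius-norm objective into constant, linear, and quadratic pieces, use $\*f_\*x=\sqrt{\zeta_\*x}\*h_\*w(\*x)$ so the normalization cancels in the cross term, substitute Definition~\ref{Def5} for $\zeta^{(l)}_{\*x\*x'}$ and the degree $\zeta^{(l)}_\*x$ (via $\sum_{\*x'}\mathcal{T}(\*x'|\Bar{\*x})=1$), and read off $-2\phi_l\mathcal{L}_1-2\phi_u\mathcal{L}_2+\phi_l^2\mathcal{L}_3+2\phi_l\phi_u\mathcal{L}_4+\phi_u^2\mathcal{L}_5$. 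The points you flag as requiring care (the double class sum in the $\phi_l^2$ block and the symmetrization merging the two $\phi_u\phi_l$ cross terms) are exactly the nontrivial bookkeeping steps in the paper's own derivation.
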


\begin{remark} We can extend the contrastive learning objective in Equation~\ref{eq:def_SORL} to the \emph{unlabeled case} by setting the coefficient $\phi_l$ to 0 and keeping the remaining parts:
\begin{align}
\begin{split}
    \mathcal{L}_\text{unlabeled}(\*h_\*w) &\triangleq - 2\phi_{u} \mathcal{L}_2(\*h_\*w) 
+  \phi_{u}^2 \mathcal{L}_5(\*h_\*w).
\label{eq:def_SORL_unlabeled}
\end{split}
\end{align}
\end{remark}

The loss has been employed in prior works on spectral contrastive learning~\cite{sun2023graph,sun2023when}, which analyzed problems such as novel category discovery and open-world semi-supervised learning. However, our paper
focuses on the problem of OOD detection, which has fundamentally different learning goals. Accordingly, we derive novel theoretical analyses uniquely
tailored to our problem focus (i.e., the impact of the ID label information), which we present next.

\section{Theoretical Results}
\label{sec:theory}
 Based on the analytical framework, we now provide theoretical insights to the core question: \emph{\textbf{when and how does ID label information help OOD detection?}} To answer this question, we start by deriving the closed-form solution of the representations for both ID and OOD data (Section~\ref{sec:4.1}), and then quantify the OOD detection
performance by measuring the linear probing error (Section~\ref{sec:4.2}). Finally, we provide a formal bound contrasting the OOD detection performance with and without ID labels (Section~\ref{sec:4.3}).

\subsection{Representation for ID and OOD Data}
\label{sec:4.1}

\paragraph{ID representations.} We first derive the ID representations for the labeled case, which can be similarly derived for the unlabeled case. Specifically, one can train the neural network $\*h_{\*w}: \mathcal{X}_\text{id} \rightarrow \mathbb{R}^k$ using the surrogate objective in Equation~\ref{eq:def_SORL}.  Minimizing the loss yields representation $\*Z^{(l)}\in \mathbb{R}^{N\times k}$, where each row vector $\*z_i = \*h_{\*w}(\*x_i)$. According to Lemma~\ref{lemma1_main}, the closed-form solution for the representations is equivalent to performing spectral decomposition of the adjacency matrix. Thus, we have $\*F_k^{(l)} = [{\*D^{(l)}}]^{\frac{1}{2}}\*Z^{(l)}$, where $\*F_k^{(l)} \*F_k^{(l)\top}$ contains the top-$k$ components of $\tilde{\*A}^{(l)}$'s SVD decomposition. 
We further denote the top-$k$ singular vectors of $\tilde{\*A}^{(l)}$  as $\*V_k^{(l)} \in \mathbb{R}^{N\times k}$, so we have $\*F_k^{(l)} = \*V_k^{(l)} [\*\Sigma_k^{(l)}]^{\frac{1}{2}}$, where $\*\Sigma_k^{(l)}$ is a diagonal matrix of the top-$k$ singular values of $\tilde{\*A}^{(l)}$.
{By equalizing the two forms of $\*F_k^{(l)}$}, the closed-formed solution of the learned feature space is given by 
\begin{equation}
    \*Z^{(l)} = [\*D^{(l)}]^{-\frac{1}{2}} \*V_k^{(l)} [\*\Sigma_k^{(l)}]^{\frac{1}{2}}.
    \label{labeled_id_feat}
\end{equation}

\begin{figure*}[t]
  \begin{center}
   {\includegraphics[width=1\linewidth]{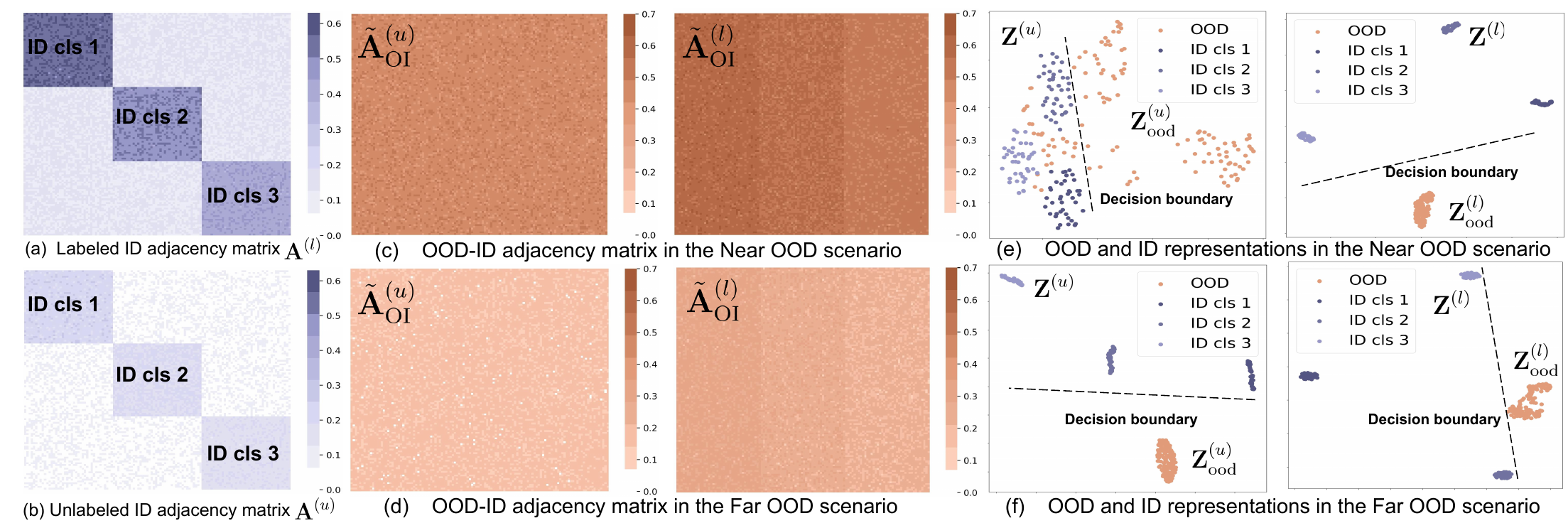}}
  \end{center}
  \caption{\small  Example showcasing the contrast between adjacency matrices and representations w/ (l) and w/o (u) ID labels. (a) The ID adjacency matrix in the labeled case $\*A^{(l)}$. (b) The ID adjacency matrix in the unlabeled case $\*A^{(u)}$. Here darker color indicates denser connectivity. The contrast of the OOD-ID adjacency matrix $\tilde{\*A}_{\rm OI}$ w/ and w/o ID labels in the near OOD and far OOD scenario is shown in (c) and (d), where the adjacency matrices have a larger Frobenius norm, i.e., $\|\tilde{\*A}_{\rm OI}^{(u)}\|_F=60$ in the near OOD scenario and smaller norm in the far OOD scenario, i.e., $\|\tilde{\*A}_{\rm OI}^{(u)}\|_F=24$. (e) Learned representations in the near OOD scenario, where the OOD representations are overlapped in the unlabeled case but become linearly separable from the ID representations in the labeled case. (d) Representations in the far OOD scenario. The ID and OOD representations can already be separable in the unlabeled case. The benefit of ID labels is marginal.}
  \label{fig:toy_example}
  \end{figure*}

 \paragraph{OOD representations.} In post hoc OOD detection, the learning algorithm can only observe ID data in $\mathcal{X}_{\rm id}$ and the corresponding adjacency matrix. Hence, a key challenge in our framework is how to derive the OOD representations based on ID and OOD data connectivity in the input space. Unlike previous literature~\cite{lee2018simple}, we refrain from making simplified assumptions in the feature representation space (although that makes analysis much easier). More realistically, we \emph{characterize OOD data directly in the input space} by the adjacency matrix $\tilde{\*A}_{\rm OI}^{(l)} \in \mathbb{R}^{M \times N}$, where $M$ is the number of OOD data points.  

 Each row in the matrix indicates the similarity between an OOD data w.r.t. all the ID samples. Depending on the characteristics of the OOD data, this matrix may be sparse if OOD data is far away from all the ID samples (e.g., far OOD), or can have dense entries if it is close to some ID classes (e.g., near OOD). Our characterization is thus general enough to enable analysis under different scenarios.

 Now a question remains: how do we go from this matrix $\tilde{\*A}_{\rm OI}^{(l)}$ to a $k$-dimensional embedding for each OOD data? While a naive solution is to perform spectral decomposition on the stack of two matrices $\tilde{\*A}_{\rm OI}^{(l)}$ and $\tilde{\*A}^{(l)}$, this violates the principle of post hoc OOD detection as it incurs retraining. Instead, we derive the embeddings of OOD vertices using
existing ID embeddings $\*F^{(l)}$ and the OOD-ID similarity. This can be achieved by solving the following optimization: 

\begin{equation}
     \min _{\*F_{\rm ood}^{(l)} \in \mathbb{R}^{M \times k}} \left\|\tilde{\*A}_{\rm OI}^{(l)} - \*F_{\rm ood}^{(l)} \*F^{(l)\top}\right\|_F^2,
     \label{aoi_optimization}
\end{equation}
where $\*F_{\rm ood}^{(l)}$ denotes the OOD embeddings. Intuitively, the objective distills the similarity in the input space into the representation space. 
For instance, it searches for an OOD representation, aligning it closely with ID representations when there is a dense connectivity between OOD and ID data in the adjacency matrix, and vice versa. Similar to the ID case, we have $\*Z_{\rm ood}^{(l)} =[{\*D_{\rm ood}^{(l)}}]^{-\frac{1}{2}}\*F_{\rm ood}^{(l)} $, where $\*D_{\rm ood}^{(l)}$ can be calculated in the same way as $\*D^{(l)}$ based on $\tilde{\*A}_{\rm OI}^{(l)}$. Therefore, the analytic form of the OOD representations from the neural network can be derived as \begin{equation}
\*Z_{\rm ood}^{(l)} = [\*D_{\rm ood}^{(l)}]^{-\frac{1}{2}} \tilde{\*A}_{\rm OI}^{(l)}  \mathbf{V}_k^{(l)}  [\mathbf{\Sigma}_k^{(l)}]^{-\frac{1}{2}}.
    \label{labeled_ood_feat}
\end{equation}
Detailed proof and the design rationale are in Appendix~\ref{derivation_represe}.

\paragraph{Representation in the unlabeled case.} For the unlabeled case, we can get the representations for ID and OOD data by replacing the matrices $\*V_k^{(l)}, \*\Sigma_k^{(l)}, \*D^{(l)}, \*D_{\rm ood}^{(l)}$ and $\tilde{\*A}_{\rm OI}^{(l)}$  in Equations~\ref{labeled_id_feat}  and~\ref{labeled_ood_feat} with the unlabeled version. We will show how the labeled and unlabeled representations are rigorously related in Appendix~\ref{Proof_theorem2}.

\paragraph{An illustrative example.} To contrast the adjacency matrices and the corresponding representations for the \textbf{u}nlabeled and \textbf{l}abeled cases, we simulate an example in Figure~\ref{fig:toy_example}. The simulation is constructed with simplicity in mind, to
facilitate understanding. Evaluations on complex high-dimensional data will be provided in Section~\ref{exp_main}. In particular, we base our analysis on the ID adjacency matrix as depicted in  Figure~\ref{fig:toy_example} (a) and (b), which consists of three ID classes and 40 data points for each class. In the labeled case, the ID adjacency matrix has a denser connectivity pattern, especially for data that belongs to the same ID label. In Figure~\ref{fig:toy_example} (c) and (d), we compare the OOD-ID adjacency matrices with and without ID labels in two scenarios, i.e., near OOD where there are dense connections in $\tilde{\*A}_{\rm OI}$ and far OOD where the connectivity in $\tilde{\*A}_{\rm OI}$ is sparse.

Based on the graph, we further visualize in Figures~\ref{fig:toy_example} (e) and (f) the 2D data representations ($k=2$, calculated by Equations~\ref{labeled_id_feat}  and~\ref{labeled_ood_feat}) for the near OOD and far OOD scenarios.  We observe that having different adjacency matrices can lead to significantly different data representations. We will provide theory to rigorously analyze the OOD detection performance and contrast between the labeled and unlabeled cases (Section~\ref{sec:4.3}). Details of the illustrative example are included in Appendix~\ref{detail_toy_example_app}.

\subsection{Evaluation Target}
\label{sec:4.2}
With the closed-form representations for both ID and OOD, we evaluate OOD detection by linear probing error. The strategy is
commonly used in representation learning~\cite{chen2020simple}. 
Specifically, the weight of a linear classifier is denoted as $\boldsymbol{\theta}\in \mathbb{R}^{k\times 2}$. The class prediction (ID vs. OOD) is given by $\*g_{\boldsymbol{\theta}}(\*z) = {\rm argmax}_{i \in \{+,-\}} (\*h_\*w(\*x) \boldsymbol{\theta})_i$. Denote the set of ID and OOD features as ${\*Z}_{\rm all} \in \mathbb{R}^{(N+M) \times  k} = [\*Z^\top, \*Z_{\rm ood}^\top]^\top$ (either labeled (u) or unlabeled (l)), the linear probing
error $R({\*Z}_{\rm all}) $ is given by the least error of all possible linear classifiers:
\begin{equation}
   R({\*Z}_{\rm all}) \triangleq \min _{\boldsymbol{\theta} \in \mathbb{R}^{k \times 2 }} \mathbb{E}_{\*z \in {\*Z}_{\rm all}} \mathds{1}[y(\*z) \neq \*g_{\boldsymbol{\theta}}(\*z)],
\end{equation}
where $y(\*z)$ denotes indicates the ground-truth class of feature $\*z$ (ID or OOD). With the definition, we can bound the
linear probing error $R({\*Z}_{\rm all}) $ by the residual of the regression error as shown in Lemma~\ref{lemma2_main} with proof in Appendix~\ref{proof_upper_bound_lp_loss}.

\begin{lemma}\label{lemma2_main}
    Denote $\*y\in \mathbb{R}^{(N+M)\times 2}$ as a matrix where each row contains the one-hot label for features in ${\*Z}_{\rm all}$. We have:
    \begin{equation}
    \label{eq:lp_loss}
        R({\*Z}_{\rm all}) \leq  \frac{2}{N+M}\operatorname{Tr}\left(\left(\mathbf{I}- {\*Z}_{\rm all} {\*Z}_{\rm all}^{\dagger}\right) \mathbf{y} \mathbf{y}^\top\right).
    \end{equation}
\end{lemma}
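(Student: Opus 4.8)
The plan is to exhibit a single linear classifier whose error is controlled by the quantity on the right-hand side, and then invoke the fact that $R(\*Z_{\rm all})$ is the minimum over all linear classifiers. The natural candidate is the least-squares solution to the regression problem $\*Z_{\rm all}\boldsymbol{\theta}\approx \*y$, namely $\boldsymbol{\theta}^\star = \*Z_{\rm all}^\dagger \*y$, whose fitted values are $\*Z_{\rm all}\boldsymbol{\theta}^\star = \*Z_{\rm all}\*Z_{\rm all}^\dagger \*y$, the orthogonal projection of $\*y$ onto the column space of $\*Z_{\rm all}$. The residual matrix is then $(\mathbf{I}-\*Z_{\rm all}\*Z_{\rm all}^\dagger)\*y$, and its squared Frobenius norm is exactly $\operatorname{Tr}\bigl((\mathbf{I}-\*Z_{\rm all}\*Z_{\rm all}^\dagger)\*y\*y^\top\bigr)$ after using idempotence and symmetry of the projector $\mathbf{I}-\*Z_{\rm all}\*Z_{\rm all}^\dagger$.

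First I would set up the pointwise comparison between the $0$–$1$ misclassification loss and the squared regression residual. For a feature $\*z$ with one-hot label row $\*y(\*z)\in\{e_+,e_-\}$, let $\*r(\*z) = \*y(\*z) - \boldsymbol{\theta}^{\star\top}\*z \in \mathbb{R}^2$ be the residual at that point. The key elementary observation is that if the argmax of the two-dimensional fitted vector $\boldsymbol{\theta}^{\star\top}\*z$ disagrees with the correct class, then the fitted value on the correct coordinate is no larger than on the wrong coordinate, which forces $\|\*r(\*z)\|_2^2 \geq \tfrac12$; equivalently $\mathds{1}[y(\*z)\neq \*g_{\boldsymbol{\theta}^\star}(\*z)] \leq 2\|\*r(\*z)\|_2^2$. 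Summing this over the $N+M$ rows of $\*Z_{\rm all}$ and dividing by $N+M$ gives
\begin{equation*}
\mathbb{E}_{\*z\in\*Z_{\rm all}}\mathds{1}[y(\*z)\neq \*g_{\boldsymbol{\theta}^\star}(\*z)] \;\leq\; \frac{2}{N+M}\sum_{\*z}\|\*r(\*z)\|_2^2 \;=\; \frac{2}{N+M}\bigl\|(\mathbf{I}-\*Z_{\rm all}\*Z_{\rm all}^\dagger)\*y\bigr\|_F^2.
\end{equation*}
Since $R(\*Z_{\rm all})$ is defined as the minimum over $\boldsymbol{\theta}$, the left-hand side for $\boldsymbol{\theta}^\star$ upper bounds $R(\*Z_{\rm all})$, and it remains only to rewrite the Frobenius norm as a trace.

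The last step is the routine identity $\|\*M\|_F^2 = \operatorname{Tr}(\*M\*M^\top)$ applied to $\*M = (\mathbf{I}-\*Z_{\rm all}\*Z_{\rm all}^\dagger)\*y$, together with the fact that $P \triangleq \mathbf{I}-\*Z_{\rm all}\*Z_{\rm all}^\dagger$ is a symmetric idempotent matrix (a property of the Moore–Penrose pseudoinverse: $\*Z_{\rm all}\*Z_{\rm all}^\dagger$ is the orthogonal projector onto $\operatorname{col}(\*Z_{\rm all})$). Then $\operatorname{Tr}(P\*y\*y^\top P) = \operatorname{Tr}(P^2 \*y\*y^\top) = \operatorname{Tr}(P\*y\*y^\top)$, which yields the stated bound $R(\*Z_{\rm all}) \leq \tfrac{2}{N+M}\operatorname{Tr}\bigl((\mathbf{I}-\*Z_{\rm all}\*Z_{\rm all}^\dagger)\*y\*y^\top\bigr)$.

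I expect the only real subtlety — the "hard part," such as it is — to be the pointwise inequality $\mathds{1}[\text{misclassified}] \leq 2\|\*r(\*z)\|_2^2$: one has to argue carefully that because the labels are one-hot in $\mathbb{R}^2$ and the classifier takes an argmax, a misclassification event on a point whose label is, say, $e_+ = (1,0)$ means the fitted second coordinate is at least the fitted first coordinate, so the residual vector $(1 - \hat y_+, -\hat y_-)$ has $\hat y_+ \le \hat y_-$, whence $(1-\hat y_+)^2 + \hat y_-^2 \ge \tfrac12$ by a short convexity/case argument (the minimum of $(1-a)^2 + b^2$ over $a \le b$ is $\tfrac12$, attained at $a=b=\tfrac12$). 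Everything else is bookkeeping with projectors and traces; no spectral properties of $\tilde{\*A}$ are needed here, since this lemma is a purely linear-algebraic reduction of classification error to regression residual that will later be combined with the closed-form expressions for $\*Z_{\rm all}$.
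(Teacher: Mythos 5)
Your proposal is correct and follows essentially the same route as the paper: take the least-squares classifier $\boldsymbol{\theta}^\star=\*Z_{\rm all}^{\dagger}\*y$, bound the $0$--$1$ error by twice the mean squared residual, and rewrite the residual norm as the stated trace using symmetry and idempotence of $\mathbf{I}-\*Z_{\rm all}\*Z_{\rm all}^{\dagger}$. The only difference is that the paper imports the intermediate inequality $R(\*Z_{\rm all})\leq \frac{2}{N+M}\min_{\boldsymbol{\theta}}\|\*y-\*Z_{\rm all}\boldsymbol{\theta}\|_F^2$ by citing Lemma 5.1 of \citet{sun2023when}, whereas you prove it directly via the pointwise argument that a misclassified one-hot row forces squared residual at least $\tfrac12$; that argument is correct and self-contained.
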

Here $\operatorname{Tr}\left(\cdot \right)$ denotes the trace operator. $ {\*Z}_{\rm all}^{\dagger}$ is the Moore-Penrose inverse of matrix ${\*Z}_{\rm all}$. We denote this upper bound as $\overline{R}({\*Z}_{\rm all}) $, which is more tractable to analyze and behaves similarly to $R({\*Z}_{\rm all})$ as shown in Appendix~\ref{proof_upper_bound_lp_loss}. Therefore our subsequent analysis revolves around it.

\subsection{Error Bound on OOD Detection Performance}
\label{sec:4.3}
With the evaluation target defined above, we now present the formal error bound on OOD detection performance
by contrasting the labeled and unlabeled cases. As an overview, 
 Theorem~\ref{MainT-2}  will present the lower bound of linear probing 
 error difference between the unlabeled and label case, along with an intuitive version in Theorem~\ref{MainT-3}. We specify several mild assumptions and necessary notations for our theorems in Appendix~\ref{notation,definition,Ass,Const}. {Due to space limitation, we omit unimportant constants and simplify the statements of our theorems. We defer the \textbf{full formal} statements in  Appendix \ref{main_theorems}. All proofs can be found in Appendix \ref{Proof_all_app}.}

\paragraph{Error difference between unlabeled and labeled cases.} Formally, we investigate the following linear probing error
difference between the unlabeled and labeled case:
\begin{equation}
\mathcal{G} = \overline{R}(  {\*Z}_{\rm all}^{(u)}) - \overline{R}({\*Z}_{\rm all}^{(l)}),
\label{eq:error_diff}
\end{equation}
where a larger error difference indicates that labeled ID data benefits OOD detection more substantially, and vice versa. The lower bound on $\mathcal{G}$ is given by the following theorem.

\vspace{0.2cm}
\begin{theorem}[Lower bound of the linear probing error difference w/ and w/o ID labels]\label{MainT-2} (Informal.) Suppose we have  adjacency matrices $\tilde{\*A}^{(u)}, \tilde{\*A}^{(l)}\in \mathbb{R}^{N \times N}$ and $\tilde{\*A}_{\rm OI}^{(u)}, \tilde{\*A}_{\rm OI}^{(l)} \in \mathbb{R}^{M \times N}$  for both the labeled and unlabeled cases. Under mild conditions, given positive constants $\phi_l, C$, the error difference $ \mathcal{G}$  in Equation~\ref{eq:error_diff} is lower bounded by
\begin{equation}
\begin{aligned}
\mathcal{G} &\geq      \frac{C   \phi_l}{N+M} \epsilon(\mathfrak{p}, \mathfrak{q},\tilde{\*A}^{(u)},\tilde{\*A}_{\rm OI}^{(u)} ), 
\label{main_error_bound}
\end{aligned}
\end{equation}
where $\mathfrak{q} \in \mathbb{R}^{N\times c}$ with each column defined as $(\mathfrak{q}_i)_\*x = \mathbb{E}_{\Bar{\*x}_{l} \sim {\mathbb{P}_{l_i}}} \mathcal{T}(\*x | \Bar{\*x}_{l}), \*x \in \mathcal{X}_{\rm id}$. Similarly, $\mathfrak{p}\in \mathbb{R}^{M\times c}$ is defined as $(\mathfrak{p}_i)_\*x = \mathbb{E}_{\Bar{\*x}_{l} \sim {\mathbb{P}_{l_i}}} \mathcal{T}(\*x | \Bar{\*x}_{l}), \*x \in \mathcal{X}_{\rm ood}$. Semantically, each entry in $\mathfrak{q}$ means the connection magnitude from $\*x$ to all ID data while each entry in $\mathfrak{p}$ is the connection from $\*x$ to OOD data. Furthermore,
\begin{equation*}
\begin{aligned}
  &  \epsilon(\mathfrak{p}, \mathfrak{q},\tilde{\*A}^{(u)},\tilde{\*A}_{\rm OI}^{(u)} )  =  2 \sum_{i=1}^{c} \operatorname{Tr}\left(\mathfrak{p}_i\mathfrak{q}_i^\top \cdot   \tilde{\*A}_{\rm OI}^{(u)\top}\right)  +  \\ & \left(1- \|\tilde{\*A}_{\rm OI}^{(u)} \|_F^2 \|\tilde{\*A}^{(u)}\|_F^2  \right)  \sum_{i=1}^c \|\mathfrak{q}_i\|_F^2 + \\ & 
           r^2\left( \|\tilde{\*A}_{\rm OI}^{(u)}\|_F^2 \| \tilde{\*A}^{(u)}\|_F^2  \frac{ 2(\tau- k)}{\tau-1} -2\right)   \cdot \sum_{i=1}^c \|\mathfrak{q}_{i}\|_1,
\end{aligned}
           \label{epsilon_def_main}
\end{equation*}
where $\tau$ is a constant that measures the $k$-th spectral gap of matrix $\tilde{\*A}^{(u)}$, i.e., $\boldsymbol{\lambda}_{k}^{(u)} \geq \tau \boldsymbol{\lambda}_{k+1}^{(u)}$ and $\boldsymbol{\lambda}_{k}^{(u)} $ is the $k$-th largest singular value of $\tilde{\*A}^{(u)}$. $r$ is the maximum $l_2$ norm of the ID representations, i.e.,  $r = \max_{\*z \in \*Z^{(u)}} \|\*z\|_2$.
\end{theorem}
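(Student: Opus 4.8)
The plan is to turn the tractable bound $\overline{R}$ of Lemma~\ref{lemma2_main} into a subspace-projection residual and then compare the two cases. Since $\*Z_{\rm all}\*Z_{\rm all}^{\dagger}$ is the orthogonal projector onto $\operatorname{col}(\*Z_{\rm all})$ --- write it $P^{(u)}$ or $P^{(l)}$ according to the case --- symmetry and idempotence of projectors give $\overline{R}(\*Z_{\rm all}) = \frac{2}{N+M}\operatorname{Tr}\!\big((\mathbf{I}-P)\mathbf{y}\mathbf{y}^{\top}\big) = \frac{2}{N+M}\big(\|\mathbf{y}\|_F^2-\|P\mathbf{y}\|_F^2\big)$, so that
\begin{equation*}
\mathcal{G}\;=\;\frac{2}{N+M}\Big(\big\|P^{(l)}\mathbf{y}\big\|_F^2-\big\|P^{(u)}\mathbf{y}\big\|_F^2\Big).
\end{equation*}
As $\operatorname{col}(\mathbf{y})=\operatorname{span}\{\mathbf{1}_{\rm id},\mathbf{1}_{\rm ood}\}$, it is enough to (a) lower-bound $\|P^{(l)}\mathbf{y}\|_F^2$ and (b) upper-bound $\|P^{(u)}\mathbf{y}\|_F^2$, both expressed in unlabeled-side quantities, and then take the difference.

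For step (a) I would use the closed forms in Equations~\ref{labeled_id_feat} and~\ref{labeled_ood_feat}. Multiplying $\*Z_{\rm all}^{(l)}$ (the row-wise stack of $\*Z^{(l)}$ and $\*Z_{\rm ood}^{(l)}$) on the right by the invertible diagonal matrix $[\*\Sigma_k^{(l)}]^{1/2}$, which does not change the column span, shows $\operatorname{col}(\*Z_{\rm all}^{(l)})$ is the column span of the stack of $[\*D^{(l)}]^{-1/2}\*V_k^{(l)}\*\Sigma_k^{(l)}$ and $[\*D_{\rm ood}^{(l)}]^{-1/2}\tilde{\*A}_{\rm OI}^{(l)}\*V_k^{(l)}$. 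The structural fact I would exploit, read off Definition~\ref{Def5}, is that the supervised contribution to $\*A^{(l)}$ on the ID$\times$ID block is exactly the rank-$c$ matrix $\phi_l\sum_{i=1}^c\mathfrak{q}_i\mathfrak{q}_i^{\top}$; after normalization, the supervised part of $\tilde{\*A}^{(l)}$ points along the $c$ degree-renormalized class directions $[\*D^{(l)}]^{-1/2}\mathfrak{q}_i$. Provided $\phi_l$ is above the threshold set by the $k$-th spectral gap, these directions enter the top-$k$ eigenspace $\*V_k^{(l)}$, up to a perturbation that Weyl's inequality plus the Eckart--Young optimality behind Lemma~\ref{lemma1_main} control by the discarded tail of the spectrum; propagating this through the OOD block $\propto\tilde{\*A}_{\rm OI}^{(l)}\*V_k^{(l)}$ and re-expressing $\tilde{\*A}_{\rm OI}^{(l)}$, $\*D^{(l)}$, $\*D_{\rm ood}^{(l)}$, $\*V_k^{(l)}$ through their unlabeled counterparts plus the class term then yields a lower bound on $\|P^{(l)}\mathbf{y}\|_F^2$ in which the $\phi_l$-weighted ``signal'' is $2\sum_i\operatorname{Tr}(\mathfrak{p}_i\mathfrak{q}_i^{\top}\tilde{\*A}_{\rm OI}^{(u)\top})+\sum_i\|\mathfrak{q}_i\|_F^2$ and the error terms carry $\frac{2(\tau-k)}{\tau-1}$, the norms $\|\tilde{\*A}^{(u)}\|_F$, $\|\tilde{\*A}_{\rm OI}^{(u)}\|_F$ and $r=\max_{\*z\in\*Z^{(u)}}\|\*z\|_2$.

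For step (b) I would expand the two indicator columns of $\mathbf{y}$ in the eigenbasis $\*V_k^{(u)}$ of $\tilde{\*A}^{(u)}$, bound each coordinate $\langle\*v_j^{(u)},\mathbf{1}\rangle$ via $\|\tilde{\*A}^{(u)}\|_F$, $\|\tilde{\*A}_{\rm OI}^{(u)}\|_F$ and ratios of singular values, and invoke $\boldsymbol{\lambda}_k^{(u)}\ge\tau\,\boldsymbol{\lambda}_{k+1}^{(u)}$ to convert those ratios into $\tau$; the $\ell_1$ norms $\|\mathfrak{q}_i\|_1$ enter when passing from the augmentation-smeared class profiles $\mathfrak{q}_i$ to the plain $\{0,1\}$ indicators. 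Subtracting the step-(b) upper bound from the step-(a) lower bound, and folding the absolute constants into $C$, leaves exactly $\mathcal{G}\ge\frac{C\phi_l}{N+M}\,\epsilon(\mathfrak{p},\mathfrak{q},\tilde{\*A}^{(u)},\tilde{\*A}_{\rm OI}^{(u)})$ with the three groups of terms displayed in the statement (the first being the ID--OOD alignment the labels reveal, the second the net class-direction contribution, the third the spectral-gap/tail penalty).

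The main obstacle is the perturbation bookkeeping in step (a): the labeled features entangle the rank-$c$ class perturbation with three distinct degree renormalizations --- $\*D^{(l)}$ on the ID block, $\*D_{\rm ood}^{(l)}$ on the OOD block, and the $\*D^{(u)}$ hidden in the unlabeled reference --- and one must show the ID--OOD cross term $\operatorname{Tr}(\mathfrak{p}_i\mathfrak{q}_i^{\top}\tilde{\*A}_{\rm OI}^{(u)\top})$ survives with the correct sign while every residual remains a function of unlabeled-side quantities alone. This relies on the spectral-gap hypothesis (so the class directions genuinely enter the top-$k$ subspace rather than being averaged away) and on repeated use of $\|\*A\*B\|_F\le\|\*A\|_F\|\*B\|_F$ to separate factors --- which is precisely where the product $\|\tilde{\*A}_{\rm OI}^{(u)}\|_F^2\|\tilde{\*A}^{(u)}\|_F^2$ in $\epsilon$ comes from.
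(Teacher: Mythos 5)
Your high-level reformulation, $\mathcal{G}=\frac{2}{N+M}\big(\|P^{(l)}\*y\|_F^2-\|P^{(u)}\*y\|_F^2\big)$ with $P=\*Z_{\rm all}\*Z_{\rm all}^\dagger$, is correct. But the route you sketch for bounding this difference cannot produce the bound's actual shape, and the paper's proof is structured very differently. The bound in Theorem~\ref{MainT-2} is \emph{linear} in $\phi_l$: $\mathcal{G}\geq\frac{C\phi_l}{N+M}\epsilon(\cdots)$. That form is the hallmark of a first-order perturbation argument, and that is exactly what the paper does: it writes every labeled-side object (adjacency matrices, degree matrices, eigenprojectors, representations) as a smooth function of the perturbation magnitude $\phi_l$ with $\tilde{\*A}^{(l)}=\tilde{\*A}(\phi_l)$, $\tilde{\*A}^{(u)}=\tilde{\*A}(0)$, sets $\mathcal{G}(\phi_l)\approx-\bigl.\frac{\mathrm{d}\overline{R}(\*Z_{\rm all}(\phi_l))}{\mathrm{d}\phi_l}\bigr|_{\phi_l=0}\cdot\phi_l$, and then computes the derivative of $\operatorname{Tr}\big(\*Z_{\rm all}(\phi_l)\*Z_{\rm all}(\phi_l)^\dagger\,\*y\*y^\top\big)$ at $\phi_l=0$ via pseudo-inverse differentiation (Lemma~\ref{lemma3_app}) and explicit first-order eigenvalue and eigenprojector derivative formulas (Lemmas~\ref{lemma5_app} and~\ref{lemma6_app}, drawing on Greenbaum et al.~2020). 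Those two lemmas bound the ID-block and OOD-block contributions and are precisely where the three term groups in $\epsilon$ --- the cross term $2\sum_i\operatorname{Tr}(\mathfrak{p}_i\mathfrak{q}_i^\top\tilde{\*A}_{\rm OI}^{(u)\top})$, the $\sum_i\|\mathfrak{q}_i\|_F^2$ term, and the $\sum_i\|\mathfrak{q}_i\|_1$ term with the coefficient $\frac{2(\tau-k)}{\tau-1}$ --- come from.

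The genuine gap is in your step~(a): you appeal to a threshold (``provided $\phi_l$ is above the threshold set by the $k$-th spectral gap, these directions enter the top-$k$ eigenspace $\*V_k^{(l)}$'') controlled by Weyl's inequality and Eckart--Young. A Davis--Kahan/Weyl argument of that kind yields a bound that saturates once $\phi_l$ passes the threshold, not one that scales proportionally to $\phi_l$ near zero; indeed both $\|P^{(l)}\*y\|_F^2$ and $\|P^{(u)}\*y\|_F^2$ are bounded above by $\|\*y\|_F^2$, so a linear-in-$\phi_l$ lower bound can only be meaningful in the small-$\phi_l$ regime where a first-order Taylor expansion is the right tool. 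Your proposal never explains why the factor $\phi_l$ appears multiplicatively, nor does it show how the ``perturbation bookkeeping'' you rightly identify as the main obstacle would actually produce each of the three displayed term groups --- Weyl's inequality and Eckart--Young do not give you derivative formulas for eigenprojectors or for the pseudo-inverse $\*Z_{\rm all}^\dagger$. The missing idea is to \emph{differentiate} $\overline{R}$ with respect to $\phi_l$ at $0$ rather than compare subspaces directly, which is both where the linear $\phi_l$ factor and where the spectral-gap ratio $\tau$ (entering through $\frac{\boldsymbol{\lambda}_j^{(u)}}{\boldsymbol{\lambda}_j^{(u)}-\boldsymbol{\lambda}_i^{(u)}}\leq\frac{\tau}{\tau-1}$) come from.
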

Theorem~\ref{MainT-2} is a general characterization of the error difference in the labeled and unlabeled cases. To gain a better insight, we introduce
Theorem~\ref{MainT-3} which provides intuition interpretations. 

\begin{table*}[t]
    \centering
    \small
    \begin{tabular}{cc|ccccccc}
    \hline

   OOD category&     OOD dataset & ID labels & FPR95$\downarrow$ & AUROC$\uparrow$ & LP error$\downarrow$ &   FPR95$\downarrow$ & AUROC$\uparrow$ & LP error$\downarrow$\\
        \hline
       &   &  &\multicolumn{3}{c}{ $\mathbb{P}_{\rm ood}^{\rm test} =\mathbb{P}_{\rm ood}^{\rm LP} $}      &  \multicolumn{3}{c}{  $\mathbb{P}_{\rm ood}^{\rm test} \neq\mathbb{P}_{\rm ood}^{\rm LP} $}     \\
        \cline{4-9}
\multirow{10}{*}{\textsc{Far OOD}}      &  \multirow{2}{*}{\textsc{Svhn} }     & - & 0.09$^{\pm 0.02}$&99.96$^{\pm 0.01}$ &\textbf{0.02}$^{\pm 0.00}$   & 75.62$^{\pm 4.74}$ & 77.09$^{\pm 2.17}$ & 0.52$^{\pm 0.21}$ \\
  &   &+& \cellcolor[HTML]{EFEFEF}\textbf{0.07}$^{\pm 0.00}$&\cellcolor[HTML]{EFEFEF}\textbf{99.97}$^{\pm 0.03}$ & \cellcolor[HTML]{EFEFEF}\textbf{0.02}$^{\pm 0.00}$&   \cellcolor[HTML]{EFEFEF}\textbf{68.73}$^{\pm 5.03}$ & \cellcolor[HTML]{EFEFEF}\textbf{82.97}$^{\pm 1.87}$  & \cellcolor[HTML]{EFEFEF}\textbf{0.48}$^{\pm 0.28}$
    \\
   &    \multirow{2}{*}{\textsc{Textures} }     &-&  0.37$^{\pm 0.20}$&99.80$^{\pm 0.15}$& 0.02$^{\pm 0.00}$ &\textbf{86.35}$^{\pm 2.47}$ &   70.94$^{\pm 3.77}$ & 1.05$^{\pm 0.17}$ \\
&     & +& \cellcolor[HTML]{EFEFEF}\textbf{0.24}$^{\pm 0.19}$&\cellcolor[HTML]{EFEFEF}\textbf{99.86}$^{\pm 0.11}$ & \cellcolor[HTML]{EFEFEF}\textbf{0.01}$^{\pm 0.00}$ & \cellcolor[HTML]{EFEFEF}86.44$^{\pm 0.58}$ & \cellcolor[HTML]{EFEFEF}\textbf{75.15}$^{\pm 3.36}$ &  \cellcolor[HTML]{EFEFEF}\textbf{0.95}$^{\pm 0.10}$\\
  &     \multirow{2}{*}{\textsc{Places365} }     &- & 0.68$^{\pm 0.10}$&99.98$^{\pm 0.00}$ & \textbf{0.02}$^{\pm 0.01}$& 75.35$^{\pm 2.78}$   & 79.85$^{\pm 1.85}$ & 0.83$^{\pm 0.27}$\\
  &   &+&\cellcolor[HTML]{EFEFEF}\textbf{0.44}$^{\pm 0.06}$&\cellcolor[HTML]{EFEFEF}\textbf{99.99}$^{\pm 0.01}$ & \cellcolor[HTML]{EFEFEF}\textbf{0.02}$^{\pm 0.01}$&  \cellcolor[HTML]{EFEFEF}\textbf{66.60}$^{\pm 1.61}$ & \cellcolor[HTML]{EFEFEF}\textbf{85.03}$^{\pm 3.19}$ & \cellcolor[HTML]{EFEFEF}\textbf{0.74}$^{\pm 0.09}$\\ 
 &    \multirow{2}{*}{\textsc{Lsun-Resize} }     & - & \textbf{0.24}$^{\pm 0.03}$&\textbf{99.95}$^{\pm 0.03}$ &  \textbf{0.02}$^{\pm 0.00}$   &83.57$^{\pm 2.89}$ & 77.57$^{\pm 5.21}$& 0.84$^{\pm 0.21}$ \\
  &   &+& \cellcolor[HTML]{EFEFEF}\textbf{0.24}$^{\pm 0.01}$&\cellcolor[HTML]{EFEFEF}{99.91}$^{\pm 0.07}$&  \cellcolor[HTML]{EFEFEF}\textbf{0.02}$^{\pm 0.01}$&\cellcolor[HTML]{EFEFEF}\textbf{74.13}$^{\pm 4.95}$ & \cellcolor[HTML]{EFEFEF}\textbf{82.71}$^{\pm 1.64}$ & \cellcolor[HTML]{EFEFEF}\textbf{0.75}$^{\pm 0.14}$\\ 
   &    \multirow{2}{*}{\textsc{Lsun-C} }     & - & 1.68$^{\pm 0.36}$&99.20$^{\pm 0.17}$ &0.05$^{\pm 0.01}$ & 63.42$^{\pm 6.16}$ & 83.43$^{\pm 3.48}$ & 0.82$^{\pm 0.38}$ \\
  &   &+&  \cellcolor[HTML]{EFEFEF}\textbf{1.04}$^{\pm 0.41}$&\cellcolor[HTML]{EFEFEF}\textbf{99.35}$^{\pm 0.08}$ &  \cellcolor[HTML]{EFEFEF}\textbf{0.04}$^{\pm 0.02}$&\cellcolor[HTML]{EFEFEF}\textbf{51.36}$^{\pm 2.26}$ & \cellcolor[HTML]{EFEFEF}\textbf{89.49}$^{\pm 1.91}$  & \cellcolor[HTML]{EFEFEF}\textbf{0.72}$^{\pm 0.41}$\\
  \hline
\multirow{2}{*}{\textsc{Near OOD}}   &   \multirow{2}{*}{\textsc{Cifar10} }     & -& 62.20$^{\pm 3.49}$&85.93$^{\pm 1.72}$ &0.27$^{\pm 0.11}$ &94.54$^{\pm 0.79}$ & 55.42$^{\pm 1.64}$ & 0.97$^{\pm 0.18}$ \\
 &    &+& \cellcolor[HTML]{EFEFEF}\textbf{58.28}$^{\pm 2.90}$&\cellcolor[HTML]{EFEFEF}\textbf{89.01}$^{\pm 0.98}$ & \cellcolor[HTML]{EFEFEF}\textbf{0.19}$^{\pm 0.05}$&\cellcolor[HTML]{EFEFEF}\textbf{91.07}$^{\pm 3.28}$ & \cellcolor[HTML]{EFEFEF}\textbf{67.72}$^{\pm 0.97}$ & \cellcolor[HTML]{EFEFEF}\textbf{0.85}$^{\pm 0.32}$\\
    \hline
    \end{tabular}
    \caption{\small OOD detection results w/ and w/o ID labels (\textsc{Cifar100} as ID). Mean and std are estimated on three different runs. Better results are highlighted in bold. ``+,-" denotes the labeled and unlabeled case. ``LP error" denotes the error of linear probing. $\uparrow$ indicates larger values are better,
and $\downarrow$ indicates smaller values are better. The table shows that (1) the ID labeling information helps OOD detection, especially in the near OOD scenario and when  $\mathbb{P}_{\rm ood}^{\rm test} =\mathbb{P}_{\rm ood}^{\rm LP} $; Moreover, our observations can generalize to the case where the OOD distribution in linear probing is not the same as that in actual testing ($\mathbb{P}_{\rm ood}^{\rm test} \neq \mathbb{P}_{\rm ood}^{\rm LP} $), showcasing the generality of our theory.  }
    \label{exp_result}
\end{table*}

\begin{tcolorbox}
\begin{theorem}[Intuitive version of Theorem~\ref{MainT-2}]\label{MainT-3}
    Under the same conditions in Theorem~\ref{MainT-2}, assume the $k$-th spectral gap of $\tilde{\*A}^{(u)}$ is sufficiently large, i.e., $\tau>k$, then the main error component $ \epsilon(\mathfrak{p}, \mathfrak{q},\tilde{\*A}^{(u)},\tilde{\*A}_{\rm OI}^{(u)} )  $ in Equation~\ref{main_error_bound} satisfies
\begin{equation*}
\small
\begin{aligned}
     \epsilon(\mathfrak{p}, \mathfrak{q},&\tilde{\*A}^{(u)},\tilde{\*A}_{\rm OI}^{(u)} )  \geq \\& \left[1+ \|\tilde{\*A}_{\rm OI}^{(u)} \|_F^2\left( 2N^2-   \|\tilde{\*A}^{(u)}\|_F^2    \right)  \right] \|\mathfrak{q}\|_F^2.
\end{aligned}
\end{equation*}
\end{theorem}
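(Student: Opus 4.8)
The plan is to start from the general expression for $\epsilon(\mathfrak{p}, \mathfrak{q},\tilde{\*A}^{(u)},\tilde{\*A}_{\rm OI}^{(u)})$ given in Theorem~\ref{MainT-2} and simplify each of the three summands under the extra assumption $\tau > k$. Recall the three terms are (i) $2\sum_{i=1}^c \operatorname{Tr}(\mathfrak{p}_i\mathfrak{q}_i^\top \tilde{\*A}_{\rm OI}^{(u)\top})$, (ii) $(1-\|\tilde{\*A}_{\rm OI}^{(u)}\|_F^2\|\tilde{\*A}^{(u)}\|_F^2)\sum_{i=1}^c\|\mathfrak{q}_i\|_F^2$, and (iii) $r^2(\|\tilde{\*A}_{\rm OI}^{(u)}\|_F^2\|\tilde{\*A}^{(u)}\|_F^2 \frac{2(\tau-k)}{\tau-1} - 2)\sum_{i=1}^c\|\mathfrak{q}_i\|_1$. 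First I would handle term (i): since $\mathfrak{p}_i$, $\mathfrak{q}_i$ and $\tilde{\*A}_{\rm OI}^{(u)}$ all have nonnegative entries (they are built from augmentation probabilities and edge weights), each trace $\operatorname{Tr}(\mathfrak{p}_i\mathfrak{q}_i^\top \tilde{\*A}_{\rm OI}^{(u)\top}) = \sum_{\*x,\*x'}(\mathfrak{p}_i)_{\*x}(\mathfrak{q}_i)_{\*x'}(\tilde{\*A}_{\rm OI}^{(u)})_{\*x\*x'}$ is nonnegative, so term (i) $\geq 0$ and can be dropped from the lower bound. This is the cleanest step.

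The substantive work is in bounding terms (ii) and (iii). For term (iii), the factor $\frac{2(\tau-k)}{\tau-1}$ lies in $(0,2)$ when $\tau > k$ (and $\tau > 1$, which follows from the spectral-gap setup), so I would lower-bound $\frac{2(\tau-k)}{\tau-1}$ by $0$ — more precisely, observe that $\|\tilde{\*A}_{\rm OI}^{(u)}\|_F^2\|\tilde{\*A}^{(u)}\|_F^2\frac{2(\tau-k)}{\tau-1} - 2 \geq -2$, so term (iii) $\geq -2r^2\sum_{i=1}^c\|\mathfrak{q}_i\|_1$. Then I need to relate $r^2\|\mathfrak{q}_i\|_1$ back to $\|\mathfrak{q}_i\|_F^2$ and absorb it into term (ii). Here I would invoke the mild assumptions in Appendix~\ref{notation,definition,Ass,Const} — presumably a bound of the form $r^2 \leq c_0/N^2$ or a normalization making $r$ small relative to $N$, together with $\|\mathfrak{q}_i\|_1 \leq $ (something like $N\|\mathfrak{q}_i\|_F$ or a direct entrywise bound), so that $-2r^2\|\mathfrak{q}_i\|_1$ is dominated by a term proportional to $-\|\tilde{\*A}_{\rm OI}^{(u)}\|_F^2\|\mathfrak{q}_i\|_F^2$ or similar. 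Combining with term (ii), the $\|\tilde{\*A}^{(u)}\|_F^2$ contributions merge, and after summing over $i$ using $\sum_i\|\mathfrak{q}_i\|_F^2 = \|\mathfrak{q}\|_F^2$, one collects the coefficient $1 + \|\tilde{\*A}_{\rm OI}^{(u)}\|_F^2(2N^2 - \|\tilde{\*A}^{(u)}\|_F^2)$, where the $2N^2$ arises from the crude bound applied to the $-2r^2\sum\|\mathfrak{q}_i\|_1$ contribution (using the relation between the $\ell_1$ norm and $N$ together with the size bound on $r$).

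The main obstacle I anticipate is the bookkeeping in the second paragraph: getting the $2N^2$ coefficient exactly right requires carefully tracking how the assumed bounds on $r$ (maximum representation norm) and on the entries/norms of $\mathfrak{q}$ interact, and making sure the inequality $r^2\|\mathfrak{q}_i\|_1 \leq N^2\|\tilde{\*A}_{\rm OI}^{(u)}\|_F^2\|\mathfrak{q}_i\|_F^2$ (or whatever the precise form is) genuinely follows from the stated assumptions rather than being assumed ad hoc. I would also double-check that dropping term (i) and using the $-2$ floor on term (iii) are the intended simplifications — the statement's phrase "simplify the statements of our theorems" and "omit unimportant constants" suggests that the constant $C\phi_l/(N+M)$ prefactor and several lower-order terms are being swept into the informal presentation, so the proof should make explicit which exact assumptions from the appendix are doing the work and flag that the bound is a deliberately loosened consequence of Theorem~\ref{MainT-2}.
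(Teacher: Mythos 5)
There is a genuine gap: you have reversed the roles of the three summands, and as a result your plan cannot produce the stated bound. In the paper's argument, the condition $\tau>k$ (together with $N,M$ large) is used to show $\|\tilde{\*A}_{\rm OI}^{(u)}\|_F^2\,\|\tilde{\*A}^{(u)}\|_F^2 > \tfrac{\tau-1}{\tau-k}$, which makes your term (iii), i.e. $r^2\bigl(\|\tilde{\*A}_{\rm OI}^{(u)}\|_F^2\|\tilde{\*A}^{(u)}\|_F^2\tfrac{2(\tau-k)}{\tau-1}-2\bigr)\sum_i\|\mathfrak{q}_i\|_1$, \emph{nonnegative}; that is the term that gets dropped. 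The term you cannot afford to drop is (i): the cross term $2\sum_i\operatorname{Tr}(\mathfrak{p}_i\mathfrak{q}_i^\top\tilde{\*A}_{\rm OI}^{(u)\top})$ is precisely where the $2N^2\|\tilde{\*A}_{\rm OI}^{(u)}\|_F^2\|\mathfrak{q}\|_F^2$ contribution comes from. The paper evaluates it quantitatively using the structure of the augmentation graph (constant augmentation probabilities $\gamma$ within ID and $\kappa$ across ID/OOD), obtaining $\operatorname{Tr}(\mathfrak{p}_i\mathfrak{q}_i^\top\tilde{\*A}_{\rm OI}^{(u)\top}) = \gamma N^2\|\tilde{\*A}_{\rm OI}^{(u)}\|_F^2\|\mathfrak{q}_i\|_1 + O(\kappa)$ and then converting $\gamma N\|\mathfrak{q}_i\|_1$ into $\|\mathfrak{q}_i\|_F^2$ to produce the coefficient $2N^2\|\tilde{\*A}_{\rm OI}^{(u)}\|_F^2$. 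Discarding (i) as ``merely nonnegative'' leaves you with a coefficient $1-\|\tilde{\*A}_{\rm OI}^{(u)}\|_F^2\|\tilde{\*A}^{(u)}\|_F^2$, which is typically very negative and does not match the theorem.

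Your proposed repair for term (iii) also fails on sign grounds: absorbing the floor $-2r^2\sum_i\|\mathfrak{q}_i\|_1$ via an inequality of the form $r^2\|\mathfrak{q}_i\|_1 \le N^2\|\tilde{\*A}_{\rm OI}^{(u)}\|_F^2\|\mathfrak{q}_i\|_F^2$ can only yield a \emph{lower} bound of the form $-2N^2\|\tilde{\*A}_{\rm OI}^{(u)}\|_F^2\|\mathfrak{q}_i\|_F^2$ --- you would end up subtracting $2N^2\|\tilde{\*A}_{\rm OI}^{(u)}\|_F^2$ rather than adding it. No amount of loosening a negative term can generate the large positive term the theorem asserts. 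To fix the proof: (a) use $\tau>k$ to argue $\|\tilde{\*A}_{\rm OI}^{(u)}\|_F^2\|\tilde{\*A}^{(u)}\|_F^2>\tfrac{\tau-1}{\tau-k}$ and hence drop term (iii) as nonnegative; (b) keep term (i) and compute it via the explicit relation between $\tilde{\*A}_{\rm OI}^{(u)}$, $\mathfrak{p}$, and $\mathfrak{q}$ induced by the augmentation-graph construction, which is what delivers $2N^2\|\tilde{\*A}_{\rm OI}^{(u)}\|_F^2\sum_i\|\mathfrak{q}_i\|_F^2$ up to $O(\kappa)$.
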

\end{tcolorbox}
\paragraph{Interpretation and key insights.} Theorem~\ref{MainT-3} shows that the optimal scenarios for achieving the greatest reduction in linear probing error—signifying the most significant benefit by incorporating the ID labels, are when 
\begin{enumerate}
    \item {The ID data are connected sparsely in the unlabeled case (i.e., $ \|\tilde{\*A}^{(u)}\|_F^2 < 2N^2$), which always holds because $\|\tilde{\*A}^{(u)}\|_{\infty}< N$};
    \item {The OOD data is closely connected to the ID data (near OOD, i.e., $\|\tilde{\*A}_{\rm OI}^{(u)} \|_F^2$ is relatively large)}; 
    \item {The semantic connection between each ID data to the labeled ID data from different ID classes is sufficiently large (i.e., $\|\mathfrak{q}\|_F^2$ is large).}
\end{enumerate}

Moreover, the simplified bound also enables us to interpret the relationship of each key component with the error reduction $\mathcal{G}$. For example, 1) \textit{the bound will monotonically increase when the connection within ID data in the unlabeled case becomes sparser (i.e., $\|\tilde{\*A}^{(u)} \|_F^2$ $\downarrow$)}; 2) \textit{Since $\|\tilde{\*A}^{(u)} \|_F^2$ is smaller than $2N^2$, strengthening the semantic connection from each ID data to the labeled data from different ID classes (i.e., $\|\mathfrak{q}\|_F^2$ $\uparrow$) is always helpful. Intuitively, a larger $\|\mathfrak{q}\|_F^2$ means one ID data point is more likely to be augmented from another ID data;}
3) \textit{If the ID and OOD data are closer ($\|\tilde{\*A}_{\rm OI}^{(u)} \|_F^2$ $\uparrow$), the value of the bound will increase given the same $\mathfrak{q}$ and $\tilde{\*A}^{(u)}$.}  

\paragraph{Verification of bound on the illustrative example.} Our theoretical guarantee aligns well with the empirical results. For example, in Figure~\ref{fig:toy_example} (e) where the OOD data is densely connected to the ID data (near OOD case),  the ID labels can better shape the ID and OOD representations compared to the unlabeled case, rendering them linearly separable. As a result, the linear probing error is reduced from 0.09 in the unlabeled case to 0 in the labeled case.  In contrast, when the OOD data is far from the ID data (Figure~\ref{fig:toy_example} (f)), the representations for ID and OOD are already well separated in the unlabeled case, and thus the benefit of ID labels is relatively marginal. As a verification, the linear probing error remains 0 both with and without ID labels. Therefore, these observations align with our key insight on the effect of OOD-ID connection $\tilde{\*A}_{\rm OI}^{(u)}$ on the error difference for linear probing. Moreover, we provide additional visualization results on changing the Frobenius norm of the ID adjacency matrix $\tilde{\*A}^{(u)}$ and the semantic connection $\mathfrak{q}$ in Appendix~\ref{result_toy_example_app}. 

\section{Experiments on Real Datasets}
\label{exp_main} 
In this section, we verify our theoretical results using real-world OOD detection benchmarks.

\paragraph{Experimental setup.} For ID datasets, we use \textsc{Cifar10} and \textsc{Cifar100}~\cite{krizhevsky2009learning}. We first train the neural network on the ID data for 200 epochs with a ResNet-18~\cite{he2016deep}, using objective $\mathcal{L}_\text{unlabeled}$ and $\mathcal{L}_\text{labeled}$ for the unlabeled and labeled case, respectively. The penultimate layer embedding dimension $k=512$. We then extract the embeddings for the ID and OOD data and perform linear probing (50 epochs). We explore two different scenarios depending on whether the OOD in linear probing ($\mathbb{P}_{\rm ood}^{\rm LP}$) is the same as the test OOD ($\mathbb{P}_{\rm ood}^{\rm test}$). In the first scenario ($\mathbb{P}_{\rm ood}^{\rm test} =\mathbb{P}_{\rm ood}^{\rm LP} $), we use 75\% of the OOD dataset for linear probing and the remaining for testing. Specifically, for far OOD test datasets, we use a suite of natural image datasets including \textsc{Textures}~\citep{cimpoi2014describing}, \textsc{Svhn}~\citep{netzer2011reading}, \textsc{Places365}~\citep{zhou2017places}, and \textsc{Lsun}~\citep{DBLP:journals/corr/YuZSSX15}. For near OOD, we evaluate on \textsc{Cifar10} when \textsc{Cifar100} is ID and vice versa. In the second scenario ($\mathbb{P}_{\rm ood}^{\rm test} \neq \mathbb{P}_{\rm ood}^{\rm LP} $), we use \textsc{300K Random Images}~\cite{hendrycks2018deep} for linear probing and the other OOD datasets for evaluation. More experiment details are in Appendix~\ref{add_experiment_detail}.

\paragraph{Evaluation metrics.} We report the following metrics: (1) the false positive rate (FPR95$\downarrow$) of OOD samples when the true positive rate of ID samples is 95\%, (2) the area under the receiver operating characteristic curve (AUROC$\uparrow$), and (3) the linear probing error (LP error $\downarrow$).

\paragraph{Experiment results.} The results are shown in Table~\ref{exp_result}, which demonstrate that: (1) the ID labeling information helps OOD detection, especially in the near OOD scenario and when  $\mathbb{P}_{\rm ood}^{\rm test} =\mathbb{P}_{\rm ood}^{\rm LP} $. For example, the AUROC is improved by 3.08\% on \textsc{Cifar10} compared to 0.02\% on \textsc{Svhn}, echoing our theoretical insights; (2) Our observations can generalize to the case where the OOD distribution in linear probing is not the same as that in actual testing ($\mathbb{P}_{\rm ood}^{\rm test} \neq \mathbb{P}_{\rm ood}^{\rm LP} $), where the AUROC increases by 12.3\% compared to the unsupervised counterpart on \textsc{Cifar10}, showcasing the flexibility and generality of our theory. Additional results on \textsc{Cifar10} as the ID dataset and the evaluation using post-hoc OOD detection score are shown in Appendix~\ref{add_result_c10_app}.

\paragraph{Verification of bound.} We verify the error difference $\mathcal{G}$ and its relationship to the Frobenius norm of the adjacency matrices $\tilde{\*A}^{(u)}$ and $\tilde{\*A}_{\rm OI}^{(u)}$. Firstly, to verify how the value of $\mathcal{G}$ will change given a larger Frobenius norm of $\tilde{\*A}_{\rm OI}^{(u)}$, we compare the linear probing error on \textsc{Svhn} and \textsc{Cifar10} in Table~\ref{tab:veri_a_ood_id} with and without ID labels, where the error difference on \textsc{Cifar10} (near OOD, with larger $\|\tilde{\*A}_{\rm OI}^{(u)}\|_F$) is consistently larger than that on \textsc{Svhn} (far OOD). 

\begin{wraptable}{r}{0.55\linewidth}
    \centering
     \tabcolsep 0.04in\renewcommand\arraystretch{0.745}{\small {}}%
    \small
    \begin{tabular}{c|cc}
    \hline 
    OOD dataset & \multicolumn{1}{c}{\textsc{Svhn}}  & \multicolumn{1}{c}{\textsc{C10}}\\
   \cline{2-3}
   & {\scriptsize \textsc{Far OOD}}& {\scriptsize\textsc{Near OOD}}\\
   \hline
 $\| \tilde{\*A}_{\rm OI}^{(u)} \|_F$  $\uparrow$  & $2334$& $2739$ \\
$\mathcal{G}$ $\uparrow$ & 0.00 & \textbf{0.08} \\
       \hline
    \end{tabular}
    \caption{\small Verification with different $\| \tilde{\*A}_{\rm OI}^{(u)} \|_F$ (\textsc{Cifar100} as ID).}
    \label{tab:veri_a_ood_id}
\end{wraptable} In addition, we verify the relationship of $\|\tilde{\*A}^{(u)}\|_F$ and the error difference (\textsc{Cifar10} as OOD) in Table~\ref{tab:veri_a_id_id}. Specifically, we calculate the norm of the ID adjacency matrix from different training epochs and observe that the difference in linear probing error tends to increase with decreasing $\|\tilde{\*A}^{(u)}\|_F$, which aligns with Theorem~\ref{MainT-3}. Additional results and details are included in Appendix~\ref{sec:verification_discrepancy}. We further analyze the tightness of our bound in Appendix~\ref{sec:tightness_app}.

\begin{table}[t]
    \centering
    \tabcolsep 0.04in\renewcommand\arraystretch{0.8}{\small {}}%
    \small
    \begin{tabular}{c|ccccccc}
    \hline 
    Epochs  & 40 & 80 & 120  & 160& 200 & 240 \\
    \hline
      $\| \tilde{\*A}^{(u)} \|_F$ $\downarrow$   &20191 & 19549& 18939 & 16073& 15509 & 14810\\
      $\mathcal{G}$ $\uparrow$ & 0.01& 0.03 & 0.06 & 0.06& 0.08 &0.09\\
       \hline
    \end{tabular}
    \caption{\small Verification with different $\| \tilde{\*A}^{(u)} \|_F$ (\textsc{Cifar100} as ID).}
    \label{tab:veri_a_id_id}
\end{table}

\section{Related Work}
 \paragraph{OOD detection} has attracted a surge of interest in recent years~\citep{fort2021exploring,yang2021generalized,fang2022learnable,zhu2022boosting,ming2022delving,ming2022spurious,yang2022openood,wang2022outofdistribution,galil2023a,djurisic2023extremely,zheng2023out,wang2022watermarking,wang2023outofdistribution,narasimhan2023learning,yang2023auto,uppaal2023fine,zhu2023diversified,zhu2023unleashing,ming2023finetune,zhang2023openood,ghosal2024how}. One line of work performs OOD detection by devising scoring functions, including confidence-based methods~\citep{bendale2016towards,hendrycks2016baseline,liang2018enhancing}, energy-based score~\citep{liu2020energy,wang2021canmulti,wu2023energybased}, distance-based approaches~\citep{lee2018simple,tack2020csi,DBLP:journals/corr/abs-2106-09022,2021ssd,sun2022out, du2022siren, ming2023cider,ren2023outofdistribution}, gradient-based score~\citep{huang2021importance}, and {Bayesian approaches~\citep{gal2016dropout,lakshminarayanan2017simple,maddox2019simple,dpn19nips,Wen2020BatchEnsemble,kristiadi2020being}.} Another  line of work addressed OOD detection by training-time regularization~\citep{bevandic2018discriminative,malinin2018predictive,geifman2019selectivenet,hein2019relu,meinke2019towards,DBLP:conf/nips/JeongK20,liu2020simple,DBLP:conf/icml/AmersfoortSTG20,DBLP:conf/iccv/YangWFYZZ021,DBLP:conf/icml/WeiXCF0L22,du2022unknown,du2023dream,wang2023learning}.  For example, the model is
regularized to produce lower confidence~\citep{lee2018training} or higher energy~\citep{liu2020energy,du2022towards} on a set of clean OOD data~\cite{hendrycks2018deep,DBLP:conf/icml/MingFL22}, wild data~\cite{zhou2021step,katzsamuels2022training,he2023topological,bai2023feed, du2024sal} and synthetic outliers~\cite{du2023dream,tao2023nonparametric,park2023powerfulness}. Additionally, a similar topic in a different domain, i.e., anomaly detection, often restricts the ID
(normality) to be with a single class and with a different definition of the outliers~\cite{chandola2009anomaly,han2022adbench}. 

Most OOD detection methods rely on the supervision of ID labels, and there have been prior works, such as~\cite{tack2020csi,sehwag2021ssd,sun2022out} that empirically verified that training with the ID labels can achieve a much better OOD detection performance compared to the unsupervised version, but a provable analysis of their impact is critical yet missing in the field.

\paragraph{OOD detection theory.} Recent studies have begun to focus on the theoretical understanding of OOD detection. ~\citet{fang2022learnable} studied the generalization of
OOD detection by PAC learning and they found a necessary condition for the learnability of OOD
detection.~\citet{morteza2022provable} derived a novel OOD score and provided a provable understanding of the OOD detection result using that score.~\citet{du2024sal} theoretically studied the impact of unlabeled data for OOD detection. In contrast, we formally analyze the impact of ID labels on OOD detection, which has not been studied in the past.

\paragraph{Spectral graph theory} is a classical research problem~\cite{von2007tutorial,chung1997spectral,cheeger2015lower,kannan2004clusterings,lee2014multiway,mcsherry2001spectral}, which aims to partition the graph by studying the eigenspace of the adjacency matrix. Recently, it has been applied in different applications in machine learning~\cite{ng2001spectral,shi2000normalized,blum2001learning,zhu2003semi,argyriou2005combining,shaham2018spectralnet}.~\citet{haochen2021provable} derived the spectral contrastive learning from the factorization of the graph's adjacency matrix, and provably understand unsupervised domain adaptation~\cite{shen2022connect,haochen2022beyond}. \citet{sun2023graph,sun2023when} expanded the spectral contrastive learning approach to novel class discovery and open-world semi-supervised learning. Our focus is on the OOD detection problem, which differs from prior literature.

\section{Conclusion}
In this paper, we propose a novel analytical framework that studies the impact of ID labels on OOD detection. Our framework takes a graph-theoretic approach by modeling
the ID data via a graph, which allows us to characterize the feature representations by performing
spectral decomposition on the graph that can be expressed
equivalently as a contrastive learning objective on neural
net representation. Leveraging these representations, we establish a provable 
error bound that compares the OOD detection
performance with and without
ID labels, which reveals sufficient conditions for
achieving improved OOD detection performance. Empirical observations further support our theoretical conclusions, showcasing the benefits of ID labeling information under proper
conditions. We hope our work will inspire future research on the theoretical understanding of OOD detection. One promising direction is to analyze the setting where there is access to OOD samples, which belongs to an important branch of work in OOD detection literature.

\section*{Acknowledgement}
We thank Hyeong Kyu (Froilan) Choi and Shawn Im for their valuable suggestions on the draft. The authors would also like to thank
ICML anonymous reviewers for their helpful feedback. Du is supported by the Jane Street Graduate
Research Fellowship. Li gratefully acknowledges the support from the AFOSR
Young Investigator Program under award number FA9550-23-1-0184, National Science Foundation
(NSF) Award No. IIS-2237037 \& IIS-2331669, Office of Naval Research under grant number
N00014-23-1-2643, Philanthropic Fund from SFF, and faculty research awards/gifts from Google
and Meta.

\section*{Impact Statement}
  \label{sec:broader}
 Our paper aims to improve the reliability and safety of modern machine learning models.  From the theoretical perspective, our analysis can facilitate and deepen the understanding of the effect of in-distribution labels for OOD detection. In Appendix~\ref{sec:verification_discrepancy} and Section~\ref{exp_main} of the main paper, we properly verify the necessary conditions and the value of our bound using real-world datasets. Hence, we believe our theoretical framework has a broad utility and significance.

 From the practical side,  our study can lead to direct benefits and societal impacts by deploying OOD detection techniques, particularly when practitioners need to determine how to better leverage the in-distribution labels, such as in safety-critical applications
i.e., autonomous driving and healthcare data analysis.  Our study does not involve any human subjects or violation of legal compliance. We do not anticipate any potentially harmful consequences to our work. Through our study and releasing our code, we hope to raise stronger research and societal awareness on the safe handling of out-of-distribution data in real-world settings. 

\bibliography{citation}

\begin{thebibliography}{109}
\providecommand{\natexlab}[1]{#1}
\providecommand{\url}[1]{\texttt{#1}}
\expandafter\ifx\csname urlstyle\endcsname\relax
  \providecommand{\doi}[1]{doi: #1}\else
  \providecommand{\doi}{doi: \begingroup \urlstyle{rm}\Url}\fi

\bibitem[Ahmed \& Courville(2020)Ahmed and Courville]{ahmed2020detecting}
Ahmed, F. and Courville, A.
\newblock Detecting semantic anomalies.
\newblock In \emph{Proceedings of the AAAI Conference on Artificial Intelligence}, volume~34, pp.\  3154--3162, 2020.

\bibitem[Argyriou et~al.(2005)Argyriou, Herbster, and Pontil]{argyriou2005combining}
Argyriou, A., Herbster, M., and Pontil, M.
\newblock Combining graph laplacians for semi--supervised learning.
\newblock \emph{Advances in Neural Information Processing Systems}, 18, 2005.

\bibitem[Bai et~al.(2023)Bai, Canal, Du, Kwon, Nowak, and Li]{bai2023feed}
Bai, H., Canal, G., Du, X., Kwon, J., Nowak, R.~D., and Li, Y.
\newblock Feed two birds with one scone: Exploiting wild data for both out-of-distribution generalization and detection.
\newblock In \emph{International Conference on Machine Learning}, 2023.

\bibitem[Baksalary \& Puntanen(1992)Baksalary and Puntanen]{baksalary1992inequality}
Baksalary, J.~K. and Puntanen, S.
\newblock An inequality for the trace of matrix product.
\newblock \emph{IEEE transactions on automatic control}, 37\penalty0 (2):\penalty0 239--240, 1992.

\bibitem[Belkin \& Niyogi(2003)Belkin and Niyogi]{belkin2003laplacian}
Belkin, M. and Niyogi, P.
\newblock Laplacian eigenmaps for dimensionality reduction and data representation.
\newblock \emph{Neural computation}, 15\penalty0 (6):\penalty0 1373--1396, 2003.

\bibitem[Belkin et~al.(2006)Belkin, Niyogi, and Sindhwani]{belkin06a}
Belkin, M., Niyogi, P., and Sindhwani, V.
\newblock Manifold regularization: A geometric framework for learning from labeled and unlabeled examples.
\newblock \emph{Journal of Machine Learning Research}, 7\penalty0 (85):\penalty0 2399--2434, 2006.

\bibitem[Bendale \& Boult(2016)Bendale and Boult]{bendale2016towards}
Bendale, A. and Boult, T.~E.
\newblock Towards open set deep networks.
\newblock In \emph{Proceedings of the IEEE/CVF Conference on Computer Vision and Pattern Recognition}, pp.\  1563--1572, 2016.

\bibitem[Bengio et~al.(2003)Bengio, Paiement, Vincent, Delalleau, Roux, and Ouimet]{bengio2003out}
Bengio, Y., Paiement, J.-f., Vincent, P., Delalleau, O., Roux, N., and Ouimet, M.
\newblock Out-of-sample extensions for lle, isomap, mds, eigenmaps, and spectral clustering.
\newblock \emph{Advances in neural information processing systems}, 16, 2003.

\bibitem[Bevandi{\'c} et~al.(2018)Bevandi{\'c}, Kre{\v{s}}o, Or{\v{s}}i{\'c}, and {\v{S}}egvi{\'c}]{bevandic2018discriminative}
Bevandi{\'c}, P., Kre{\v{s}}o, I., Or{\v{s}}i{\'c}, M., and {\v{S}}egvi{\'c}, S.
\newblock Discriminative out-of-distribution detection for semantic segmentation.
\newblock \emph{arXiv preprint arXiv:1808.07703}, 2018.

\bibitem[Blum(2001)]{blum2001learning}
Blum, A.
\newblock Learning form labeled and unlabeled data using graph mincuts.
\newblock In \emph{Proc. 18th International Conference on Machine Learning, 2001}, 2001.

\bibitem[Chandola et~al.(2009)Chandola, Banerjee, and Kumar]{chandola2009anomaly}
Chandola, V., Banerjee, A., and Kumar, V.
\newblock Anomaly detection: A survey.
\newblock \emph{ACM computing surveys (CSUR)}, 41\penalty0 (3):\penalty0 1--58, 2009.

\bibitem[Cheeger(2015)]{cheeger2015lower}
Cheeger, J.
\newblock A lower bound for the smallest eigenvalue of the laplacian.
\newblock In \emph{Problems in analysis}, pp.\  195--200. Princeton University Press, 2015.

\bibitem[Chen et~al.(2020)Chen, Kornblith, Norouzi, and Hinton]{chen2020simple}
Chen, T., Kornblith, S., Norouzi, M., and Hinton, G.
\newblock A simple framework for contrastive learning of visual representations.
\newblock In \emph{International conference on machine learning}, pp.\  1597--1607. PMLR, 2020.

\bibitem[Chen \& He(2021)Chen and He]{chen2021exploring}
Chen, X. and He, K.
\newblock Exploring simple siamese representation learning.
\newblock In \emph{Proceedings of the IEEE/CVF conference on computer vision and pattern recognition}, pp.\  15750--15758, 2021.

\bibitem[Chung(1997)]{chung1997spectral}
Chung, F.~R.
\newblock \emph{Spectral graph theory}, volume~92.
\newblock American Mathematical Soc., 1997.

\bibitem[Cimpoi et~al.(2014)Cimpoi, Maji, Kokkinos, Mohamed, and Vedaldi]{cimpoi2014describing}
Cimpoi, M., Maji, S., Kokkinos, I., Mohamed, S., and Vedaldi, A.
\newblock Describing textures in the wild.
\newblock In \emph{Proceedings of the IEEE/CVF Conference on Computer Vision and Pattern Recognition}, pp.\  3606--3613, 2014.

\bibitem[Djurisic et~al.(2023)Djurisic, Bozanic, Ashok, and Liu]{djurisic2023extremely}
Djurisic, A., Bozanic, N., Ashok, A., and Liu, R.
\newblock Extremely simple activation shaping for out-of-distribution detection.
\newblock In \emph{International Conference on Learning Representations}, 2023.

\bibitem[Du et~al.(2022{\natexlab{a}})Du, Gozum, Ming, and Li]{du2022siren}
Du, X., Gozum, G., Ming, Y., and Li, Y.
\newblock Siren: Shaping representations for detecting out-of-distribution objects.
\newblock In \emph{Advances in Neural Information Processing Systems}, 2022{\natexlab{a}}.

\bibitem[Du et~al.(2022{\natexlab{b}})Du, Wang, Gozum, and Li]{du2022unknown}
Du, X., Wang, X., Gozum, G., and Li, Y.
\newblock Unknown-aware object detection: Learning what you don’t know from videos in the wild.
\newblock In \emph{Proceedings of the IEEE/CVF Conference on Computer Vision and Pattern Recognition}, 2022{\natexlab{b}}.

\bibitem[Du et~al.(2022{\natexlab{c}})Du, Wang, Cai, and Li]{du2022towards}
Du, X., Wang, Z., Cai, M., and Li, Y.
\newblock Vos: Learning what you don’t know by virtual outlier synthesis.
\newblock In \emph{Proceedings of the International Conference on Learning Representations}, 2022{\natexlab{c}}.

\bibitem[Du et~al.(2023)Du, Sun, Zhu, and Li]{du2023dream}
Du, X., Sun, Y., Zhu, X., and Li, Y.
\newblock Dream the impossible: Outlier imagination with diffusion models.
\newblock In \emph{Advances in Neural Information Processing Systems}, 2023.

\bibitem[Du et~al.(2024)Du, Fang, Diakonikolas, and Li]{du2024sal}
Du, X., Fang, Z., Diakonikolas, I., and Li, Y.
\newblock How does unlabeled data provably help out-of-distribution detection?
\newblock In \emph{Proceedings of the International Conference on Learning Representations}, 2024.

\bibitem[Eckart \& Young(1936)Eckart and Young]{eckart1936approximation}
Eckart, C. and Young, G.
\newblock The approximation of one matrix by another of lower rank.
\newblock \emph{Psychometrika}, 1\penalty0 (3):\penalty0 211--218, 1936.

\bibitem[Fang et~al.(1994)Fang, Loparo, and Feng]{362841}
Fang, Y., Loparo, K., and Feng, X.
\newblock Inequalities for the trace of matrix product.
\newblock \emph{IEEE Transactions on Automatic Control}, 39\penalty0 (12):\penalty0 2489--2490, 1994.

\bibitem[Fang et~al.(2022)Fang, Li, Lu, Dong, Han, and Liu]{fang2022learnable}
Fang, Z., Li, Y., Lu, J., Dong, J., Han, B., and Liu, F.
\newblock Is out-of-distribution detection learnable?
\newblock In \emph{Advances in Neural Information Processing Systems}, 2022.

\bibitem[Fort et~al.(2021)Fort, Ren, and Lakshminarayanan]{fort2021exploring}
Fort, S., Ren, J., and Lakshminarayanan, B.
\newblock Exploring the limits of out-of-distribution detection.
\newblock \emph{Advances in Neural Information Processing Systems}, 34:\penalty0 7068--7081, 2021.

\bibitem[Gal \& Ghahramani(2016)Gal and Ghahramani]{gal2016dropout}
Gal, Y. and Ghahramani, Z.
\newblock Dropout as a bayesian approximation: Representing model uncertainty in deep learning.
\newblock In \emph{Proceedings of the International Conference on Machine Learning}, pp.\  1050--1059, 2016.

\bibitem[Galil et~al.(2023)Galil, Dabbah, and El-Yaniv]{galil2023a}
Galil, I., Dabbah, M., and El-Yaniv, R.
\newblock A framework for benchmarking class-out-of-distribution detection and its application to imagenet.
\newblock In \emph{International Conference on Learning Representations}, 2023.

\bibitem[Geifman \& El-Yaniv(2019)Geifman and El-Yaniv]{geifman2019selectivenet}
Geifman, Y. and El-Yaniv, R.
\newblock Selectivenet: A deep neural network with an integrated reject option.
\newblock In \emph{Proceedings of the International Conference on Machine Learning}, pp.\  2151--2159, 2019.

\bibitem[Ghosal et~al.(2024)Ghosal, Sun, and Li]{ghosal2024how}
Ghosal, S.~S., Sun, Y., and Li, Y.
\newblock How to overcome curse-of-dimensionality for ood detection?
\newblock In \emph{Proceedings of the AAAI Conference on Artificial Intelligence}, 2024.

\bibitem[Greenbaum et~al.(2020)Greenbaum, Li, and Overton]{greenbaum2020first}
Greenbaum, A., Li, R.-c., and Overton, M.~L.
\newblock First-order perturbation theory for eigenvalues and eigenvectors.
\newblock \emph{SIAM review}, 62\penalty0 (2):\penalty0 463--482, 2020.

\bibitem[Han et~al.(2022)Han, Hu, Huang, Jiang, and Zhao]{han2022adbench}
Han, S., Hu, X., Huang, H., Jiang, M., and Zhao, Y.
\newblock Adbench: Anomaly detection benchmark.
\newblock \emph{Advances in Neural Information Processing Systems}, 35:\penalty0 32142--32159, 2022.

\bibitem[HaoChen et~al.(2021)HaoChen, Wei, Gaidon, and Ma]{haochen2021provable}
HaoChen, J.~Z., Wei, C., Gaidon, A., and Ma, T.
\newblock Provable guarantees for self-supervised deep learning with spectral contrastive loss.
\newblock \emph{Advances in Neural Information Processing Systems}, 34:\penalty0 5000--5011, 2021.

\bibitem[HaoChen et~al.(2022)HaoChen, Wei, Kumar, and Ma]{haochen2022beyond}
HaoChen, J.~Z., Wei, C., Kumar, A., and Ma, T.
\newblock Beyond separability: Analyzing the linear transferability of contrastive representations to related subpopulations.
\newblock \emph{Advances in Neural Information Processing Systems}, 2022.

\bibitem[He et~al.(2016)He, Zhang, Ren, and Sun]{he2016deep}
He, K., Zhang, X., Ren, S., and Sun, J.
\newblock Deep residual learning for image recognition.
\newblock In \emph{Proceedings of the IEEE conference on computer vision and pattern recognition}, pp.\  770--778, 2016.

\bibitem[He et~al.(2023)He, Li, Han, Yang, and Yin]{he2023topological}
He, R., Li, R., Han, Z., Yang, X., and Yin, Y.
\newblock Topological structure learning for weakly-supervised out-of-distribution detection.
\newblock In \emph{Proceedings of the 31st ACM International Conference on Multimedia}, pp.\  4858--4866, 2023.

\bibitem[Hein et~al.(2019)Hein, Andriushchenko, and Bitterwolf]{hein2019relu}
Hein, M., Andriushchenko, M., and Bitterwolf, J.
\newblock Why relu networks yield high-confidence predictions far away from the training data and how to mitigate the problem.
\newblock In \emph{Proceedings of the IEEE/CVF Conference on Computer Vision and Pattern Recognition}, pp.\  41--50, 2019.

\bibitem[Hendrycks \& Gimpel(2017)Hendrycks and Gimpel]{hendrycks2016baseline}
Hendrycks, D. and Gimpel, K.
\newblock A baseline for detecting misclassified and out-of-distribution examples in neural networks.
\newblock \emph{Proceedings of the International Conference on Learning Representations}, 2017.

\bibitem[Hendrycks et~al.(2019)Hendrycks, Mazeika, and Dietterich]{hendrycks2018deep}
Hendrycks, D., Mazeika, M., and Dietterich, T.
\newblock Deep anomaly detection with outlier exposure.
\newblock In \emph{Proceedings of the International Conference on Learning Representations}, 2019.

\bibitem[Huang et~al.(2021)Huang, Geng, and Li]{huang2021importance}
Huang, R., Geng, A., and Li, Y.
\newblock On the importance of gradients for detecting distributional shifts in the wild.
\newblock In \emph{Advances in Neural Information Processing Systems}, 2021.

\bibitem[Jansen et~al.(2015)Jansen, Sell, and Lyzinski]{jansen2015scalable}
Jansen, A., Sell, G., and Lyzinski, V.
\newblock Scalable out-of-sample extension of graph embeddings using deep neural networks.
\newblock \emph{arXiv preprint arXiv:1508.04422}, 2015.

\bibitem[Jeong \& Kim(2020)Jeong and Kim]{DBLP:conf/nips/JeongK20}
Jeong, T. and Kim, H.
\newblock Ood-maml: Meta-learning for few-shot out-of-distribution detection and classification.
\newblock \emph{Advances in Neural Information Processing Systems}, 33:\penalty0 3907--3916, 2020.

\bibitem[Kannan et~al.(2004)Kannan, Vempala, and Vetta]{kannan2004clusterings}
Kannan, R., Vempala, S., and Vetta, A.
\newblock On clusterings: Good, bad and spectral.
\newblock \emph{Journal of the ACM (JACM)}, 51\penalty0 (3):\penalty0 497--515, 2004.

\bibitem[Katz-Samuels et~al.(2022)Katz-Samuels, Nakhleh, Nowak, and Li]{katzsamuels2022training}
Katz-Samuels, J., Nakhleh, J., Nowak, R., and Li, Y.
\newblock Training ood detectors in their natural habitats.
\newblock In \emph{International Conference on Machine Learning}, 2022.

\bibitem[Kristiadi et~al.(2020)Kristiadi, Hein, and Hennig]{kristiadi2020being}
Kristiadi, A., Hein, M., and Hennig, P.
\newblock Being bayesian, even just a bit, fixes overconfidence in relu networks.
\newblock In \emph{International conference on machine learning}, pp.\  5436--5446, 2020.

\bibitem[Krizhevsky et~al.(2009)Krizhevsky, Hinton, et~al.]{krizhevsky2009learning}
Krizhevsky, A., Hinton, G., et~al.
\newblock Learning multiple layers of features from tiny images.
\newblock 2009.

\bibitem[Lakshminarayanan et~al.(2017)Lakshminarayanan, Pritzel, and Blundell]{lakshminarayanan2017simple}
Lakshminarayanan, B., Pritzel, A., and Blundell, C.
\newblock Simple and scalable predictive uncertainty estimation using deep ensembles.
\newblock In \emph{Advances in Neural Information Processing Systems}, volume~30, pp.\  6402--6413, 2017.

\bibitem[Lee et~al.(2014)Lee, Gharan, and Trevisan]{lee2014multiway}
Lee, J.~R., Gharan, S.~O., and Trevisan, L.
\newblock Multiway spectral partitioning and higher-order cheeger inequalities.
\newblock \emph{Journal of the ACM (JACM)}, 61\penalty0 (6):\penalty0 1--30, 2014.

\bibitem[Lee et~al.(2018{\natexlab{a}})Lee, Lee, Lee, and Shin]{lee2018training}
Lee, K., Lee, H., Lee, K., and Shin, J.
\newblock Training confidence-calibrated classifiers for detecting out-of-distribution samples.
\newblock In \emph{Proceedings of the International Conference on Learning Representations}, 2018{\natexlab{a}}.

\bibitem[Lee et~al.(2018{\natexlab{b}})Lee, Lee, Lee, and Shin]{lee2018simple}
Lee, K., Lee, K., Lee, H., and Shin, J.
\newblock A simple unified framework for detecting out-of-distribution samples and adversarial attacks.
\newblock \emph{Advances in Neural Information Processing Systems}, 31, 2018{\natexlab{b}}.

\bibitem[Levin et~al.(2018)Levin, Roosta, Mahoney, and Priebe]{levin2018out}
Levin, K., Roosta, F., Mahoney, M., and Priebe, C.
\newblock Out-of-sample extension of graph adjacency spectral embedding.
\newblock In \emph{International Conference on Machine Learning}, pp.\  2975--2984. PMLR, 2018.

\bibitem[Liang et~al.(2018)Liang, Li, and Srikant]{liang2018enhancing}
Liang, S., Li, Y., and Srikant, R.
\newblock Enhancing the reliability of out-of-distribution image detection in neural networks.
\newblock In \emph{Proceedings of the International Conference on Learning Representations}, 2018.

\bibitem[Liu et~al.(2020{\natexlab{a}})Liu, Lin, Padhy, Tran, Bedrax~Weiss, and Lakshminarayanan]{liu2020simple}
Liu, J., Lin, Z., Padhy, S., Tran, D., Bedrax~Weiss, T., and Lakshminarayanan, B.
\newblock Simple and principled uncertainty estimation with deterministic deep learning via distance awareness.
\newblock \emph{Advances in Neural Information Processing Systems}, 33:\penalty0 7498--7512, 2020{\natexlab{a}}.

\bibitem[Liu et~al.(2020{\natexlab{b}})Liu, Wang, Owens, and Li]{liu2020energy}
Liu, W., Wang, X., Owens, J., and Li, Y.
\newblock Energy-based out-of-distribution detection.
\newblock \emph{Advances in Neural Information Processing Systems}, 33:\penalty0 21464--21475, 2020{\natexlab{b}}.

\bibitem[Maddox et~al.(2019)Maddox, Izmailov, Garipov, Vetrov, and Wilson]{maddox2019simple}
Maddox, W.~J., Izmailov, P., Garipov, T., Vetrov, D.~P., and Wilson, A.~G.
\newblock A simple baseline for bayesian uncertainty in deep learning.
\newblock \emph{Advances in Neural Information Processing Systems}, 32:\penalty0 13153--13164, 2019.

\bibitem[Malinin \& Gales(2018)Malinin and Gales]{malinin2018predictive}
Malinin, A. and Gales, M.
\newblock Predictive uncertainty estimation via prior networks.
\newblock \emph{Advances in Neural Information Processing Systems}, 31, 2018.

\bibitem[Malinin \& Gales(2019)Malinin and Gales]{dpn19nips}
Malinin, A. and Gales, M.
\newblock Reverse kl-divergence training of prior networks: Improved uncertainty and adversarial robustness.
\newblock In \emph{Advances in Neural Information Processing Systems}, 2019.

\bibitem[McSherry(2001)]{mcsherry2001spectral}
McSherry, F.
\newblock Spectral partitioning of random graphs.
\newblock In \emph{Proceedings 42nd IEEE Symposium on Foundations of Computer Science}, pp.\  529--537. IEEE, 2001.

\bibitem[Meinke \& Hein(2020)Meinke and Hein]{meinke2019towards}
Meinke, A. and Hein, M.
\newblock Towards neural networks that provably know when they don't know.
\newblock In \emph{Proceedings of the International Conference on Learning Representations}, 2020.

\bibitem[Ming \& Li(2023)Ming and Li]{ming2023finetune}
Ming, Y. and Li, Y.
\newblock How does fine-tuning impact out-of-distribution detection for vision-language models?
\newblock \emph{International Journal of Computer Vision}, 2023.

\bibitem[Ming et~al.(2022{\natexlab{a}})Ming, Cai, Gu, Sun, Li, and Li]{ming2022delving}
Ming, Y., Cai, Z., Gu, J., Sun, Y., Li, W., and Li, Y.
\newblock Delving into out-of-distribution detection with vision-language representations.
\newblock In \emph{Advances in Neural Information Processing Systems}, 2022{\natexlab{a}}.

\bibitem[Ming et~al.(2022{\natexlab{b}})Ming, Fan, and Li]{DBLP:conf/icml/MingFL22}
Ming, Y., Fan, Y., and Li, Y.
\newblock {POEM:} out-of-distribution detection with posterior sampling.
\newblock In \emph{Proceedings of the International Conference on Machine Learning}, pp.\  15650--15665, 2022{\natexlab{b}}.

\bibitem[Ming et~al.(2022{\natexlab{c}})Ming, Yin, and Li]{ming2022spurious}
Ming, Y., Yin, H., and Li, Y.
\newblock On the impact of spurious correlation for out-of-distribution detection.
\newblock In \emph{Proceedings of the AAAI Conference on Artificial Intelligence}, 2022{\natexlab{c}}.

\bibitem[Ming et~al.(2023)Ming, Sun, Dia, and Li]{ming2023cider}
Ming, Y., Sun, Y., Dia, O., and Li, Y.
\newblock How to exploit hyperspherical embeddings for out-of-distribution detection?
\newblock In \emph{Proceedings of the International Conference on Learning Representations}, 2023.

\bibitem[Morteza \& Li(2022)Morteza and Li]{morteza2022provable}
Morteza, P. and Li, Y.
\newblock Provable guarantees for understanding out-of-distribution detection.
\newblock In \emph{Proceedings of the AAAI Conference on Artificial Intelligence}, volume~36, pp.\  7831--7840, 2022.

\bibitem[Narasimhan et~al.(2023)Narasimhan, Menon, Jitkrittum, and Kumar]{narasimhan2023learning}
Narasimhan, H., Menon, A.~K., Jitkrittum, W., and Kumar, S.
\newblock Learning to reject meets ood detection: Are all abstentions created equal?
\newblock \emph{arXiv preprint arXiv:2301.12386}, 2023.

\bibitem[Netzer et~al.(2011)Netzer, Wang, Coates, Bissacco, Wu, and Ng]{netzer2011reading}
Netzer, Y., Wang, T., Coates, A., Bissacco, A., Wu, B., and Ng, A.~Y.
\newblock Reading digits in natural images with unsupervised feature learning.
\newblock 2011.

\bibitem[Ng et~al.(2001)Ng, Jordan, and Weiss]{ng2001spectral}
Ng, A., Jordan, M., and Weiss, Y.
\newblock On spectral clustering: Analysis and an algorithm.
\newblock \emph{Advances in neural information processing systems}, 14, 2001.

\bibitem[Park et~al.(2023)Park, Mok, Jung, Lee, and Yoon]{park2023powerfulness}
Park, S., Mok, J., Jung, D., Lee, S., and Yoon, S.
\newblock On the powerfulness of textual outlier exposure for visual ood detection.
\newblock \emph{arXiv preprint arXiv:2310.16492}, 2023.

\bibitem[Quispe et~al.(2016)Quispe, Petitjean, and Heutte]{quispe2016extreme}
Quispe, A.~M., Petitjean, C., and Heutte, L.
\newblock Extreme learning machine for out-of-sample extension in laplacian eigenmaps.
\newblock \emph{Pattern Recognition Letters}, 74:\penalty0 68--73, 2016.

\bibitem[Ren et~al.(2021)Ren, Fort, Liu, Roy, Padhy, and Lakshminarayanan]{DBLP:journals/corr/abs-2106-09022}
Ren, J., Fort, S., Liu, J., Roy, A.~G., Padhy, S., and Lakshminarayanan, B.
\newblock A simple fix to mahalanobis distance for improving near-ood detection.
\newblock \emph{CoRR}, abs/2106.09022, 2021.

\bibitem[Ren et~al.(2023)Ren, Luo, Zhao, Krishna, Saleh, Lakshminarayanan, and Liu]{ren2023outofdistribution}
Ren, J., Luo, J., Zhao, Y., Krishna, K., Saleh, M., Lakshminarayanan, B., and Liu, P.~J.
\newblock Out-of-distribution detection and selective generation for conditional language models.
\newblock In \emph{International Conference on Learning Representations}, 2023.

\bibitem[Sehwag et~al.(2021{\natexlab{a}})Sehwag, Chiang, and Mittal]{2021ssd}
Sehwag, V., Chiang, M., and Mittal, P.
\newblock Ssd: A unified framework for self-supervised outlier detection.
\newblock In \emph{International Conference on Learning Representations}, 2021{\natexlab{a}}.

\bibitem[Sehwag et~al.(2021{\natexlab{b}})Sehwag, Chiang, and Mittal]{sehwag2021ssd}
Sehwag, V., Chiang, M., and Mittal, P.
\newblock Ssd: A unified framework for self-supervised outlier detection.
\newblock In \emph{Proceedings of the International Conference on Learning Representations}, 2021{\natexlab{b}}.

\bibitem[Shaham et~al.(2018)Shaham, Stanton, Li, Nadler, Basri, and Kluger]{shaham2018spectralnet}
Shaham, U., Stanton, K., Li, H., Nadler, B., Basri, R., and Kluger, Y.
\newblock Spectralnet: Spectral clustering using deep neural networks.
\newblock In \emph{6th International Conference on Learning Representations, ICLR 2018}, 2018.

\bibitem[Shen et~al.(2022)Shen, Jones, Kumar, Xie, HaoChen, Ma, and Liang]{shen2022connect}
Shen, K., Jones, R.~M., Kumar, A., Xie, S.~M., HaoChen, J.~Z., Ma, T., and Liang, P.
\newblock Connect, not collapse: Explaining contrastive learning for unsupervised domain adaptation.
\newblock In \emph{International Conference on Machine Learning}, pp.\  19847--19878, 2022.

\bibitem[Shi \& Malik(2000)Shi and Malik]{shi2000normalized}
Shi, J. and Malik, J.
\newblock Normalized cuts and image segmentation.
\newblock \emph{IEEE Transactions on pattern analysis and machine intelligence}, 22\penalty0 (8):\penalty0 888--905, 2000.

\bibitem[Sun et~al.(2022)Sun, Ming, Zhu, and Li]{sun2022out}
Sun, Y., Ming, Y., Zhu, X., and Li, Y.
\newblock Out-of-distribution detection with deep nearest neighbors.
\newblock In \emph{Proceedings of the International Conference on Machine Learning}, pp.\  20827--20840, 2022.

\bibitem[Sun et~al.(2023{\natexlab{a}})Sun, Shi, and Li]{sun2023graph}
Sun, Y., Shi, Z., and Li, Y.
\newblock A graph-theoretic framework for understanding open-world representation learning.
\newblock In \emph{Advances in Neural Information Processing Systems}, 2023{\natexlab{a}}.

\bibitem[Sun et~al.(2023{\natexlab{b}})Sun, Shi, Liang, and Li]{sun2023when}
Sun, Y., Shi, Z., Liang, Y., and Li, Y.
\newblock When and how does known class help discover unknown ones? provable understandings through spectral analysis.
\newblock In \emph{International Conference on Machine Learning}, 2023{\natexlab{b}}.

\bibitem[Sussman et~al.(2012)Sussman, Tang, Fishkind, and Priebe]{sussman2012consistent}
Sussman, D.~L., Tang, M., Fishkind, D.~E., and Priebe, C.~E.
\newblock A consistent adjacency spectral embedding for stochastic blockmodel graphs.
\newblock \emph{Journal of the American Statistical Association}, 107\penalty0 (499):\penalty0 1119--1128, 2012.

\bibitem[Tack et~al.(2020)Tack, Mo, Jeong, and Shin]{tack2020csi}
Tack, J., Mo, S., Jeong, J., and Shin, J.
\newblock Csi: Novelty detection via contrastive learning on distributionally shifted instances.
\newblock In \emph{Advances in Neural Information Processing Systems}, 2020.

\bibitem[Tang et~al.(2013)Tang, Sussman, and Priebe]{dcc78e4e-176f-32e9-8912-dd9d49a3f7a4}
Tang, M., Sussman, D.~L., and Priebe, C.~E.
\newblock Universally consistent vertex classification for latent positions graphs.
\newblock \emph{The Annals of Statistics}, 41\penalty0 (3):\penalty0 1406--1430, 2013.

\bibitem[Tao et~al.(2023)Tao, Du, Zhu, and Li]{tao2023nonparametric}
Tao, L., Du, X., Zhu, X., and Li, Y.
\newblock Non-parametric outlier synthesis.
\newblock In \emph{Proceedings of the International Conference on Learning Representations}, 2023.

\bibitem[Trosset \& Priebe(2008)Trosset and Priebe]{trosset2008out}
Trosset, M.~W. and Priebe, C.~E.
\newblock The out-of-sample problem for classical multidimensional scaling.
\newblock \emph{Computational statistics \& data analysis}, 52\penalty0 (10):\penalty0 4635--4642, 2008.

\bibitem[Uppaal et~al.(2023)Uppaal, Hu, and Li]{uppaal2023fine}
Uppaal, R., Hu, J., and Li, Y.
\newblock Is fine-tuning needed? pre-trained language models are near perfect for out-of-domain detection.
\newblock In \emph{Annual Meeting of the Association for Computational Linguistics}, 2023.

\bibitem[van Amersfoort et~al.(2020)van Amersfoort, Smith, Teh, and Gal]{DBLP:conf/icml/AmersfoortSTG20}
van Amersfoort, J., Smith, L., Teh, Y.~W., and Gal, Y.
\newblock Uncertainty estimation using a single deep deterministic neural network.
\newblock In \emph{Proceedings of the International Conference on Machine Learning}, pp.\  9690--9700, 2020.

\bibitem[Von~Luxburg(2007)]{von2007tutorial}
Von~Luxburg, U.
\newblock A tutorial on spectral clustering.
\newblock \emph{Statistics and computing}, 17\penalty0 (4):\penalty0 395--416, 2007.

\bibitem[Wang et~al.(2021)Wang, Liu, Bocchieri, and Li]{wang2021canmulti}
Wang, H., Liu, W., Bocchieri, A., and Li, Y.
\newblock Can multi-label classification networks know what they don't know?
\newblock \emph{Proceedings of the Advances in Neural Information Processing Systems}, 2021.

\bibitem[Wang et~al.(2022{\natexlab{a}})Wang, Liu, Zhang, Zhang, Gong, Liu, and Han]{wang2022watermarking}
Wang, Q., Liu, F., Zhang, Y., Zhang, J., Gong, C., Liu, T., and Han, B.
\newblock Watermarking for out-of-distribution detection.
\newblock \emph{Advances in Neural Information Processing Systems}, 35:\penalty0 15545--15557, 2022{\natexlab{a}}.

\bibitem[Wang et~al.(2023{\natexlab{a}})Wang, Fang, Zhang, Liu, Li, and Han]{wang2023learning}
Wang, Q., Fang, Z., Zhang, Y., Liu, F., Li, Y., and Han, B.
\newblock Learning to augment distributions for out-of-distribution detection.
\newblock \emph{Advances in Neural Information Processing Systems}, 2023{\natexlab{a}}.

\bibitem[Wang et~al.(2023{\natexlab{b}})Wang, Ye, Liu, Dai, Kalander, Liu, HAO, and Han]{wang2023outofdistribution}
Wang, Q., Ye, J., Liu, F., Dai, Q., Kalander, M., Liu, T., HAO, J., and Han, B.
\newblock Out-of-distribution detection with implicit outlier transformation.
\newblock In \emph{International Conference on Learning Representations}, 2023{\natexlab{b}}.

\bibitem[Wang et~al.(2022{\natexlab{b}})Wang, Zou, Lin, Ling, Pan, Yao, and Mei]{wang2022outofdistribution}
Wang, Y., Zou, J., Lin, J., Ling, Q., Pan, Y., Yao, T., and Mei, T.
\newblock Out-of-distribution detection via conditional kernel independence model.
\newblock In \emph{Advances in Neural Information Processing Systems}, 2022{\natexlab{b}}.

\bibitem[Wei et~al.(2022)Wei, Xie, Cheng, Feng, An, and Li]{DBLP:conf/icml/WeiXCF0L22}
Wei, H., Xie, R., Cheng, H., Feng, L., An, B., and Li, Y.
\newblock Mitigating neural network overconfidence with logit normalization.
\newblock In \emph{Proceedings of the International Conference on Machine Learning}, pp.\  23631--23644, 2022.

\bibitem[Wen et~al.(2020)Wen, Tran, and Ba]{Wen2020BatchEnsemble}
Wen, Y., Tran, D., and Ba, J.
\newblock Batchensemble: an alternative approach to efficient ensemble and lifelong learning.
\newblock In \emph{International Conference on Learning Representations}, 2020.

\bibitem[Wu et~al.(2023)Wu, Chen, Yang, and Yan]{wu2023energybased}
Wu, Q., Chen, Y., Yang, C., and Yan, J.
\newblock Energy-based out-of-distribution detection for graph neural networks.
\newblock In \emph{International Conference on Learning Representations}, 2023.

\bibitem[Yang et~al.(2021{\natexlab{a}})Yang, Wang, Feng, Yan, Zheng, Zhang, and Liu]{DBLP:conf/iccv/YangWFYZZ021}
Yang, J., Wang, H., Feng, L., Yan, X., Zheng, H., Zhang, W., and Liu, Z.
\newblock Semantically coherent out-of-distribution detection.
\newblock In \emph{Proceedings of the International Conference on Computer Vision}, pp.\  8281--8289, 2021{\natexlab{a}}.

\bibitem[Yang et~al.(2021{\natexlab{b}})Yang, Zhou, Li, and Liu]{yang2021generalized}
Yang, J., Zhou, K., Li, Y., and Liu, Z.
\newblock Generalized out-of-distribution detection: A survey.
\newblock \emph{arXiv preprint arXiv:2110.11334}, 2021{\natexlab{b}}.

\bibitem[Yang et~al.(2022)Yang, Wang, Zou, Zhou, Ding, PENG, Wang, Chen, Li, Sun, Du, Zhou, Zhang, Hendrycks, Li, and Liu]{yang2022openood}
Yang, J., Wang, P., Zou, D., Zhou, Z., Ding, K., PENG, W., Wang, H., Chen, G., Li, B., Sun, Y., Du, X., Zhou, K., Zhang, W., Hendrycks, D., Li, Y., and Liu, Z.
\newblock Open{OOD}: Benchmarking generalized out-of-distribution detection.
\newblock In \emph{Advances on Neural Information Processing Systems, Datasets and Benchmarks Track}, 2022.

\bibitem[Yang et~al.(2023)Yang, Liang, Cao, and He]{yang2023auto}
Yang, P., Liang, J., Cao, J., and He, R.
\newblock Auto: Adaptive outlier optimization for online test-time ood detection.
\newblock \emph{arXiv preprint arXiv:2303.12267}, 2023.

\bibitem[Yu et~al.(2015)Yu, Seff, Zhang, Song, Funkhouser, and Xiao]{DBLP:journals/corr/YuZSSX15}
Yu, F., Seff, A., Zhang, Y., Song, S., Funkhouser, T., and Xiao, J.
\newblock Lsun: Construction of a large-scale image dataset using deep learning with humans in the loop.
\newblock \emph{arXiv preprint arXiv:1506.03365}, 2015.

\bibitem[Zhang et~al.(2023)Zhang, Yang, Wang, Wang, Lin, Zhang, Sun, Du, Zhou, Zhang, et~al.]{zhang2023openood}
Zhang, J., Yang, J., Wang, P., Wang, H., Lin, Y., Zhang, H., Sun, Y., Du, X., Zhou, K., Zhang, W., et~al.
\newblock Openood v1. 5: Enhanced benchmark for out-of-distribution detection.
\newblock \emph{arXiv preprint arXiv:2306.09301}, 2023.

\bibitem[Zheng et~al.(2023)Zheng, Wang, Fang, Xia, Liu, Liu, and Han]{zheng2023out}
Zheng, H., Wang, Q., Fang, Z., Xia, X., Liu, F., Liu, T., and Han, B.
\newblock Out-of-distribution detection learning with unreliable out-of-distribution sources.
\newblock \emph{arXiv preprint arXiv:2311.03236}, 2023.

\bibitem[Zhou et~al.(2017)Zhou, Lapedriza, Khosla, Oliva, and Torralba]{zhou2017places}
Zhou, B., Lapedriza, A., Khosla, A., Oliva, A., and Torralba, A.
\newblock Places: A 10 million image database for scene recognition.
\newblock \emph{IEEE transactions on pattern analysis and machine intelligence}, 40\penalty0 (6):\penalty0 1452--1464, 2017.

\bibitem[Zhou et~al.(2021)Zhou, Guo, Cheng, Li, and Pu]{zhou2021step}
Zhou, Z., Guo, L.-Z., Cheng, Z., Li, Y.-F., and Pu, S.
\newblock {STEP}: Out-of-distribution detection in the presence of limited in-distribution labeled data.
\newblock In \emph{Advances in Neural Information Processing Systems}, 2021.

\bibitem[Zhu et~al.(2023{\natexlab{a}})Zhu, Li, Yao, Liu, Xu, and Han]{zhu2023unleashing}
Zhu, J., Li, H., Yao, J., Liu, T., Xu, J., and Han, B.
\newblock Unleashing mask: Explore the intrinsic out-of-distribution detection capability.
\newblock \emph{arXiv preprint arXiv:2306.03715}, 2023{\natexlab{a}}.

\bibitem[Zhu et~al.(2023{\natexlab{b}})Zhu, Yu, Yao, Liu, Niu, Sugiyama, and Han]{zhu2023diversified}
Zhu, J., Yu, G., Yao, J., Liu, T., Niu, G., Sugiyama, M., and Han, B.
\newblock Diversified outlier exposure for out-of-distribution detection via informative extrapolation.
\newblock \emph{arXiv preprint arXiv:2310.13923}, 2023{\natexlab{b}}.

\bibitem[Zhu et~al.(2003)Zhu, Ghahramani, and Lafferty]{zhu2003semi}
Zhu, X., Ghahramani, Z., and Lafferty, J.~D.
\newblock Semi-supervised learning using gaussian fields and harmonic functions.
\newblock In \emph{Proceedings of the 20th International conference on Machine learning (ICML-03)}, pp.\  912--919, 2003.

\bibitem[Zhu et~al.(2022)Zhu, Chen, Xie, Li, Zhang, Xue', Tian, bolun zheng, and Chen]{zhu2022boosting}
Zhu, Y., Chen, Y., Xie, C., Li, X., Zhang, R., Xue', H., Tian, X., bolun zheng, and Chen, Y.
\newblock Boosting out-of-distribution detection with typical features.
\newblock In \emph{Advances in Neural Information Processing Systems}, 2022.

\end{thebibliography}
\bibliographystyle{icml2024}

\newpage
\appendix
\onecolumn
\begin{center}
    \Large{\textbf{When and How Does In-Distribution Label Help Out-of-Distribution Detection?\\(Appendix)}}
\end{center}

\section{Notations, Assumptions and Important Constants}\label{notation,definition,Ass,Const}
Here we summarize the important notations and constants in Table~\ref{tab: notation}, and restate necessary definitions and assumptions in Section \ref{sec:assumption_app}. 
\subsection{Notations}
Please see Table \ref{tab: notation} for detailed notations.
\begin{table}[!h]
    \centering
    \vspace{-1em}
    \caption{Main notations and their descriptions.}
    \begin{tabular}{cl}
    \toprule[1.5pt]
         \multicolumn{1}{c}{Notation} & \multicolumn{1}{c}{Description} \\
    \midrule[1pt]
    \multicolumn{2}{c}{\cellcolor{greyC} Spaces} \\
    $\mathcal{X}$,  $\mathcal{Z}$, $\mathcal{Y}$     & the input, representation space and the label space. \\

    \multicolumn{2}{c}{\cellcolor{greyC} Distributions} \\
 $\mathbb{P}_{{\rm id}}$, $\mathbb{P}_{{\rm ood}}$& data distribution for ID data and OOD data
    \\
     
    $\mathbb{P}_{\mathcal{X}\mathcal{Y}}$ & the joint data distribution for ID data.\\
    
    \multicolumn{2}{c}{\cellcolor{greyC} Data and Models} \\
    $\*x$ & input to the neural network\\
      $\tilde{\*A}_{\rm OI}^{(u)}$ and   $\tilde{\*A}_{\rm OI}^{(l)}$ &  adjacency matrix between OOD and ID data in the labeled and unlabeled case\\
    $\*A^{(u)}$ and  $\*A^{(l)}$ & adjacency matrix for ID data in the labeled and unlabeled case \\
 $\tilde{\*A}^{(u)}$ and  $\tilde{\*A}^{(l)}$ & normalized adjacency matrix for ID data in the labeled and unlabeled case \\
    $\*D^{(u)}$ and $\*D^{(l)}$ &ID diagonal matrix with the diagonal elements being the row sum of $\*A^{(u)}$ and  $\*A^{(l)}$. \\
    $\*D^{(u)}_{\rm ood}$ and $\*D^{(l)}_{\rm ood}$ &OOD diagonal matrix with the diagonal elements being the row sum of $\tilde{\*A}_{\rm OI}^{(u)}$ and   $\tilde{\*A}_{\rm OI}^{(l)}$. \\
     $\mathcal{T}$ &augmentation graph\\
    $\mathbf{w}$, $\boldsymbol{\theta}$ & weight of the ID feature extractor and the linear probing layer\\
     $\mathbf{h}_{\mathbf{w}}$ and $\mathbf{g}_{\boldsymbol{\theta}}$ & feature extractor on ID data and linear probing layer for OOD detection \\
    $y$, $\*y$ &binary label for linear probing, vectorized one-hot label for $y$ \\
    $\*z$  & feature for single input $\*x$ \\
     $\*{Z}^{(u)}$ and  $\*{Z}^{(l)}$ & representation matrix for ID data in the labeled and unlabeled case \\
     $\*{Z}_{\rm ood}^{(u)}$ and  $\*{Z}_{\rm ood}^{(l)}$ & representation matrix for OOD data in the labeled and unlabeled case \\
      $\*{Z}_{\rm all}^{(u)}$ and  $\*{Z}_{\rm all}^{(l)}$ & representation matrix for both ID and OOD data in the labeled and unlabeled case \\
    $\mathfrak{q}$ and $\mathfrak{p}$ & semantic connection from each ID/OOD data to the labeled ID data.\\
      $N$, $M$ & size of $\*{Z}$, size of $\*{Z}_{\rm ood}$\\
          $\boldsymbol{\lambda}$ & eigenvalue vector of $\tilde{\*A}$\\
          $\*v_j$&the $j$-th eigenvector of $\tilde{\*A}$ \\
          $\*V_k, \*V_{\varnothing}$ & the first $k$ eigenvectors of $\tilde{\*A}$/the latter $N-k$ eigenvectors of $\tilde{\*A}$ (null space of $\*V_k$)\\
      
    \multicolumn{2}{c}{\cellcolor{greyC} Distances} \\

    $\| \cdot \|_1, \| \cdot \|_2, \| \cdot \|_F$ & $\ell_1, \ell_2$ norm and Frobenius norm\\
    
    \multicolumn{2}{c}{\cellcolor{greyC} Loss and Risk} \\
    $\mathcal{L}_{\rm unlabeled}(\cdot), \mathcal{L}_{\rm labeled}(\cdot)$ & ID loss function in the unlabeled and labeled case\\
    $R({\*Z_{\rm all}})$ & the empirical risk w.r.t. linear probing module over feature set $\*Z_{\rm all}$  \\
      $\overline{R}({\*Z_{\rm all}})$ & the upper bound of $R(\*Z_{\rm all})$ \\
      $\mathcal{G}$ & linear probing error difference with and without ID labels\\
        \multicolumn{2}{c}{\cellcolor{greyC} Constants} \\ 
        $\phi_u,\phi_l$  & weight coefficients for the unlabeled and labeled case.\\
  $k$ & dimension of the feature representation\\
         $\tau$ & the parameter that measures the eigengap of $\tilde{\*A}^{(u)}$\\
           $r$ & the maximum $\ell_2$ norm of the ID features, i.e.,  $r = \max_{\*z \in \*Z^{(u)}} \|\*z\|_2$  \\
          $C$& constants in Theorem~\ref{MainT-2-app}\\

    \bottomrule[1.5pt]
    \end{tabular}
    
    \label{tab: notation}
\end{table}

\subsection{Assumptions}
\label{sec:assumption_app}

\begin{assumption}[Property of ID adjacency matrix]\label{Ass1}
    We assume the adjacency matrix $\tilde{\*A}^{(u)}$ has the following property: There exists a positive constant $k \leq N$ such that the $k$-th eigengap of  $\tilde{\*A}^{(u)}$ satisfies that $\boldsymbol{\lambda}_{k}^{(u)} \geq \tau \boldsymbol{\lambda}_{k+1}^{(u)}, \tau > k$, where $\boldsymbol{\lambda}_{k}^{(u)}$ is the $k$-th largest eigenvalue of matrix $\tilde{\*A}^{(u)}$.
\end{assumption}
\begin{remark*} We have empirically verified our assumption using both simulated and real-world datasets in Section~\ref{sec:verification_discrepancy}.
\end{remark*}

\begin{assumption}[Property of vectors that depict the semantic connection between each ID/OOD data to the labeled ID data]\label{Ass2}

Similar to~\cite{sun2023graph}, we assume each row of the matrix $\mathfrak{q}$ lies in the linear span of $\*V_{k}^{(u)}$ and $\*V_{k}^{(u)}[\*\Sigma_{k}^{(u)}]^{-\frac{1}{2}}$, i.e., $\*V_{\varnothing}^{(u)\top} \mathfrak{q}_i = 0, \*V_{k}^{(u)}\*V_{k}^{(u)\top} \mathfrak{q}_i = \mathfrak{q}_i $ and $\*V_{k}^{(u)}[\*\Sigma_{k}^{(u)}]^{-1}\*V_{k}^{(u)\top} \mathfrak{q}_i = \mathfrak{q}_i$. 
\end{assumption}
\begin{remark*} Following the assumptions made in the existing literature~\cite{sun2023graph}, the assumption is used to simplify $\*V_{k}^{(u)} \*V_{k}^{(u) \top}   \mathfrak{q}_i \mathfrak{q}_i^{\top} \*V_{\varnothing}^{(u)} \*V_{\varnothing}^{(u)\top} + \*V_{k}^{(u)}\*V_{k}^{(u)\top} \mathfrak{q}_i \mathfrak{q}_i^\top $ to $ \mathfrak{q}_i \mathfrak{q}_i^\top $ and simplify $\*V_{k}^{(u)}[\*\Sigma_{k}^{(u)}]^{-1}\*V_{k}^{(u)\top} \mathfrak{q}_i$ to $\mathfrak{q}_i$.
\end{remark*}

\section{Main Theorems}
\label{main_theorems}
In this section, we provide a detailed and formal version of our main theorems with a complete description of the constant terms and other additional details that are omitted in the main paper.

\begin{theorem*}[Lower bound of the error difference between unlabeled and labeled cases, recap of Theorem~\ref{MainT-2} in the main paper]\label{MainT-2-app} 
Suppose we have  adjacency matrices $\tilde{\*A}^{(u)}, \tilde{\*A}^{(l)}\in \mathbb{R}^{N \times N}$ and $\tilde{\*A}_{\rm OI}^{(u)}, \tilde{\*A}_{\rm OI}^{(l)} \in \mathbb{R}^{M \times N}$  for both the labeled and unlabeled cases. If Assumption~\ref{Ass2} holds, given positive constants $\phi_l, C$, the error difference $ \mathcal{G}$  in Equation~\ref{eq:error_diff} is bounded by
\begin{equation}
\begin{aligned}
    &  \mathcal{G} \geq      \frac{C \phi_l}{N+M} \epsilon(\mathfrak{p}, \mathfrak{q},\tilde{\*A}^{(u)},\tilde{\*A}_{\rm OI}^{(u)} ) ,
\end{aligned}
\end{equation}
where $\mathfrak{q} \in \mathbb{R}^{c\times N}$ with each row being defined as $(\mathfrak{q}_i)_\*x = \mathbb{E}_{\Bar{\*x}_{l} \sim {\mathbb{P}_{l_i}}} \mathcal{T}(\*x | \Bar{\*x}_{l}), \*x \in \mathcal{X}_{\rm id}$. Similarly, we have $\mathfrak{p}\in \mathbb{R}^{c\times M}$ and $(\mathfrak{p}_i)_\*x = \mathbb{E}_{\Bar{\*x}_{l} \sim {\mathbb{P}_{l_i}}} \mathcal{T}(\*x | \Bar{\*x}_{l}), \*x \in \mathcal{X}_{\rm ood}$. Furthermore,
\begin{equation}
\begin{aligned}
    \epsilon(\mathfrak{p}, \mathfrak{q},\tilde{\*A}^{(u)},\tilde{\*A}_{\rm OI}^{(u)} )  &=  2 \sum_{i=1}^{c} \operatorname{Tr}\left(\mathfrak{p}_i\mathfrak{q}_i^\top \cdot   \tilde{\*A}_{\rm OI}^{(u)\top}\right)  +  \left(1- \|\tilde{\*A}_{\rm OI}^{(u)} \|_F^2 \|\tilde{\*A}^{(u)}\|_F^2  \right)  \sum_{i=1}^c \|\mathfrak{q}_i\|_F^2 +  \\&
           r^2\left( \|\tilde{\*A}_{\rm OI}^{(u)}\|_F^2 \| \tilde{\*A}^{(u)}\|_F^2  \frac{ 2(\tau- k)}{\tau-1} -2\right)   \cdot \sum_{i=1}^c \|\mathfrak{q}_{i}\|_1,
\end{aligned}
           \label{eq14_app}
\end{equation}
where $\tau$ is a constant that measures the $k$-th spectral gap of matrix $\tilde{\*A}^{(u)}$, i.e., $\boldsymbol{\lambda}_{k}^{(u)} \geq \tau \boldsymbol{\lambda}_{k+1}^{(u)}$ and $\boldsymbol{\lambda}_{k}^{(u)} $ is the $k$-th largest singular value of $\tilde{\*A}^{(u)}$. $r$ is the maximum $l_2$ norm of the ID representations, i.e.,  $r = \max_{\*z \in \*Z^{(u)}} \|\*z\|_2$.
\end{theorem*}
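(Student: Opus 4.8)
The plan is to work backwards from the linear-probing bound in Lemma~\ref{lemma2_main} and reduce the error difference $\mathcal{G}$ to a trace expression in the closed-form representations of Equations~\ref{labeled_id_feat}--\ref{labeled_ood_feat}. First I would fix the label matrix $\*y$ and write $\overline{R}(\*Z_{\rm all}) = \tfrac{2}{N+M}\operatorname{Tr}\left(\left(\*I - \*Z_{\rm all}\*Z_{\rm all}^{\dagger}\right)\*y\*y^{\top}\right)$; since $\*Z_{\rm all}\*Z_{\rm all}^{\dagger}$ is the orthogonal projector onto the column space of $\*Z_{\rm all}$, the quantity $\overline{R}(\*Z_{\rm all}^{(u)}) - \overline{R}(\*Z_{\rm all}^{(l)})$ becomes $\tfrac{2}{N+M}\operatorname{Tr}\left(\left(\mathcal{P}^{(l)} - \mathcal{P}^{(u)}\right)\*y\*y^{\top}\right)$, where $\mathcal{P}^{(\cdot)}$ are the two projectors. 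The key is to decompose $\*y$ into the ID-class components: each column of $\*y$ restricted to ID rows is, up to normalization, tied to $\mathfrak{q}_i$ through the labeled connectivity term in Definition~\ref{Def5}, and the OOD rows are tied to $\mathfrak{p}_i$. This is where the matrices $\mathfrak{q}$ and $\mathfrak{p}$ enter: the "signal" directions we are projecting against are exactly the semantic-connection vectors.

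Next I would compute $\mathcal{P}^{(l)}$ and $\mathcal{P}^{(u)}$ in closed form. Using Equation~\ref{labeled_id_feat}, the ID block of $\*Z_{\rm all}^{(l)}$ has column space spanned by $[\*D^{(l)}]^{-1/2}\*V_k^{(l)}$, and using Equation~\ref{labeled_ood_feat} the OOD block is $[\*D_{\rm ood}^{(l)}]^{-1/2}\tilde{\*A}_{\rm OI}^{(l)}\*V_k^{(l)}[\*\Sigma_k^{(l)}]^{-1/2}$, so the stacked representation lies in a $k$-dimensional subspace determined by $\*V_k^{(l)}$. The plan is to relate $\*V_k^{(l)}$, $\*\Sigma_k^{(l)}$ to their unlabeled counterparts via a perturbation argument: the difference $\tilde{\*A}^{(l)} - \phi_u\tilde{\*A}^{(u)}$ is (a normalized version of) the rank-$\le c$ labeled term $\phi_l\sum_i \mathfrak{q}_i\mathfrak{q}_i^{\top}$, so a Davis--Kahan / eigenvector-perturbation bound controls how the projector moves. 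Assumption~\ref{Ass2} is precisely what makes this clean — it forces $\mathfrak{q}_i$ to live in the span of $\*V_k^{(u)}$ and kills the cross terms with the null space $\*V_{\varnothing}^{(u)}$, reducing $\*V_k^{(u)}\*V_k^{(u)\top}\mathfrak{q}_i\mathfrak{q}_i^{\top}$-type expressions to $\mathfrak{q}_i\mathfrak{q}_i^{\top}$. I would carry the perturbation through to first order in $\phi_l$, which explains the linear $\phi_l$ prefactor in Equation~\ref{main_error_bound}.

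Then I would expand the trace $\operatorname{Tr}\left(\left(\mathcal{P}^{(l)} - \mathcal{P}^{(u)}\right)\*y\*y^{\top}\right)$ term by term. The three summands of $\epsilon(\mathfrak{p},\mathfrak{q},\tilde{\*A}^{(u)},\tilde{\*A}_{\rm OI}^{(u)})$ in Equation~\ref{eq14_app} should emerge as: (i) the cross ID--OOD contribution $2\sum_i\operatorname{Tr}(\mathfrak{p}_i\mathfrak{q}_i^{\top}\tilde{\*A}_{\rm OI}^{(u)\top})$, coming from how the OOD block of the new projector overlaps the ID label directions through $\tilde{\*A}_{\rm OI}^{(u)}$; (ii) the "pure ID" term $(1 - \|\tilde{\*A}_{\rm OI}^{(u)}\|_F^2\|\tilde{\*A}^{(u)}\|_F^2)\sum_i\|\mathfrak{q}_i\|_F^2$, from the leading-order alignment of $\mathfrak{q}_i$ with the improved subspace, with the Frobenius-norm factors tracking the normalization matrices $\*D$, $\*D_{\rm ood}$; and (iii) the residual term involving $r^2$, $\tfrac{2(\tau-k)}{\tau-1}$ and $\sum_i\|\mathfrak{q}_i\|_1$, which bounds the second-order / null-space leakage using the spectral gap $\tau$ from Assumption~\ref{Ass1} and $r = \max_{\*z\in\*Z^{(u)}}\|\*z\|_2$ to control $\ell_2$-to-$\ell_1$ conversions. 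To get a lower bound (rather than an estimate) I would lower-bound the positive contributions and upper-bound the negative residual, absorbing universal constants into $C$.

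The main obstacle, I expect, is step two: controlling the perturbation of the top-$k$ eigenspace of the \emph{normalized} adjacency matrix when the perturbation is itself normalization-dependent (the degree matrices $\*D^{(u)}$ and $\*D^{(l)}$ differ), so $\tilde{\*A}^{(l)}$ is not simply $\tilde{\*A}^{(u)}$ plus a low-rank term — one must first relate $\*D^{(l)}$ and $\*D^{(u)}$ and show the normalization discrepancy contributes only lower-order terms. Handling the OOD block is a second subtlety, since $\*Z_{\rm ood}^{(l)}$ is defined through the regression in Equation~\ref{aoi_optimization} rather than a clean eigendecomposition, so the projector manipulation has to be done on the joint $(N+M)$-dimensional stack and I would need to verify that $\*Z_{\rm all}^{(l)}$ really does have the span claimed above; careful bookkeeping of the $[\*\Sigma_k]^{\pm 1/2}$ factors is essential there. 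Everything else is, modulo bookkeeping, trace algebra and the inequalities $\|\tilde{\*A}^{(u)}\|_\infty < N$ used later in Theorem~\ref{MainT-3}.
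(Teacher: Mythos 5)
Your overall route is the same as the paper's: both treat the labeled case as a rank-$c$ perturbation of the unlabeled adjacency matrices, $\*A(\phi_l)=\phi_u\*A^{(u)}+\phi_l\sum_i\mathfrak{q}_i\mathfrak{q}_i^\top$ and $\tilde{\*A}_{\rm OI}(\phi_l)=\phi_u\tilde{\*A}_{\rm OI}^{(u)}+\phi_l\sum_i\mathfrak{p}_i\mathfrak{q}_i^\top$, linearize the linear-probing bound to first order in $\phi_l$, split the resulting trace into an ID block $\operatorname{Tr}((\*V_k\*\Sigma_k\*V_k^\top)')$ and an OOD block $\operatorname{Tr}((\tilde{\*A}_{\rm OI}\*V_k\*\Sigma_k^{-1}\*V_k^\top\tilde{\*A}_{\rm OI}^\top)')$, and use Assumption~\ref{Ass2} plus the spectral gap $\tau$ and the norm bound $r$ exactly where you say. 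Two points in your plan would not survive contact with the details, however. First, Davis--Kahan is the wrong tool: it gives a one-sided bound on how far the top-$k$ eigenspace can rotate, which can only upper-bound $|\operatorname{Tr}((\mathcal{P}^{(l)}-\mathcal{P}^{(u)})\*y\*y^\top)|$; to produce a \emph{signed lower bound} containing the leading term $2\sum_i\operatorname{Tr}(\mathfrak{p}_i\mathfrak{q}_i^\top\tilde{\*A}_{\rm OI}^{(u)\top})$ you need the exact first-order derivative formulas for eigenvalues and eigenprojectors, $[\boldsymbol{\lambda}_j]'=\operatorname{Tr}(\*\Phi_j[\tilde{\*A}]')$ and $[\*\Phi_j]'=(\boldsymbol{\lambda}_j\*I-\tilde{\*A})^{\dagger}[\tilde{\*A}]'\*\Phi_j+\*\Phi_j[\tilde{\*A}]'(\boldsymbol{\lambda}_j\*I-\tilde{\*A})^{\dagger}$, which is what the paper's Lemmas~\ref{lemma5_app} and~\ref{lemma6_app} are built on (the gap $\tau$ enters precisely through the resolvent factors $\boldsymbol{\lambda}_j^{(u)}/(\boldsymbol{\lambda}_j^{(u)}-\boldsymbol{\lambda}_i^{(u)})$, not through a Davis--Kahan denominator).

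Second, your claim that $\mathfrak{q}_i$ and $\mathfrak{p}_i$ enter by ``decomposing $\*y$ into ID-class components'' is a misattribution: $\*y$ is the binary ID-vs-OOD label matrix and in the paper's proof it is bounded away wholesale via $\operatorname{Tr}(\*y\*y^\top)=\|\*y\|_F^2$ and absorbed into the constant $C$ (together with $\max\lambda(\*Z_{\rm all}^\top\*Z_{\rm all})$ from Lemma~\ref{lemma4_app}). The semantic-connection vectors enter \emph{only} through the perturbation directions of the adjacency matrices — your later, correct identification of $\tilde{\*A}^{(l)}-\phi_u\tilde{\*A}^{(u)}$ as $\phi_l\sum_i\mathfrak{q}_i\mathfrak{q}_i^\top$ is the mechanism that actually drives the proof, and the $\*y$-decomposition step should be deleted. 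Finally, on the normalization worry you raise at the end: the paper sidesteps relating $\*D^{(l)}$ to $\*D^{(u)}$ in the feature formulas by dropping the $[\*D]^{-1/2}$ and $[\*D_{\rm ood}]^{-1/2}$ prefactors outright (invoking invariance of linear probing under such transformations), while the degree perturbation $\*D(\phi_l)=\*I+\phi_l\sum_i\operatorname{diag}(\mathfrak{q}_i)$ is retained inside the derivative of the normalized matrix $\tilde{\*A}(\phi_l)$ itself, where it produces the $-2r^2\sum_i\|\mathfrak{q}_i\|_1$-type corrections; your plan to show the discrepancy is lower order is a workable alternative but is not what generates those $\ell_1$ terms in the stated bound.
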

$~~$

\begin{theorem*}[Simplified version of Theorem~\ref{MainT-2-app}, recap of Theorem~\ref{MainT-3} in the main paper]\label{MainT-3-app}
    Under the same conditions in Theorem~\ref{MainT-2}, if the number of ID and OOD samples $N$ and $M$ is sufficiently large, assume the $k$-th spectral gap of $\tilde{\*A}^{(u)}$ satisfies $\tau>k$ (Assumption~\ref{Ass1} holds), then the main error component $ \epsilon(\mathfrak{p}, \mathfrak{q},\tilde{\*A}^{(u)},\tilde{\*A}_{\rm OI}^{(u)} )  $ in Equation~\ref{eq14_app} satisfies
\begin{equation}
\small
\begin{aligned}
     \epsilon(\mathfrak{p}, \mathfrak{q},&\tilde{\*A}^{(u)},\tilde{\*A}_{\rm OI}^{(u)} )  \geq  \left[1+ \|\tilde{\*A}_{\rm OI}^{(u)} \|_F^2\left( 2N^2-   \|\tilde{\*A}^{(u)}\|_F^2    \right)  \right]  \sum_{i=1}^c\|\mathfrak{q}_i\|_F^2.
\end{aligned}
\end{equation}
\end{theorem*}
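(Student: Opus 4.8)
\textbf{Proof proposal for Theorem~\ref{MainT-3-app}.}
The plan is to start from the exact expression for $\epsilon(\mathfrak{p},\mathfrak{q},\tilde{\*A}^{(u)},\tilde{\*A}_{\rm OI}^{(u)})$ in Equation~\ref{eq14_app} and control each of the three summands under the stated hypotheses ($\tau>k$, $N,M$ large, Assumption~\ref{Ass1}). The first term $2\sum_i \operatorname{Tr}(\mathfrak{p}_i\mathfrak{q}_i^\top\tilde{\*A}_{\rm OI}^{(u)\top})$ should be shown to be nonnegative (or asymptotically negligible relative to the leading term): since $\mathfrak{p}_i,\mathfrak{q}_i$ collect augmentation probabilities $\mathcal{T}(\*x\mid\Bar{\*x}_l)$ and $\tilde{\*A}_{\rm OI}^{(u)}$ has nonnegative entries, each trace is a sum of products of nonnegative quantities, hence $\geq 0$; so this term can simply be dropped from the lower bound. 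The middle term already contributes $\bigl(1-\|\tilde{\*A}_{\rm OI}^{(u)}\|_F^2\|\tilde{\*A}^{(u)}\|_F^2\bigr)\sum_i\|\mathfrak{q}_i\|_F^2$, which I will rewrite; the goal coefficient $1+\|\tilde{\*A}_{\rm OI}^{(u)}\|_F^2(2N^2-\|\tilde{\*A}^{(u)}\|_F^2)$ equals $1-\|\tilde{\*A}_{\rm OI}^{(u)}\|_F^2\|\tilde{\*A}^{(u)}\|_F^2 + 2N^2\|\tilde{\*A}_{\rm OI}^{(u)}\|_F^2$, so what remains is to show the third term of \eqref{eq14_app} dominates $2N^2\|\tilde{\*A}_{\rm OI}^{(u)}\|_F^2\sum_i\|\mathfrak{q}_i\|_F^2$ from below (up to the asymptotics in $N,M$).

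The key technical step is therefore the analysis of the third summand
$r^2\bigl(\|\tilde{\*A}_{\rm OI}^{(u)}\|_F^2\|\tilde{\*A}^{(u)}\|_F^2\frac{2(\tau-k)}{\tau-1}-2\bigr)\sum_i\|\mathfrak{q}_i\|_1$. Under $\tau>k$ the factor $\frac{2(\tau-k)}{\tau-1}$ is positive, so for $\|\tilde{\*A}_{\rm OI}^{(u)}\|_F^2\|\tilde{\*A}^{(u)}\|_F^2$ bounded away from zero (which holds once $N$ is large, since both norms grow with the graph size) the whole parenthetical is positive, making the third term a positive contribution. I would then lower-bound $\sum_i\|\mathfrak{q}_i\|_1$ in terms of $\sum_i\|\mathfrak{q}_i\|_F^2$ and $r$: because $\mathfrak{q}_i$ has nonnegative entries, $\|\mathfrak{q}_i\|_1=\mathbf{1}^\top\mathfrak{q}_i$, and using Assumption~\ref{Ass2} (each $\mathfrak{q}_i$ lies in $\mathrm{span}(\*V_k^{(u)})$ with $\*V_k^{(u)}\*V_k^{(u)\top}\mathfrak{q}_i=\mathfrak{q}_i$) together with bounds relating the $\ell_1$ and $\ell_2$ norms through $r=\max_{\*z\in\*Z^{(u)}}\|\*z\|_2$ and the degree matrix, one obtains $r^2\sum_i\|\mathfrak{q}_i\|_1\gtrsim \sum_i\|\mathfrak{q}_i\|_F^2$ with a factor that scales like $N^2$ (coming from the normalization $[\*D^{(u)}]^{-1/2}$ appearing in $\*Z^{(u)}$ and from $\|\mathbf{1}\|$-type quantities over $N$ vertices). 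Matching the $\frac{2(\tau-k)}{\tau-1}$-weighted term against $2N^2$ is exactly where the "$N$ sufficiently large" and "$\tau>k$" hypotheses get consumed.

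Assembling the pieces: drop the nonnegative first term, keep the middle term verbatim, and replace the third term by its $N^2$-scaled lower bound; combining the middle and third coefficients yields $1-\|\tilde{\*A}_{\rm OI}^{(u)}\|_F^2\|\tilde{\*A}^{(u)}\|_F^2 + 2N^2\|\tilde{\*A}_{\rm OI}^{(u)}\|_F^2 = 1+\|\tilde{\*A}_{\rm OI}^{(u)}\|_F^2(2N^2-\|\tilde{\*A}^{(u)}\|_F^2)$ multiplying $\sum_i\|\mathfrak{q}_i\|_F^2$, which is the claimed bound. The main obstacle I anticipate is making the third step rigorous: precisely tracking the dependence on $N$ (and $M$) when passing from $\|\mathfrak{q}_i\|_1$ to $\|\mathfrak{q}_i\|_F^2$ via $r$ and the spectral-gap constant, since this requires carefully bounding $r$ itself (the max $\ell_2$ norm of normalized spectral embeddings) and justifying that the $\frac{2(\tau-k)}{\tau-1}\|\tilde{\*A}_{\rm OI}^{(u)}\|_F^2\|\tilde{\*A}^{(u)}\|_F^2 r^2$ factor can be taken $\geq 2N^2$ in the large-sample regime — the inequality directions here are delicate and a sloppy estimate could easily point the wrong way.
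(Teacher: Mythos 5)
Your proposal inverts the roles of the first and third terms of Equation~\ref{eq14_app}, and this is where the argument breaks. The paper's route is: use $\tau>k$ to deduce $\|\tilde{\*A}_{\rm OI}^{(u)}\|_F^2\|\tilde{\*A}^{(u)}\|_F^2 > \tfrac{\tau-1}{\tau-k}$, which makes the \emph{third} summand $r^2\bigl(\|\tilde{\*A}_{\rm OI}^{(u)}\|_F^2\|\tilde{\*A}^{(u)}\|_F^2\tfrac{2(\tau-k)}{\tau-1}-2\bigr)\sum_i\|\mathfrak{q}_i\|_1$ nonnegative, and then \emph{drop} it. The $2N^2\|\tilde{\*A}_{\rm OI}^{(u)}\|_F^2\sum_i\|\mathfrak{q}_i\|_F^2$ contribution that you need is extracted from the \emph{first} summand $2\sum_i\operatorname{Tr}(\mathfrak{p}_i\mathfrak{q}_i^\top\tilde{\*A}_{\rm OI}^{(u)\top})$: the paper writes the augmentation probabilities as constants $\gamma$ (ID) and $\kappa$ (OOD), shows $\tilde{\*A}_{\rm OI}^{(u)}$ is a scaled replication of $\mathfrak{p}$ up to $O(\kappa)$, and computes $\operatorname{Tr}(\mathfrak{p}_i\mathfrak{q}_i^\top\tilde{\*A}_{\rm OI}^{(u)\top}) = \gamma N^2\|\tilde{\*A}_{\rm OI}^{(u)}\|_F^2\|\mathfrak{q}_i\|_1 + O(\kappa)$, then converts $\gamma\|\mathfrak{q}_i\|_1$ to $\|\mathfrak{q}_i\|_F^2$ using the constant-augmentation model. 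You do exactly the opposite: you drop the first term (keeping only its nonnegativity) and try to squeeze the $2N^2$ out of the third term.

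That third-term route cannot work, for a quantitative reason you yourself hint at. The only thing $\tau>k$ buys is that the parenthetical coefficient $\|\tilde{\*A}_{\rm OI}^{(u)}\|_F^2\|\tilde{\*A}^{(u)}\|_F^2\tfrac{2(\tau-k)}{\tau-1}-2$ is \emph{strictly positive}; there is no uniform lower bound on it. As $\tau\to k^+$ (or as $\|\tilde{\*A}_{\rm OI}^{(u)}\|_F^2\|\tilde{\*A}^{(u)}\|_F^2\to(\tfrac{\tau-1}{\tau-k})^+$) this coefficient goes to $0^+$, so the entire third term can be made arbitrarily small, regardless of any $\ell_1$-vs-$\ell_2$ inequality for $\mathfrak{q}_i$ or any estimate of $r$. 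Hence it cannot dominate the fixed target $2N^2\|\tilde{\*A}_{\rm OI}^{(u)}\|_F^2\sum_i\|\mathfrak{q}_i\|_F^2$. The ``$N$ sufficiently large'' hypothesis does not rescue this: the paper uses it only to neglect an $O\bigl(\tfrac{k-1}{\|\tilde{\*A}_{\rm OI}^{(u)}\|_F^2\|\tilde{\*A}^{(u)}\|_F^2-1}\bigr)$ correction when passing from $\tau>k$ to the product lower bound, not to amplify the spectral-gap coefficient. The fix is to swap your treatment of the first and third terms: keep the first term and evaluate it under the constant-augmentation model (this is where the structural relationship between $\mathfrak{p}$, $\mathfrak{q}$, and $\tilde{\*A}_{\rm OI}^{(u)}$ from Definitions~\ref{Def4}--\ref{Def5} must be used explicitly), and discard the third once its sign is settled.
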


\newpage

\section{Proofs of Main Theorems}
\label{Proof_all_app}

\subsection{Proof of Theorem~\ref{MainT-2-app}}
\label{Proof_theorem2}
Before proving Theorem~\ref{MainT-2-app}, we first explain the framework introduced in Section~\ref{sec:theory} of the main paper to analyze the difference in the adjacency matrices and the corresponding feature representations between the unlabeled and labeled cases. Specifically, we proposed to analyze the adjacency matrix in the labeled case by perturbation analysis.

\textbf{Matrix perturbation in the labeled case.} Recall that we define in Definition~\ref{Def5} that the adjacency matrix in the labeled case (l) is the unlabeled one (u) plus connectivity incurred by the ID labels, which can be regarded as the perturbation of the labeling information. Therefore, for the ID adjacency matrix with ID labels, we have the perturbation on $\*A^{(u)}$ as follows:
\begin{equation}
    \*A^{(l)} =  \phi_{u} \*A^{(u)} + \phi_l \Delta \*A,
\end{equation}
where $ \Delta \*A \in \mathbb{R}^{N \times N}$ can be calculated based on the augmentation graph $\mathcal{T}$ according to Definition~\ref{Def5}. Following~\cite{sun2023graph}, we study the perturbation from two aspects: (1) The direction of the perturbation which is given by $ \Delta \*A$, (2) The perturbation magnitude $\phi_l$. We first consider the perturbation direction $\Delta \*A$ and recall that we defined the concrete form in Definition~\ref{Def5}: 

\begin{equation}
    [\Delta \*A]_{\*x\*x^{\prime}}\triangleq \sum_{i \in \mathcal{Y}} \mathbb{E}_{\bar{\*x}_l \sim \mathbb{P}_{l_i}} \mathbb{E}_{\bar{\*x}_l^{\prime} \sim \mathbb{P}_{l_i}} \mathcal{T}\left(\*x \mid \bar{\*x}_l\right) \mathcal{T}\left(\*x^{\prime} \mid \bar{\*x}_l^{\prime}\right), \quad \*x, \*x^\prime \in \mathcal{X}_{\rm id}
    \label{eq17_app}
\end{equation}
where $\mathcal{T}$ is the augmentation graph. In our theory, we consider $\|\mathcal{Y}\| = c$, and then we observe that $ \Delta \*A$ is the sum of $c$ rank-1 matrices, which can be written as:
\begin{equation}
    \Delta \*A = \sum_{i=1}^c\mathfrak{q}_i  \mathfrak{q}_i^\top,
\end{equation}
where $\mathfrak{q} \in \mathbb{R}^{N \times c}$ with each column defined as $(\mathfrak{q}_i)_\*x=\mathbb{E}_{\bar{\*x}_l \sim \mathbb{P}_{l_i}} \mathcal{T}\left(\*x \mid \bar{\*x}_l\right), \*x  \in \mathcal{X}_{\rm id}$ and $\bar{\*x}_l$ is the labeled ID data with label $i$. And following~\cite{sun2023graph}, we define the diagonal matrix $\*D^{(l)}$ as follows:
\begin{equation}
    \*D^{(l)} = \phi_u \cdot \*D^{(u)} + \sum_{i=1}^c \phi_l \cdot  {\rm diag}(\mathfrak{q}_i).
\end{equation}

Without losing the generality, we let $\phi_u \cdot {\rm diag}(\*1_{N}^\top  \*A^{(u)})=\*I_{N\times N}$, which means the ID nodes have equal degree in the unlabeled case. We then have:
\begin{equation}
    \*D^{(l)} \triangleq \operatorname{diag}\left(\mathbf{1}_N^{\top} \*A^{(l)}\right)=\*I_{N\times N}+\phi_l \cdot\sum_{i=1}^c   {\rm diag}(\mathfrak{q}_i).
\end{equation}

\textbf{The perturbation function of representation. }We then consider a more generalized form for the ID adjacency matrix $\*A$:
\begin{equation}
    \*A(\phi_l) =  \phi_{u} \*A^{(u)} + \phi_l \cdot \sum_{i=1}^c\mathfrak{q}_i  \mathfrak{q}_i^\top.
     \label{eq21_app}
\end{equation}

For the perturbation on the OOD-ID adjacency matrix $\tilde{\*A}_{\rm OI}$, we define $\mathfrak{p} \in \mathbb{R}^{M\times c}$ with each column defined as $(\mathfrak{p}_i)_\*x=\mathbb{E}_{\bar{\*x}_l \sim \mathbb{P}_{l_i}} \mathcal{T}\left(\*x \mid \bar{\*x}_l\right),  \*x  \sim \mathcal{X}_{\rm ood}$ and $\bar{\*x}_l$ is the labeled ID data with label $l_i$. Similar to Equation~\ref{eq17_app}, we can define the concrete perturbation formula for $\tilde{\*A}_{\rm OI}$ as follows:
\begin{equation}
    [\Delta \tilde{\*A}_{\rm OI}]_{\*x\*x^{\prime}}\triangleq \sum_{i \in \mathcal{Y}} \mathbb{E}_{\bar{\*x}_l \sim \mathbb{P}_{l_i}} \mathbb{E}_{\bar{\*x}_l^{\prime} \sim \mathbb{P}_{l_i}} \mathcal{T}\left(\*x \mid \bar{\*x}_l\right) \mathcal{T}\left(\*x^{\prime} \mid \bar{\*x}_l^{\prime}\right),  \quad\*x^\prime  \sim \mathcal{X}_{\rm id},  \*x \sim \mathcal{X}_{\rm ood}.
\end{equation}
And we have:
\begin{equation}
    \tilde{\*A}_{\rm OI}(\phi_l) =  \phi_{u} \tilde{\*A}_{\rm OI}^{(u)} + \phi_l \cdot \sum_{i=1}^c \mathfrak{p}_i  \mathfrak{q}_i^\top.
     \label{eq23_app}
\end{equation}
Similarly, for the ID diagonal matrix $\*D$, we have that:
\begin{equation}
     \*D(\phi_l) =\*I_{N\times N}+\phi_l \cdot\sum_{i=1}^c   {\rm diag}(\mathfrak{q}_i).
     \label{eq24_app}
\end{equation}
In Equations~\ref{eq21_app} and~\ref{eq23_app}, we treat the adjacency matrix as a function of the “labeling perturbation” magnitude $\phi_l $. It is clear
that $\*A(0) = \phi_u \*A^{(u)}$ and $\tilde{\*A}_{\rm OI}(0) = \phi_u \tilde{\*A}_{\rm OI}^{(u)}$ which are the scaled adjacency matrices in the unlabeled case.  When we let the adjacency matrix be a function of $\phi_l$, the normalized form, and the derived feature representation should also be the function of $\phi_l$. We proceed by defining these terms.

The normalized ID adjacency matrix is given by:
\begin{equation}
    \tilde{\*A}(\phi_l) =  \*D(\phi_l)^{-\frac{1}{2}} {\*A}(\phi_l) \*D(\phi_l)^{-\frac{1}{2}}, 
\end{equation}
In addition, for in-distribution feature representation $\*Z(\phi_l)$,  it is derived from the top-$k$ eigen components of $\tilde{\*A}(\phi_l)$. Specifically, we have:
\begin{equation}
    \*Z(\phi_l) \*Z(\phi_l)^\top = \*D(\phi_l)^{-\frac{1}{2}} \tilde{\*A}_k(\phi_l) \*D(\phi_l)^{-\frac{1}{2}} = \*D(\phi_l)^{-\frac{1}{2}} \sum_{j=1}^k \boldsymbol{\lambda}_j(\phi_l) \*\Phi_{j}(\phi_l)  \*D(\phi_l)^{-\frac{1}{2}} ,
    \label{eq26_app}
\end{equation}
where $\tilde{\*A}_k(\phi_l)$ is the top-$k$ SVD components of $\tilde{\*A}(\phi_l)$ and can be written as $\tilde{\*A}_k(\phi_l) = \sum_{j=1}^k \boldsymbol{\lambda}_j(\phi_l) \*\Phi_j(\phi_l)$. Here $\boldsymbol{\lambda}_j(\phi_l)$ and $ \*\Phi_{j}(\phi_l) = \*v_j(\phi_l)\*v_{j}^{\top}(\phi_l) $
are the $j$-th eigenvalue and eigen projector of matrix $\tilde{\*A}(\phi_l)$. For simplicity, when $\phi_l=0$, we remove the suffix $(0)$ and give the following definitions:
\begin{equation}
    \tilde{\*A}(0) =  \tilde{\*A}^{(u)}, \*Z(0) =  {\*Z}^{(u)}, \*Z_{\rm ood}(0) =  {\*Z}_{\rm ood}^{(u)}, \*Z_{\rm all}(0) =  {\*Z}_{\rm all}^{(u)}, \boldsymbol{\lambda}_j(0) =  \boldsymbol{\lambda}_j^{(u)},  \*v_j(0) =  \*v_j^{(u)}, \*\Phi_{j}(0) =  \*\Phi_j^{(u)}.
\end{equation}
We proceed to provide the five concrete steps to prove Theorem~\ref{MainT-2-app}.

 $~~$

\textbf{Step 1}. Recall that our analysis is based on the upper bound of the linear probing loss, and focuses on the error difference between the labeled and the unlabeled case, which is formulated in Equation~\ref{eq:error_diff}  of the main paper. In the context of perturbation analysis, we can reformulate the error difference $\mathcal{G}$ as a function of $\phi_l$ as follows:
\begin{equation}
    \mathcal{G}(\phi_l) = \overline{R}(  {\*Z}_{\rm all}(0)) - \overline{R}(  {\*Z}_{\rm all}(\phi_l)).
    \label{eq28_app}
\end{equation}

By the definition of the derivative, we can rewrite Equation~\ref{eq28_app}:
\begin{equation}
     \mathcal{G}(\phi_l)   = -\left.\frac{\mathrm{d} \overline{R}\left(\*Z_{\rm all}(\phi_l)\right)}{\mathrm{d} \phi_l}\right|_{\phi_l=0} \cdot  \phi_l 
\end{equation}
With Lemma~\ref{lemma3_app}, we can get the lower bound of $\mathcal{G}(\phi_l)$ as follows:
\begin{equation}
   \mathcal{G}(\phi_l) \geq \frac{2}{3(N+M)} \phi_l \cdot \max \lambda\left(\*Z_{\rm all}(\phi_l)^{\top} \*Z_{\rm all}(\phi_l)\right) \cdot \operatorname{Tr}\left(\left(\*Z_{\rm all}(\phi_l) \*Z_{\rm all}(\phi_l)^{\top}\right)^{\prime}\*y\*y^{\top}\right)\bigg|_{\phi_l=0}
\end{equation}
According to Lemma~\ref{lemma4_app}, $\max \lambda\left(\*Z_{\rm all}(\phi_l)^{\top} \*Z_{\rm all}(\phi_l)\right)\big|_{\phi_l=0} > 0$ is a positive constant. Here $\max \lambda\left(\*Z_{\rm all}(\phi_l)^{\top} \*Z_{\rm all}(\phi_l)\right)$ means the maximum eigenvalue of the matrix $\*Z_{\rm all}(\phi_l)^{\top} \*Z_{\rm all}(\phi_l)$. We then proceed to analyze the key component $\operatorname{Tr}\left(\left(\*Z_{\rm all}(\phi_l) \*Z_{\rm all}(\phi_l)^{\top}\right)^{\prime}\*y\*y^{\top}\right)\big|_{\phi_l=0}$.

Next, according to Lemma 3.1 in~\cite{haochen2021provable}, which implies that multiplying any invertible matrix by the features of the linear probing module does not change the linear probing performance. Therefore, we simplify the mathematical representation of the ID and OOD features in the Equations~\ref{labeled_id_feat} and~\ref{labeled_ood_feat} in the main paper as follows by removing the terms $\*D(\phi_l)$ and $\*D_{\rm ood}(\phi_l)$:
\begin{equation}
  \*Z^{(l)} =  \*V_k^{(l)} [\*\Sigma_k^{(l)}]^{\frac{1}{2}},\quad \*Z_{\rm ood}^{(l)} = \tilde{\*A}_{\rm OI}^{(l)}  \mathbf{V}_k^{(l)}  [\mathbf{\Sigma}_k^{(l)}]^{-\frac{1}{2}},
\end{equation}
which simplifies the notation and will be used in our later analysis.

\textbf{Step 2}. Recall that the input feature for linear probing $\*Z(\phi_l)$ is defined as:
\begin{equation}
    \*Z_{\rm all}(\phi_l) = \begin{pmatrix}
        \*Z(\phi_l) \\ \*Z_{\rm ood}(\phi_l)
    \end{pmatrix} =\begin{pmatrix}
\mathbf{V}_k(\phi_l) \sqrt{\mathbf{\Sigma}_k(\phi_l)}\\
\tilde{\*A}_{\rm OI}(\phi_l) \mathbf{V}_k(\phi_l) \sqrt{\mathbf{\Sigma}_k(\phi_l)}^{-1} \end{pmatrix}
\label{eq32_app}
\end{equation}
Therefore, it is natural to have that:
\begin{equation}
  \tiny
\color{black}
\begin{aligned}
    \operatorname{Tr}( (\*Z_{\rm all}(\phi_l) \*Z_{\rm all}(\phi_l)^\top)^\prime \*y\*y^{\top})\bigg|_{\phi_l=0}    &=\operatorname{Tr}\left( \left(\begin{pmatrix}
\mathbf{V}_k(\phi_l) \sqrt{\mathbf{\Sigma}_k(\phi_l)}\\
\tilde{\*A}_{\rm OI}(\phi_l) \mathbf{V}_k(\phi_l) \sqrt{\mathbf{\Sigma}_k(\phi_l)}^{-1} \end{pmatrix} \cdot  \begin{pmatrix}
 \sqrt{\mathbf{\Sigma}_k(\phi_l)} \mathbf{V}_k(\phi_l)^\top &\sqrt{\mathbf{\Sigma}_k(\phi_l)}^{-1} \mathbf{V}_k(\phi_l)^\top \tilde{\*A}^k_{\rm OI}(\phi_l)^\top   
\end{pmatrix}\right)^\prime\*y\*y^{\top} \right)\bigg|_{\phi_l=0} \\& \geq { \scriptsize \operatorname{Tr}\left( \begin{pmatrix} \mathbf{V}_k(\phi_l) \mathbf{\Sigma}_k(\phi_l) \mathbf{V}_k(\phi_l)^\top &\mathbf{V}_k(\phi_l)\mathbf{V}_k(\phi_l)^\top \tilde{\*A}_{\rm OI}(\phi_l) \\   \tilde{\*A}_{\rm OI}(\phi_l) \mathbf{V}_k(\phi_l)\mathbf{V}_k(\phi_l)^\top  & \tilde{\*A}_{\rm OI}(\phi_l) \mathbf{V}_k(\phi_l)  \mathbf{\Sigma}_k(\phi_l)^{-1} \mathbf{V}_k(\phi_l)^\top\tilde{\*A}_{\rm OI}(\phi_l)^\top  \end{pmatrix}^\prime  \right)\bigg|_{\phi_l=0}} \cdot \operatorname{Tr}(\*y\*y^{\top})\\& = \left(\operatorname{Tr}\left(\left(\mathbf{V}_k(\phi_l) \boldsymbol{\Sigma}_k(\phi_l) \mathbf{V}_k(\phi_l)^{\top}\right)^{\prime}\right)+\operatorname{Tr}\left(\left(\tilde{\*A}_{\rm OI}(\phi_l) \mathbf{V}_k(\phi_l) \boldsymbol{\Sigma}_k(\phi_l)^{-1} \mathbf{V}_k(\phi_l)^{\top} {\tilde{\*A}}_{\rm OI}(\phi_l)^{\top}\right)^{\prime}\right)\bigg|_{\phi_l=0} \right)\cdot \|\*y\|_F^2
\end{aligned}
\end{equation}
In the following steps, we focus on the lower bounds of the two separate terms, i.e., $\operatorname{Tr}\left(\left(\tilde{\*A}_{\rm OI}(\phi_l) \mathbf{V}_k(\phi_l) \boldsymbol{\Sigma}_k(\phi_l)^{-1} \mathbf{V}_k(\phi_l)^{\top} \tilde{\*A}_{\rm OI}(\phi_l)^{\top}\right)^{\prime}\right)\big|_{\phi_l=0}$ and $\operatorname{Tr}\left(\left(\mathbf{V}_k(\phi_l) \boldsymbol{\Sigma}_k(\phi_l) \mathbf{V}_k(\phi_l)^{\top}\right)^{\prime}\right)\big|_{\phi_l=0}$, respectively. 

\textbf{Step 3}. As in Lemma~\ref{lemma5_app}, we provide the lower bound for the first term $\operatorname{Tr}\left(\left(\mathbf{V}_k(\phi_l) \boldsymbol{\Sigma}_k(\phi_l) \mathbf{V}_k(\phi_l)^{\top}\right)^{\prime}\right)\big|_{\phi_l=0}$ as follows:
\begin{equation}
    \operatorname{Tr}\left(\left(\mathbf{V}_k(\phi_l) \boldsymbol{\Sigma}_k(\phi_l) \mathbf{V}_k(\phi_l)^{\top}\right)^{\prime}\right)\bigg|_{\phi_l=0}\geq \sum_{i=1}^c \|\mathfrak{q}_i\|_F^2-
2r^2  \sum_{i=1}^c\|\mathfrak{q}_{i} \|_1,
      \label{eq34_app}
\end{equation}
where $r$ denotes the maximum $\ell_2$ norm of the ID features, i.e.,  $r = \max_{\*z \in \*Z^{(u)}} \|\*z\|_2$, and $\mathfrak{q} \in \mathbb{R}^{N\times c}$ with each column defined as $(\mathfrak{q}_i)_\*x = \mathbb{E}_{\Bar{\*x}_{l} \sim {\mathbb{P}_{l_i}}} \mathcal{T}(\*x | \Bar{\*x}_{l}), \*x \in \mathcal{X}_{\rm id}$.

\textbf{Step 4}. We provide the lower bound for second term $\operatorname{Tr}\left(\left({\tilde{\*A}}_{\rm OI}(\phi_l) \mathbf{V}_k(\phi_l) \boldsymbol{\Sigma}_k(\phi_l)^{-1} \mathbf{V}_k(\phi_l)^{\top} {\tilde{\*A}}_{\rm OI}(\phi_l)^{\top}\right)^{\prime}\right)\big|_{\phi_l=0}$  as in  Lemma~\ref{lemma6_app} as follows:
\begin{equation}
 \begin{aligned}
         &\operatorname{Tr}\left(\left(\tilde{\*A}_{\rm OI}(\phi_l) \mathbf{V}_k(\phi_l) \boldsymbol{\Sigma}_k(\phi_l)^{-1} \mathbf{V}_k(\phi_l)^{\top} \tilde{\*A}_{\rm OI}(\phi_l)^{\top}\right)^{\prime}\right)\big|_{\phi_l=0}\\ &  \geq 2\sum_{i=1}^{c} \operatorname{Tr}\left(\mathfrak{p}_i\mathfrak{q}_i^\top \cdot   \tilde{\*A}_{\rm OI}^{(u)\top}\right)  -\|\tilde{\*A}_{\rm OI}^{(u)} \|_F^2  \|\tilde{\*A}^{(u)}\|_F^2  \cdot(  {\sum_{i=1}^c \|\mathfrak{q}_i\|_F^2 - \frac{ 2r^2(\tau- k)}{\tau-1})}\sum_{i=1}^c\| \mathfrak{q}_{i}\|_1),
         \label{eq35_app}
\end{aligned}
\end{equation}
where $\mathfrak{p}\in \mathbb{R}^{M\times c}$ is defined as $(\mathfrak{p}_i)_\*x = \mathbb{E}_{\Bar{\*x}_{l} \sim {\mathbb{P}_{l_i}}} \mathcal{T}(\*x | \Bar{\*x}_{l}), \*x \in \mathcal{X}_{\rm ood}$.

\textbf{Step 5.} Putting the result in Equations~\ref{eq34_app} and~\ref{eq35_app} together, if we denote $C = \frac{2}{3}\left\|\mathbf{y}\right\|_F^2  \cdot \max \lambda\left(\*Z_{\rm all}(0)^{\top} \*Z_{\rm all}(0)\right) $, we can get:
\begin{equation}
\scriptsize
\begin{aligned}
          &  \mathcal{G}(\phi_l)\geq \frac{C   \phi_l}{N+M}  \cdot \operatorname{Tr}\left(\left(\*Z_{\rm all}(\phi_l) \*Z_{\rm all}(\phi_l)^{\top}\right)^{\prime}\right)\bigg|_{\phi_l=0}
          \\& \geq \frac{C   \phi_l}{N+M} \bigg[2 \sum_{i=1}^{c} \operatorname{Tr}\left(\mathfrak{p}_i\mathfrak{q}_i^\top \cdot   \tilde{\*A}_{\rm OI}^{(u)\top}\right)  +   \left(1- \|\tilde{\*A}_{\rm OI}^{(u)} \|_F^2 \cdot \|\tilde{\*A}^{(u)}\|_F^2  \right) \sum_{i=1}^c \|\mathfrak{q}_i\|_F^2- 
          r^2 \left( 2-\|\tilde{\*A}_{\rm OI}^{(u)} \|_F^2 \cdot \|\tilde{\*A}^{(u)}\|_F^2  \frac{ 2(\tau-k)}{\tau-1} \right)   \cdot \sum_{i=1}^c\|\mathfrak{q}_{i}\|_1\bigg].
\end{aligned}
\end{equation}
We have completed the proof of Theorem~\ref{MainT-2-app}.

\newpage

\subsection{Proof of Theorem~\ref{MainT-3-app}}
As in Assumption~\ref{Ass1}, we can find a $k$ such that $\tau>k$. Based on this condition, we can get the following
\begin{equation}
\begin{aligned}
       \tau > k &= 1+ k-1 \\& =1 + \frac{\|\tilde{\*A}_{\rm OI}^{(u)}\|_F^2 \|\tilde{\*A}^{(u)}\|_F^2 }{\|\tilde{\*A}_{\rm OI}^{(u)}\|_F^2 \|\tilde{\*A}^{(u)}\|_F^2 -1} (k-1) - O( \frac{k-1 }{\|\tilde{\*A}_{\rm OI}^{(u)}\|_F^2 \|\tilde{\*A}^{(u)}\|_F^2 -1} ).
\end{aligned}
\end{equation}

When we have a sufficient number of ID or OOD data, $\|\tilde{\*A}_{\rm OI}^{(u)}\|_F^2 \|\tilde{\*A}^{(u)}\|_F^2$ will be large such that we can omit the term $O( \frac{k-1 }{\|\tilde{\*A}_{\rm OI}^{(u)}\|_F^2 \|\tilde{\*A}^{(u)}\|_F^2 -1} )$ because $k \leq N$. Therefore, we will have the inequality of
\begin{equation}
      \tau > 1 + \frac{\|\tilde{\*A}_{\rm OI}^{(u)}\|_F^2 \|\tilde{\*A}^{(u)}\|_F^2 }{\|\tilde{\*A}_{\rm OI}^{(u)}\|_F^2 \|\tilde{\*A}^{(u)}\|_F^2 -1} (k-1).
\end{equation}

Simply the above inequality, we can get that
\begin{equation}
    \|\tilde{\*A}_{\rm OI}^{(u)} \|_F^2 \|\tilde{\*A}^{(u)}\|_F^2 > \frac{\tau-1}{\tau-k }.
\end{equation}
Therefore we have
\begin{equation}
   -r^2 \left(2-\left\|\tilde{\mathbf{A}}_{\rm OI }^{(u)}\right\|_F^2\left\|\tilde{\mathbf{A}}^{(u)}\right\|_F^2 \frac{2\left(\tau-k\right)}{\tau-1}\right) \cdot \sum_{i=1}^c\|\mathfrak{q}_{i}\|_1 > 0,
\end{equation}
 Under this condition, the main error term $\epsilon(\mathfrak{p}, \mathfrak{q},\tilde{\*A}^{(u)},\tilde{\*A}_{\rm OI}^{(u)} )$ in Theorem~\ref{MainT-2-app} can be further bounded by:
 \begin{equation}
     \epsilon(\mathfrak{p}, \mathfrak{q},\tilde{\*A}^{(u)},\tilde{\*A}_{\rm OI}^{(u)} ) \geq 2 \sum_{i=1}^{c} \operatorname{Tr}\left(\mathfrak{p}_i\mathfrak{q}_i^\top \cdot   \tilde{\*A}_{\rm OI}^{(u)\top}\right)  +  \left(1- \|\tilde{\*A}_{\rm OI}^{(u)} \|_F^2 \|\tilde{\*A}^{(u)}\|_F^2  \right)  \sum_{i=1}^c \|\mathfrak{q}_i\|_F^2,
 \end{equation}
 Following the same analysis approach in literature (See Section 4.1 of \cite{haochen2021provable}), we decompose the vectors $\mathfrak{p}$ and $\mathfrak{q}$ into numbers in each dimension instead of using the vectorized form for calculation. Specifically, we let $\mathcal{T}(\*x|\bar{\*x})=\kappa, \*x, \bar{\*x}\sim\mathcal{X}_{\rm ood}$ and $\mathcal{T}(\*x|\bar{\*x})=\gamma, \*x, \bar{\*x}\sim\mathcal{X}_{\rm id}$
Based on the adjacency matrix and its relationship with the augmentation graph (Equation~\ref{eq:def_wxx}), we can prove that the following result holds:

 \begin{equation}
      \tilde{\*A}_{\rm OI}^{(u)} =\sum_{i=1}^c\phi_u  \frac{\gamma N}{M+N} \cdot [\mathfrak{p}_i, ..., \mathfrak{p}_i] + O(\kappa) = \sum_{i=1}^c\frac{1}{N^3\gamma}  \cdot [\mathfrak{p}_i, ..., \mathfrak{p}_i]+ O(\kappa),
 \end{equation}
 where  $\phi_u = \frac{M+N}{N^4\gamma^2}$.
Then, we obtain that:
\begin{equation}
 \operatorname{Tr}\left(\mathfrak{p}_i\mathfrak{q}_i^\top \cdot   \tilde{\*A}_{\rm OI}^{(u)\top}\right) = \operatorname{Tr}\left(\mathfrak{q}_i^\top \cdot   \tilde{\*A}_{\rm OI}^{(u)\top}\cdot \mathfrak{p}_i\right) =\sum_{i=1}^c \gamma N^2\|\tilde{\*A}_{\rm OI}^{(u)} \|_F^2 \|\mathfrak{q}_i\|_1 +O(\kappa).
\end{equation}

After that, we can get:
\begin{equation}
\begin{aligned}
    &  \epsilon(\mathfrak{p}, \mathfrak{q},\tilde{\*A}^{(u)},\tilde{\*A}_{\rm OI}^{(u)} ) \geq \left(1- \|\tilde{\*A}_{\rm OI}^{(u)} \|_F^2 \|\tilde{\*A}^{(u)}\|_F^2  \right)  \sum_{i=1}^c\|\mathfrak{q}_i\|_F^2  + 2\gamma N^2\|\tilde{\*A}_{\rm OI}^{(u)} \|_F^2 \sum_{i=1}^c\|\mathfrak{q}_i\|_1 +O(\kappa),
\end{aligned}
\end{equation}

Further simplify the right-hand side of the above inequality, we can get:
\begin{equation}
    \begin{aligned}
       & \left(1- \|\tilde{\*A}_{\rm OI}^{(u)} \|_F^2 \|\tilde{\*A}^{(u)}\|_F^2  \right)  \sum_{i=1}^c\|\mathfrak{q}_i\|_F^2  + 2\gamma N^2\|\tilde{\*A}_{\rm OI}^{(u)} \|_F^2 \sum_{i=1}^c\|\mathfrak{q}_i\|_1 +O(\kappa)\\& = \left(1- \|\tilde{\*A}_{\rm OI}^{(u)} \|_F^2 \|\tilde{\*A}^{(u)}\|_F^2  \right)  \sum_{i=1}^c\|\mathfrak{q}_i\|_F^2 + 2c\gamma N^2\|\tilde{\*A}_{\rm OI}^{(u)} \|_F^2 N \gamma+O(\kappa)\\& =\left(1- \|\tilde{\*A}_{\rm OI}^{(u)} \|_F^2 \|\tilde{\*A}^{(u)}\|_F^2  \right)  \sum_{i=1}^c\|\mathfrak{q}_i\|_F^2  + 2N^2 \|\tilde{\*A}_{\rm OI}^{(u)} \|_F^2 \cdot\sum_{i=1}^c \|\mathfrak{q}_i\|_F^2 +O(\kappa)\\ & = \left(1- \|\tilde{\*A}_{\rm OI}^{(u)} \|_F^2 \|\tilde{\*A}^{(u)}\|_F^2 +2N^2\|\tilde{\*A}_{\rm OI}^{(u)} \|_F^2    \right)  \sum_{i=1}^c\|\mathfrak{q}_i\|_F^2 +O(\kappa)
    \end{aligned}
\end{equation}
We have completed the proof of Theorem~\ref{MainT-3-app}. 
\newpage

\section{Necessary Lemmas}

\subsection{Details for Spectral Contrastive Learning}
\label{sec:loss_deri_app}
\begin{lemma*}
\label{lemma1_app} (Recap of Lemma~\ref{lemma1_main} in the main paper) 
We define $\*f_\*x = \sqrt{\zeta_\*x}\*h_\*w(\*x)$ for some function $\*h_\*w$. Recall $\phi_{u},\phi_{l}$ are two weight coefficients given in Definition~\ref{Def5}, then minimizing the loss function $\mathcal{L}(\*F^{(l)}, \*A^{(l)})$ in Equation~\ref{eq:lmf} is equivalent to minimizing the surrogate loss in Equation~\ref{eq:def_SORL_app}.

\begin{align}
\begin{split}
    \mathcal{L}_\text{\rm labeled}(\*h_\*w) &\triangleq - 2\phi_{l} \mathcal{L}_1(\*h_\*w) 
- 2\phi_{u}  \mathcal{L}_2(\*h_\*w)  +\\& \phi_{l}^2 \mathcal{L}_3(\*h_\*w) + 2\phi_{l} \phi_{u} \mathcal{L}_4(\*h_\*w) +  
\phi_{u}^2 \mathcal{L}_5(\*h_\*w),
\label{eq:def_SORL_app}
\end{split}
\end{align} where
\begin{equation}
\begin{aligned}
    & \mathcal{L}_1(\*h_\*w) = \sum_{i \in \mathcal{Y}}\underset{\substack{\Bar{\*x}_{l} \sim \mathbb{P}_{{l_i}}, \Bar{\*x}'_{l} \sim \mathbb{P}_{{l_i}},\\\*x \sim \mathcal{T}(\cdot|\Bar{\*x}_{l}), \*x^{+} \sim \mathcal{T}(\cdot|\Bar{\*x}'_l)}}{\mathbb{E}}\left[\*h_\*w(\*x)^{\top} {\*h_\*w}\left(\*x^{+}\right)\right] , \\&
    \mathcal{L}_2(\*h_\*w) = \underset{\substack{\Bar{\*x}_{u} \sim \mathbb{P}_{\rm id},\\\*x \sim \mathcal{T}(\cdot|\Bar{\*x}_{u}), \*x^{+} \sim \mathcal{T}(\cdot|\Bar{\*x}_u)}}{\mathbb{E}}
\left[\*h_\*w(\*x)^{\top} {\*h_\*w}\left(\*x^{+}\right)\right], \\&
    \mathcal{L}_3(\*h_\*w) = \sum_{i, j\in \mathcal{Y}}
    \underset{\substack{\Bar{\*x}_l \sim \mathbb{P}_{{l_i}}, \Bar{\*x}'_l \sim \mathbb{P}_{{l_{j}}},\\\*x \sim \mathcal{T}(\cdot|\Bar{\*x}_l), \*x^{-} \sim \mathcal{T}(\cdot|\Bar{\*x}'_l)}}{\mathbb{E}}
\left[\left(\*h_\*w(\*x)^{\top} {\*h_\*w}\left(\*x^{-}\right)\right)^2\right], \\&
    \mathcal{L}_4(\*h_\*w) = \sum_{i \in \mathcal{Y}}\underset{\substack{\Bar{\*x}_l \sim \mathbb{P}_{{l_i}}, \Bar{\*x}_u \sim \mathbb{P}_{\rm id},\\\*x \sim \mathcal{T}(\cdot|\Bar{\*x}_l), \*x^{-} \sim \mathcal{T}(\cdot|\Bar{\*x}_u)}}{\mathbb{E}}
\left[\left(\*h_\*w(\*x)^{\top} {\*h_\*w}\left(\*x^{-}\right)\right)^2\right], \\&
    \mathcal{L}_5(\*h_\*w) = \underset{\substack{\Bar{\*x}_u \sim \mathbb{P}_{\rm id}, \Bar{\*x}'_u \sim \mathbb{P}_{\rm id},\\\*x \sim \mathcal{T}(\cdot|\Bar{\*x}_u), \*x^{-} \sim \mathcal{T}(\cdot|\Bar{\*x}'_u)}}{\mathbb{E}}
\left[\left(\*h_\*w(\*x)^{\top} {\*h_\*w}\left(\*x^{-}\right)\right)^2\right].
\label{eq:def_SORL_detail_app}
\end{aligned}
\end{equation}
\end{lemma*}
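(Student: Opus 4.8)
\textbf{Proof proposal for Lemma~\ref{lemma1_app} (equivalence of matrix factorization and the surrogate contrastive loss).}

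The plan is to expand the Frobenius-norm objective $\mathcal{L}(\*F^{(l)},\*A^{(l)}) = \|\tilde{\*A}^{(l)} - \*F^{(l)}\*F^{(l)\top}\|_F^2$ and match each resulting term with one of the five expectations $\mathcal{L}_1,\dots,\mathcal{L}_5$ after the reparametrization $\*f_\*x = \sqrt{\zeta^{(l)}_\*x}\,\*h_\*w(\*x)$. First I would write $\|\tilde{\*A}^{(l)} - \*F^{(l)}\*F^{(l)\top}\|_F^2 = \|\tilde{\*A}^{(l)}\|_F^2 - 2\operatorname{Tr}(\tilde{\*A}^{(l)}\*F^{(l)}\*F^{(l)\top}) + \|\*F^{(l)}\*F^{(l)\top}\|_F^2$, and drop the constant term $\|\tilde{\*A}^{(l)}\|_F^2$ since it does not affect the minimizer. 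The two remaining pieces are the ``attraction'' term (linear in $\tilde{\*A}^{(l)}$) and the ``repulsion'' term (quadratic in $\*F^{(l)}$).

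For the attraction term, I would use $\tilde{\*A}^{(l)} = \*D^{(l)-1/2}\*A^{(l)}\*D^{(l)-1/2}$ and the reparametrization so that $[\*D^{(l)-1/2}\*F^{(l)}]_\*x = \*h_\*w(\*x)$; then $\operatorname{Tr}(\tilde{\*A}^{(l)}\*F^{(l)}\*F^{(l)\top}) = \sum_{\*x,\*x'}\zeta^{(l)}_{\*x\*x'}\,\*h_\*w(\*x)^\top\*h_\*w(\*x')$. Substituting the Definition~\ref{Def5} decomposition $\zeta^{(l)}_{\*x\*x'} = \phi_u\zeta^{(u)}_{\*x\*x'} + \phi_l\sum_{i\in\mathcal{Y}}\mathbb{E}_{\bar{\*x}_l,\bar{\*x}'_l\sim\mathbb{P}_{l_i}}\mathcal{T}(\*x|\bar{\*x}_l)\mathcal{T}(\*x'|\bar{\*x}'_l)$ and recognizing each double sum over augmented points as an expectation over the generating process, this term becomes exactly $\phi_l\mathcal{L}_1(\*h_\*w) + \phi_u\mathcal{L}_2(\*h_\*w)$, which with the factor $-2$ in front matches the first two terms of Equation~\ref{eq:def_SORL_app}. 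For the repulsion term, $\|\*F^{(l)}\*F^{(l)\top}\|_F^2 = \sum_{\*x,\*x'}(\*f_\*x^\top\*f_{\*x'})^2 = \sum_{\*x,\*x'}\zeta^{(l)}_\*x\zeta^{(l)}_{\*x'}(\*h_\*w(\*x)^\top\*h_\*w(\*x'))^2$; here I would write $\zeta^{(l)}_\*x = \sum_{\*x'}\zeta^{(l)}_{\*x\*x'}$ and again split via Definition~\ref{Def5}. This produces $\zeta^{(l)}_\*x\zeta^{(l)}_{\*x'}$ as a product of two sums each of which is $\phi_l(\text{labeled part}) + \phi_u(\text{unlabeled part})$, so expanding the product gives three groups: $\phi_l^2$ (both factors labeled, yielding $\mathcal{L}_3$), $2\phi_l\phi_u$ (one labeled one unlabeled, yielding $\mathcal{L}_4$), and $\phi_u^2$ (both unlabeled, yielding $\mathcal{L}_5$). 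The key computation is checking that the marginal degree $\zeta^{(l)}_\*x$, when the sum over $\*x'$ is carried out, reduces to a single-sample expectation of the form $\mathbb{E}_{\bar{\*x}}\mathcal{T}(\*x|\bar{\*x})$ (using $\sum_{\*x'}\mathcal{T}(\*x'|\bar{\*x}') = 1$), which is precisely the sampling distribution for the $\*x$-view in each $\mathcal{L}_j$.

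The main obstacle I anticipate is the bookkeeping in the repulsion term: one must correctly track that the degree normalization $\*D^{(l)}$ contributes the weights $\zeta^{(l)}_\*x,\zeta^{(l)}_{\*x'}$ that convert the sum $\sum_{\*x,\*x'}$ into expectations over \emph{independently} drawn negative pairs (hence $\bar{\*x}_l\sim\mathbb{P}_{l_i}$ and $\bar{\*x}'_l\sim\mathbb{P}_{l_j}$ with independent indices $i,j$ in $\mathcal{L}_3$), and that the cross term genuinely carries the coefficient $2\phi_l\phi_u$ by symmetry of the two factors. Once this substitution is done carefully the equivalence is immediate, since adding back the dropped constant $\|\tilde{\*A}^{(l)}\|_F^2$ shows the two objectives differ only by a constant independent of $\*h_\*w$. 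I would relegate the detailed index manipulations to the appendix as indicated, presenting only the term-by-term correspondence in the main argument.
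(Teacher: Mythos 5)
Your proposal is correct and follows essentially the same route as the paper's proof: expand the squared Frobenius norm into a constant plus a linear ``attraction'' term and a quadratic ``repulsion'' term, substitute the Definition~\ref{Def5} decomposition of $\zeta^{(l)}_{\*x\*x'}$ (respectively $\zeta^{(l)}_\*x$) into each, and collect by powers of $\phi_l,\phi_u$, using $\sum_{\*x'}\mathcal{T}(\*x'|\bar{\*x}')=1$ to reduce the degrees to single-sample marginals. The only superficial difference is that you phrase the expansion in trace/matrix notation rather than the paper's entry-wise sums; the term-by-term correspondence and the bookkeeping of the $i,j$ double sum in $\mathcal{L}_3$ and the symmetric cross term $2\phi_l\phi_u\mathcal{L}_4$ are all correctly identified.
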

\begin{proof} We can expand $\mathcal{L}_{\rm labeled}(\*h_\*w)$ and obtain
\begin{equation}
\small
\begin{aligned}
   & \mathcal{L}_{\rm labeled}(\*h_\*w) =\sum_{\*x, \*x^{\prime} \in \mathcal{X}_{\rm id}}\left(\frac{\zeta_{\*x \*x^{\prime}}}{\sqrt{\zeta_\*x \zeta_{\*x^{\prime}}}}-\*f_\*x^{\top} \*f_{\*x^{\prime}}\right)^2 =\\& {\rm const} + 
\sum_{\*x, \*x^{\prime} \in \mathcal{X}_{\rm id}}\left(-2 \zeta_{\*x \*x^{\prime}} \*h_\*w(\*x)^{\top} \*h_\*w\left(\*x^{\prime}\right)+\zeta_\*x \zeta_{\*x^{\prime}}\left(\*h_\*w(\*x)^{\top}\*h_\*w\left(\*x^{\prime}\right)\right)^2\right),
\end{aligned}
\end{equation}

where $\*f_\*x = \sqrt{\zeta_\*x}\*h_\*w(\*x)$ is a re-scaled version of $\*h_\*w(\*x)$.
At a high level, we follow the proof in ~\cite{haochen2021provable}, while the specific form of loss varies with the different definitions of positive/negative pairs. The form of $\mathcal{L}_{\rm labeled}(\*h_\*w)$ is derived from plugging $\zeta_{\*x\*x'}$  and $\zeta_\*x$. 

Recall that $\zeta_{\*x\*x'}$ is defined by
\begin{align*}
\zeta_{\*x\*x'} &= \phi_{l} \sum_{i \in \mathcal{Y}}\mathbb{E}_{\bar{\*x}_{l} \sim {\mathbb{P}_{l_i}}} \mathbb{E}_{\bar{\*x}'_{l} \sim {\mathbb{P}_{l_i}}} \mathcal{T}(\*x | \bar{\*x}_{l}) \mathcal{T}\left(\*x' | \bar{\*x}'_{l}\right)+ \phi_{u} \mathbb{E}_{\bar{\*x}_{u} \sim {\mathbb{P}_{\rm id}}} \mathcal{T}(\*x| \bar{\*x}_{u}) \mathcal{T}\left(\*x'| \bar{\*x}_{u}\right) ,
\end{align*}
and $\zeta_{\*x}$ is given by 
\begin{align*}
\zeta_{\*x } &= \sum_{\*x^{\prime}}\zeta_{\*x\*x'} \\ &=\phi_{l} \sum_{i \in \mathcal{Y}}\mathbb{E}_{\bar{\*x}_{l} \sim {\mathbb{P}_{l_i}}} \mathbb{E}_{\bar{\*x}'_{l} \sim {\mathbb{P}_{l_i}}} \mathcal{T}(\*x | \bar{\*x}_{l}) \sum_{\*x^{\prime}} \mathcal{T}\left(\*x' | \bar{\*x}'_{l}\right)+ \phi_{u} \mathbb{E}_{\bar{\*x}_{u} \sim {\mathbb{P}_{\rm id}}} \mathcal{T}(\*x| \bar{\*x}_{u}) \sum_{x^{\prime}} \mathcal{T}\left(\*x' | \bar{\*x}_{u}\right) \\
&= \phi_{l} \sum_{i \in \mathcal{Y}}\mathbb{E}_{\bar{\*x}_{l} \sim {\mathbb{P}_{l_i}}} \mathcal{T}(\*x | \bar{\*x}_{l}) + \phi_{u} \mathbb{E}_{\bar{\*x}_{u} \sim {\mathbb{P}_{\rm id}}} \mathcal{T}(\*x| \bar{\*x}_{u}). 
\end{align*}

Plugging in $\zeta_{\*x \*x^{\prime}}$ we have, 
\begin{align*}
    &-2 \sum_{\*x, \*x^{\prime} \in \mathcal{X}_{\rm id}}\zeta_{\*x \*x^{\prime}} \*h_\*w(\*x)^{\top} \*h_\*w\left(\*x^{\prime}\right) \\
  = & -2 \sum_{\*x, \*x^{+} \in \mathcal{X}_{\rm id}} \zeta_{\*x \*x^{+}} \*h_\*w(\*x)^{\top} \*h_\*w\left(\*x^{+}\right) 
    \\ = & -2\phi_{l}  \sum_{i \in  \mathcal{Y}}\mathbb{E}_{\bar{\*x}_{l} \sim {\mathbb{P}_{l_i}}} \mathbb{E}_{\bar{\*x}'_{l} \sim {\mathbb{P}_{l_i}}} \sum_{\*x, \*x^{\prime} \in \mathcal{X}_{\rm id}} \mathcal{T}(\*x | \bar{\*x}_{l}) \mathcal{T}\left(\*x' | \bar{\*x}'_{l}\right) \*h_\*w(\*x)^{\top} \*h_\*w\left(\*x^{\prime}\right) \\
    & -2 \phi_{u} \mathbb{E}_{\bar{\*x}_{u} \sim {\mathbb{P}_{\rm id}}} \sum_{\*x, \*x^{\prime}} \mathcal{T}(\*x| \bar{\*x}_{u}) \mathcal{T}\left(\*x'| \bar{\*x}_{u}\right) \*h_\*w(\*x)^{\top} \*h_\*w\left(\*x^{\prime}\right) 
    \\ = & -2\phi_{l}  \sum_{i \in  \mathcal{Y}}\underset{\substack{\bar{\*x}_{l} \sim \mathbb{P}_{{l_i}}, \bar{\*x}'_{l} \sim \mathbb{P}_{{l_i}},\\\*x \sim \mathcal{T}(\cdot|\bar{\*x}_{l}), \*x^{+} \sim \mathcal{T}(\cdot|\bar{\*x}'_l)}}{\mathbb{E}}  \left[\*h_\*w(\*x)^{\top} \*h_\*w\left(\*x^{+}\right)\right] \\
    & - 2\phi_{u} 
    \underset{\substack{\bar{\*x}_{u} \sim \mathbb{P}_{\rm id},\\\*x \sim \mathcal{T}(\cdot|\bar{\*x}_{u}), \*x^{+} \sim \mathcal{T}(\cdot|\bar{\*x}_u)}}{\mathbb{E}}
\left[\*h_\*w(\*x)^{\top} \*h_\*w\left(\*x^{+}\right)\right] \\ =& - 2\phi_{l}  \mathcal{L}_1(\*h_\*w) 
- 2\phi_{u}  \mathcal{L}_2(\*h_\*w).
\end{align*}

Plugging $\zeta_{\*x}$ and $\zeta_{\*x'}$ we have, 
\begin{align*}
    &\sum_{\*x, \*x^{\prime} \in \mathcal{X}_{\rm id}}\zeta_\*x \zeta_{\*x^{\prime}}\left(\*h_\*w(\*x)^{\top}\*h_\*w\left(\*x^{\prime}\right)\right)^2 \\
    = &  \sum_{\*x, \*x^{-} \in \mathcal{X}_{\rm id}}\zeta_\*x \zeta_{\*x^{-}}\left(\*h_\*w(\*x)^{\top}\*h_\*w\left(\*x^{-}\right)\right)^2 \\
    = & \sum_{\*x, \*x^{\prime} \in \mathcal{X}_{\rm id}} \left( \phi_{l} \sum_{i \in  \mathcal{Y}}\mathbb{E}_{\bar{\*x}_{l} \sim {\mathbb{P}_{l_i}}} \mathcal{T}(\*x | \bar{\*x}_{l}) + \phi_{u} \mathbb{E}_{\bar{\*x}_{u} \sim {\mathbb{P}_{\rm id}}} \mathcal{T}(\*x| \bar{\*x}_{u}) \right)  \\&~~~~~~~~~~~\cdot \left(\phi_{l}  \sum_{j \in  \mathcal{Y}}\mathbb{E}_{\bar{\*x}'_{l} \sim {\mathbb{P}_{l_j}}} \mathcal{T}(\*x^{-} | \bar{\*x}'_{l}) + \phi_{u} \mathbb{E}_{\bar{\*x}'_{u} \sim {\mathbb{P}_{\rm id}}} \mathcal{T}(\*x^{-}| \bar{\*x}'_{u}) \right) \left(\*h_\*w(\*x)^{\top}\*h_\*w\left(\*x^{-}\right)\right)^2 \\
    = & \phi_{l} ^ 2 \sum_{\*x, \*x^{-} \in \mathcal{X}_{\rm id}}  \sum_{i \in  \mathcal{Y}}\mathbb{E}_{\bar{\*x}_{l} \sim {\mathbb{P}_{l_i}}} \mathcal{T}(\*x | \bar{\*x}_{l}) \sum_{j \in  \mathcal{Y}}\mathbb{E}_{\bar{\*x}'_{l} \sim {\mathbb{P}_{l_j}}} \mathcal{T}(\*x^{-} | \bar{\*x}'_{l})\left(\*h_\*w(\*x)^{\top}\*h_\*w\left(\*x^{-}\right)\right)^2 \\
    &+ 2\phi_{u} \phi_{l} \sum_{\*x, \*x^{-} \in \mathcal{X}_{\rm id}} \sum_{i \in  \mathcal{Y}}\mathbb{E}_{\bar{\*x}_{l} \sim {\mathbb{P}_{l_i}}} \mathcal{T}(\*x | \bar{\*x}_{l})  \mathbb{E}_{\bar{\*x}_{u} \sim {\mathbb{P}_{\rm id}}} \mathcal{T}(\*x^{-}| \bar{\*x}_{u}) \left(\*h_\*w(\*x)^{\top}\*h_\*w\left(\*x^{-}\right)\right)^2  \\
    &+ \phi_{u}^2 \sum_{\*x, \*x^{-} \in \mathcal{X}_{\rm id}}  \mathbb{E}_{\bar{\*x}_{u} \sim {\mathbb{P}_{\rm id}}} \mathcal{T}(\*x| \bar{\*x}_{u}) \mathbb{E}_{\bar{\*x}'_{u} \sim {\mathbb{P}_{\rm id}}} \mathcal{T}(\*x^{-}| \bar{\*x}'_{u}) \left(\*h_\*w(\*x)^{\top}\*h_\*w\left(\*x^{-}\right)\right)^2 \\
    = & \phi_{l}^2 \sum_{i \in  \mathcal{Y}}\sum_{j \in  \mathcal{Y}}\underset{\substack{\bar{\*x}_l \sim \mathbb{P}_{{l_i}}, \bar{\*x}'_l \sim \mathbb{P}_{{l_j}},\\\*x \sim \mathcal{T}(\cdot|\bar{\*x}_l), \*x^{-} \sim \mathcal{T}(\cdot|\bar{\*x}'_l)}}{\mathbb{E}}
\left[\left(\*h_\*w(\*x)^{\top} \*h_\*w\left(\*x^{-}\right)\right)^2\right] \\
&+ 2\phi_{u}\phi_{l}
    \sum_{i \in  \mathcal{Y}}\underset{\substack{\bar{\*x}_l \sim \mathbb{P}_{{l_i}}, \bar{\*x}_u \sim \mathbb{P}_{\rm id},\\\*x \sim \mathcal{T}(\cdot|\bar{\*x}_l), \*x^{-} \sim \mathcal{T}(\cdot|\bar{\*x}_u)}}{\mathbb{E}}
\left[\left(\*h_\*w(\*x)^{\top} {f}\left(\*x^{-}\right)\right)^2\right] \\ &+ \phi_{u}^2
     \underset{\substack{\bar{\*x}_u \sim \mathbb{P}_{\rm id}, \bar{\*x}'_u \sim \mathbb{P}_{\rm id},\\\*x \sim \mathcal{T}(\cdot|\bar{\*x}_u), \*x^{-} \sim \mathcal{T}(\cdot|\bar{\*x}'_u)}}{\mathbb{E}}
\left[\left(\*h_\*w(\*x)^{\top} {f}\left(\*x^{-}\right)\right)^2\right] 
\\ = & \phi_{l}^2 \mathcal{L}_3(\*h_\*w) + 2\phi_{u}\phi_{l} \mathcal{L}_4(\*h_\*w) + \phi_{u}^2\mathcal{L}_5(\*h_\*w).
\end{align*}
We complete this proof.
\end{proof}

\subsection{Derivation of the Data Representations}
\label{derivation_represe}
For the ID representations, we have the following equation.
\begin{equation}
     \*Z^{(l)} = [\*D^{(l)}]^{-\frac{1}{2}} \*V_k^{(l)} [\*\Sigma_k^{(l)}]^{\frac{1}{2}}.
\end{equation}
It is easy to see that the following equation holds because of the  Eckart–Young–Mirsky theorem~\cite{eckart1936approximation} as we use the top-$k$ SVD components of the matrix $\*A^{(l)}$ to approximate it (low-rank approximation).

For OOD representations, we have the following equation.
\begin{equation}
   \*Z_{\rm ood}^{(l)} = [\*D_{\rm ood}^{(l)}]^{-\frac{1}{2}} \tilde{\*A}_{\rm OI}^{(l)}  \mathbf{V}_k^{(l)}  [\mathbf{\Sigma}_k^{(l)}]^{-\frac{1}{2}}.
\end{equation}
The above result can be obtained by the condition that the optimization problem in Equation~\ref{aoi_optimization} is solved perfectly, i.e., $\tilde{\*A}_{\rm OI}^{(l)} = \*F_{\rm ood}^{(l)} \*F^{(l)\top}$. Note that it is easy to check that $ \*F^{(l)} = \*V_k^{(l)} [\*\Sigma_k^{(l)}]^{\frac{1}{2}}$, if we multiply $ \mathbf{V}_k^{(l)}  [\mathbf{\Sigma}_k^{(l)}]^{-\frac{1}{2}}$ to both sides of the equation $\tilde{\*A}_{\rm OI}^{(l)} = \*F_{\rm ood}^{(l)} \*F^{(l)\top}$, we can get:
\begin{equation}
    \tilde{\*A}_{\rm OI}^{(l)} \mathbf{V}_k^{(l)}  [\mathbf{\Sigma}_k^{(l)}]^{-\frac{1}{2}} = \*F_{\rm ood}^{(l)} [\*\Sigma_k^{(l)}]^{\frac{1}{2}} \*V_k^{(l)\top}  \mathbf{V}_k^{(l)}  [\mathbf{\Sigma}_k^{(l)}]^{-\frac{1}{2}} = \*F_{\rm ood}^{(l)}.
\end{equation}
Note that $\*F_{\rm ood}^{(l)}  =[\*D_{\rm ood}^{(l)}]^{\frac{1}{2}} \*Z_{\rm ood}^{(l)}$ and thus we can get $\*Z_{\rm ood}^{(l)} = [\*D_{\rm ood}^{(l)}]^{-\frac{1}{2}} \tilde{\*A}_{\rm OI}^{(l)}  \mathbf{V}_k^{(l)}  [\mathbf{\Sigma}_k^{(l)}]^{-\frac{1}{2}}$. Therefore we have completed the proof.

\textbf{Design Rationale.} Here we explain the design rationale of the optimization problem in Equation~\ref{aoi_optimization} to get the OOD representation $\*F_{\rm ood}^{(l)}$ and $\*Z_{\rm ood}^{(l)}$, which is inspired by the literature in out-of-sample extension. 

The {out-of-sample extension} is a statistical approach that computes the embeddings of new vertices in a graph with the existing in-sample embeddings and the similarity measurements~\cite{bengio2003out,levin2018out}. The goal is to avoid the repeated computational cost of embedding calculations on a large graph when a new vertex emerges. Most of the current works focused on graph Laplacian embedding~\cite{belkin2003laplacian,trosset2008out,belkin06a,quispe2016extreme,jansen2015scalable}, while a few works relied on the adjacency spectral embedding~\cite{sussman2012consistent,dcc78e4e-176f-32e9-8912-dd9d49a3f7a4} for embedding extension. Our framework is similar to the least-squares optimization approach in out-of-sample extension, which derives the out-of-sample embeddings using the adjacency matrix between the in-sample and out-of-sample data (Please refer to Section 3.1 of~\cite{levin2018out} for detailed derivation).

\subsection{Upper Bound of the Linear Probing Loss}
\label{proof_upper_bound_lp_loss}
\begin{lemma*}[Recap of Lemma~\ref{lemma2_main} in the main paper]\label{lemma2_app}
    Denote the $\*y \in \mathbb{R}^{(N+M)\times 2}$ as a matrix contains the one-hot labels for ID and OOD features in $\*Z_{\rm all}$. We have:
    \begin{equation}
        R(\*Z_{\rm all}) \leq  \frac{2}{N+M}\operatorname{Tr}\left(\left(\mathbf{I}-\*Z_{\rm all}\*Z_{\rm all}^{\dagger}\right) \mathbf{y} \mathbf{y}^\top\right).
    \end{equation}
\end{lemma*}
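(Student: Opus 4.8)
The plan is to bound the 0-1 linear probing error by a squared-loss surrogate and then identify the optimal linear regressor in closed form. First I would recall that the linear probing error $R(\*Z_{\rm all})$ is the minimum over $\boldsymbol{\theta}\in\mathbb{R}^{k\times2}$ of the misclassification rate, and that for any vector $\*z$ with one-hot target row $\*y(\*z)$, a standard argument shows $\mathds{1}[y(\*z)\neq\*g_{\boldsymbol\theta}(\*z)]\le 2\|\*z^\top\boldsymbol\theta-\*y(\*z)\|_2^2$: if the argmax of $\*z^\top\boldsymbol\theta$ picks the wrong coordinate, then the wrong entry exceeds the correct entry, forcing the squared error in those two coordinates to sum to at least $1/2$, so doubling and summing gives the bound. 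Averaging over the $N+M$ rows yields
\begin{equation}
R(\*Z_{\rm all})\;\le\;\min_{\boldsymbol\theta}\frac{2}{N+M}\big\|\*Z_{\rm all}\boldsymbol\theta-\*y\big\|_F^2 .
\end{equation}

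Next I would solve the inner least-squares problem explicitly. The minimizer over $\boldsymbol\theta$ of $\|\*Z_{\rm all}\boldsymbol\theta-\*y\|_F^2$ is $\boldsymbol\theta^\star=\*Z_{\rm all}^{\dagger}\*y$, and the residual is the projection of $\*y$ onto the orthogonal complement of the column space of $\*Z_{\rm all}$, i.e. $\|(\*I-\*Z_{\rm all}\*Z_{\rm all}^{\dagger})\*y\|_F^2$ since $\*Z_{\rm all}\*Z_{\rm all}^{\dagger}$ is the orthogonal projector onto $\mathrm{col}(\*Z_{\rm all})$ and $\*I-\*Z_{\rm all}\*Z_{\rm all}^\dagger$ is idempotent and symmetric. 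Rewriting the Frobenius norm of this residual as a trace, $\|(\*I-\*Z_{\rm all}\*Z_{\rm all}^{\dagger})\*y\|_F^2=\operatorname{Tr}\big((\*I-\*Z_{\rm all}\*Z_{\rm all}^{\dagger})\*y\*y^\top\big)$ using idempotency and the cyclic property of the trace, gives exactly the claimed bound:
\begin{equation}
R(\*Z_{\rm all})\le\frac{2}{N+M}\operatorname{Tr}\!\left(\big(\*I-\*Z_{\rm all}\*Z_{\rm all}^{\dagger}\big)\*y\*y^\top\right).
\end{equation}

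The only delicate point is the pointwise inequality $\mathds{1}[y\neq\*g_{\boldsymbol\theta}(\*z)]\le 2\|\*z^\top\boldsymbol\theta-\*y(\*z)\|_2^2$ in the binary case, and checking that ties in the argmax can be handled consistently (either by a tie-breaking convention or by noting a tie still forces squared error at least $1/2$); everything else is linear algebra. I expect this pointwise surrogate step to be the main obstacle, since it is where the factor of $2$ and the implicit margin argument enter; the projection/trace manipulation and the Moore–Penrose characterization of the least-squares solution are routine.
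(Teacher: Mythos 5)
Your proposal is correct and follows the same route the paper takes: bound the 0--1 error by $\tfrac{2}{N+M}\min_{\boldsymbol\theta}\|\*y-\*Z_{\rm all}\boldsymbol\theta\|_F^2$, plug in the least-squares minimizer $\boldsymbol\theta^\star=\*Z_{\rm all}^\dagger\*y$, use idempotency/symmetry of $\*I-\*Z_{\rm all}\*Z_{\rm all}^\dagger$, and rewrite as a trace. The only difference is that where the paper cites Lemma 5.1 of Sun et al. for the first inequality, you supply the direct margin argument (the binary misclassification at a point forces squared residual $\ge 1/2$, hence the factor $2$), which is sound and makes the proof self-contained.
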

\begin{proof}
    Recall the definition of the linear probing loss is defined as follows:
    \begin{equation}
   R(\*Z_{\rm all}) \triangleq \min _{\boldsymbol{\theta} \in \mathbb{R}^{k \times 2 }} \mathbb{E}_{\*z \in \*Z_{\rm all}} \mathds{1}[y(\*z) \neq \*g_{\boldsymbol{\theta}}(\*z)].
\end{equation}
According to Lemma 5.1 in~\cite{sun2023when}, we can get the upper bound of $R(\*Z_{\rm all})$ as follows:
\begin{equation}
   R(\*Z_{\rm all}) \leq  \frac{2}{N+M} \min _{\boldsymbol{\theta} \in \mathbb{R}^{k \times 2 }} \|\*y - \*Z_{\rm all}\boldsymbol{\theta} \|_F^2. 
\end{equation}
Given the fact that the closed-form solution of the above minimization problem is $\*Z_{\rm all}^{\dagger}\*y$ where $\*Z_{\rm all}^{\dagger}$ denotes the Moore-Penrose inverse of the feature matrix $\*Z_{\rm all}$, we can rewrite the right side of the above inequality as follows:
\begin{equation}
    \min _{\boldsymbol{\theta} \in \mathbb{R}^{k \times 2 }} \|\*y - \*Z_{\rm all}\boldsymbol{\theta} \|_F^2 = \|\*y - \*Z_{\rm all}\*Z_{\rm all}^{\dagger}\*y\|_F^2.
\end{equation}
Simplify it further, we have
\begin{equation}
     \|\*y - \*Z_{\rm all}\*Z_{\rm all}^{\dagger}\*y\|_F^2 = \|(\mathbf{I}-\*Z_{\rm all}\*Z_{\rm all}^{\dagger}) \mathbf{y} \|_F^2 = \operatorname{Tr}\left(\left(\mathbf{I}-\*Z_{\rm all}\*Z_{\rm all}^{\dagger}\right) \mathbf{y} \mathbf{y}^\top\right).
\end{equation}
Therefore we finished the proof.
\end{proof}

\subsection{Necessary Lemmas for Theorem~\ref{MainT-2-app}}

\begin{lemma*}\label{lemma3_app} Following the definitions in Equations~\ref{eq:lp_loss} and~\ref{eq28_app}, the error difference for linear probing between the unlabeled and labeled case (with a perturbation magnitude $ \phi_l$) is upper bounded by the following term:
\begin{equation}
    \mathcal{G}(\phi_l) \geq \frac{2}{3(N+M)} \phi_l \cdot \max \lambda\left(\*Z_{\rm all}(\phi_l)^{\top} \*Z_{\rm all}(\phi_l)\right) \cdot \operatorname{Tr}\left(\left(\*Z_{\rm all}(\phi_l) \*Z_{\rm all}(\phi_l)^{\top}\right)^{\prime}\*y\*y^{\top}\right)\bigg|_{\phi_l=0},
\end{equation}
 where $\max \lambda\left(\*Z_{\rm all}(\phi_l)^{\top} \*Z_{\rm all}(\phi_l)\right)$ means the maximum eigenvalue of the matrix $\*Z_{\rm all}(\phi_l)^{\top} \*Z_{\rm all}(\phi_l)$.
\end{lemma*}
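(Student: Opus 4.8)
The plan is to unfold $\mathcal{G}(\phi_l)=\overline{R}(\*Z_{\rm all}(0))-\overline{R}(\*Z_{\rm all}(\phi_l))$ using the closed form of $\overline{R}$ from Lemma~\ref{lemma2_main}, namely $\overline{R}(\*Z_{\rm all})=\frac{2}{N+M}\operatorname{Tr}\!\big((\mathbf I-\*Z_{\rm all}\*Z_{\rm all}^{\dagger})\,\*y\*y^{\top}\big)$. The term $\operatorname{Tr}(\*y\*y^{\top})$ is common to both and cancels, leaving $\mathcal{G}(\phi_l)=\frac{2}{N+M}\big(h(\phi_l)-h(0)\big)$ with $h(\phi_l):=\operatorname{Tr}\!\big(\*P(\phi_l)\,\*y\*y^{\top}\big)$ and $\*P(\phi_l):=\*Z_{\rm all}(\phi_l)\*Z_{\rm all}(\phi_l)^{\dagger}$ the orthogonal projector onto $\mathrm{col}(\*Z_{\rm all}(\phi_l))$. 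So the lemma is really a statement about how fast the ``label energy'' $h$ captured by the learned subspace grows when the labeling perturbation is switched on; because $\*Z_{\rm all}(\phi_l)$ is an explicit function of $\tilde{\*A}(\phi_l)$ and $\tilde{\*A}_{\rm OI}(\phi_l)$ (Equation~\ref{eq32_app}), $h$ is smooth on a neighborhood of $0$ provided $\*Z_{\rm all}(\phi_l)$ keeps locally constant column rank, a mild regularity condition already implicit in this framework.

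\textbf{First-order reduction.} I would then Taylor-expand $h$ at $\phi_l=0$: $h(\phi_l)-h(0)=\phi_l\,h'(0)+\tfrac12\phi_l^2 h''(\xi)$ for some $\xi\in(0,\phi_l)$. Since $\phi_l$ is a small perturbation magnitude and $h''$ is controlled by the bounded spectral quantities of $\*Z_{\rm all}$, a conservative estimate of the quadratic remainder absorbs it into a fixed fraction of the linear term and gives $h(\phi_l)-h(0)\ge\tfrac23\,\phi_l\,h'(0)$ --- this is where the constant $2/3$ comes from (equivalently, it is the step that lets the main-text proof replace $\mathcal{G}(\phi_l)$ by a conservative multiple of $-\overline{R}'(0)\,\phi_l$). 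It then remains to lower-bound $h'(0)=\operatorname{Tr}\!\big(\*P'(0)\,\*y\*y^{\top}\big)$ by the quantity appearing in the lemma.

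\textbf{From the projector to the Gram matrix, and the main obstacle.} For $h'(0)$ I would use the projector-derivative identity $\*P'=(\mathbf I-\*P)\,\*Z_{\rm all}'\,\*Z_{\rm all}^{\dagger}+\big[(\mathbf I-\*P)\,\*Z_{\rm all}'\,\*Z_{\rm all}^{\dagger}\big]^{\top}$ (valid under the constant-rank condition), together with the identities $\*Z_{\rm all}^{\dagger}=\*Z_{\rm all}^{\top}(\*Z_{\rm all}\*Z_{\rm all}^{\top})^{\dagger}$ and $\*Z_{\rm all}^{\top}=\*Z_{\rm all}^{\dagger}(\*Z_{\rm all}\*Z_{\rm all}^{\top})$, to rewrite $\operatorname{Tr}(\*P'\,\*y\*y^{\top})$ in terms of $(\*Z_{\rm all}\*Z_{\rm all}^{\top})'=\*Z_{\rm all}'\*Z_{\rm all}^{\top}+\*Z_{\rm all}(\*Z_{\rm all}')^{\top}$; swapping a $\*Z_{\rm all}^{\dagger}$ for a $\*Z_{\rm all}^{\top}$ inside the trace costs a factor of at most $\max\lambda(\*Z_{\rm all}^{\top}\*Z_{\rm all})=\|\*Z_{\rm all}\*Z_{\rm all}^{\top}\|_{\mathrm{op}}$, and collecting the constants yields exactly $\mathcal{G}(\phi_l)\ge\frac{2}{3(N+M)}\phi_l\cdot\max\lambda(\*Z_{\rm all}(\phi_l)^{\top}\*Z_{\rm all}(\phi_l))\cdot\operatorname{Tr}\!\big((\*Z_{\rm all}\*Z_{\rm all}^{\top})'\,\*y\*y^{\top}\big)\big|_{\phi_l=0}$. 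The reason for passing to $\*Z_{\rm all}\*Z_{\rm all}^{\top}$ is that the later steps of the main proof can analyze its derivative through eigenvalue/eigenprojector perturbation of $\tilde{\*A}$, whereas the projector $\*Z_{\rm all}\*Z_{\rm all}^{\dagger}$ is not directly tractable that way. I expect this conversion to be the main obstacle: it requires handling the $\phi_l$-dependence of the Moore--Penrose inverse, carefully tracking the $(\mathbf I-\*P)$ factors and the sign of the resulting trace so the inequality points the right way, and checking the constant-rank/differentiability hypotheses near $\phi_l=0$ so that $\*P'(0)$ and the remainder term exist; the remaining manipulations (the cancellation in the setup and the trace algebra) are routine.
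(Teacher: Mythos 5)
Your high-level scaffold is in the right place: the paper does start from $\mathcal{G}(\phi_l) = -\left.\tfrac{\mathrm{d}\overline{R}(\*Z_{\rm all}(\phi_l))}{\mathrm{d}\phi_l}\right|_{\phi_l=0}\!\cdot\phi_l$, does compute the derivative of the projector $\*P=\*Z_{\rm all}\*Z_{\rm all}^{\dagger}$ via $\*P'=(\*I-\*P)\*Z_{\rm all}'\*Z_{\rm all}^{\dagger}+\bigl[(\*I-\*P)\*Z_{\rm all}'\*Z_{\rm all}^{\dagger}\bigr]^{\top}$, and does pass from $\*Z_{\rm all}^{\dagger}$ to $\*Z_{\rm all}^{\top}$ to reach $(\*Z_{\rm all}\*Z_{\rm all}^{\top})'$ at the cost of a spectral factor. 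But your explanation of where the $\tfrac{2}{3}$ comes from is wrong, and it is not a cosmetic mismatch: the paper never Taylor-expands $h$ to second order and never absorbs a remainder $\tfrac12\phi_l^2h''(\xi)$ into a fraction of the linear term. It treats $\mathcal{G}(\phi_l)=\phi_l\cdot(-\overline{R}'(0))$ as an exact first-order linearization (no remainder), so the constant $\tfrac{2}{3}$ is not available from that source.

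The actual source of $\tfrac{2}{3}$ is a purely matrix-algebraic trace bound. After writing
$-\overline{R}'(0)=\tfrac{2}{N+M}\operatorname{Tr}\!\bigl((\*I-\*P)(\*Z_{\rm all}'\*Z_{\rm all}^{\dagger}+(\*Z_{\rm all}'\*Z_{\rm all}^{\dagger})^{\top})\*y\*y^{\top}\bigr)$,
the paper replaces $\*I-\*P$ by $\tfrac{2}{3}\*I-\*P$, whose eigenvalues are exactly $\tfrac{2}{3}$ (on $\operatorname{col}(\*Z_{\rm all})^{\perp}$) and $-\tfrac{1}{3}$ (on $\operatorname{col}(\*Z_{\rm all})$), and then invokes a Baksalary-type trace inequality: for a symmetric matrix $\*H$ and suitable $\*M$, $\operatorname{Tr}(\*H\*M)\ge\bigl(\nu(\*H)+\pi(\*H)\bigr)\operatorname{Tr}(\*M)$, where $\nu,\pi$ are the smallest negative and smallest positive eigenvalues of $\*H$. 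Applied to $\*H=\tfrac{2}{3}\*I-\*P$, this gives $\nu+\pi=-\tfrac{1}{3}+\tfrac{2}{3}=\tfrac{1}{3}$, which combines with the prefactor $\tfrac{2}{N+M}$ to give exactly $\tfrac{2}{3(N+M)}$. The remaining pseudoinverse-to-transpose swap is then a single trace bound $\operatorname{Tr}(\*H\*M)\ge\min\lambda(\*H)\operatorname{Tr}(\*M)$ with $\*H=(\*Z_{\rm all}^{\top}\*Z_{\rm all})^{-1}$, not a free-floating ``costs a factor of $\max\lambda$'' step. So while your sketch identifies the right objects, it misidentifies the key inequality, would not produce the $\tfrac{2}{3}$ as written, and leaves the direction of the decisive $\operatorname{Tr}(\*P'\*y\*y^{\top})\ge(\cdot)\cdot\operatorname{Tr}\bigl((\*Z_{\rm all}\*Z_{\rm all}^{\top})'\*y\*y^{\top}\bigr)$ step unjustified --- which is precisely the part the paper devotes the inequality chain to.
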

\begin{proof}[Proof of Lemma \ref{lemma3_app}] From the definition of the derivative, we know that $ \mathcal{G}(\phi_l)=-\left.\frac{\mathrm{d} \overline{R}(\*Z_{\rm all}(\phi_l))}{\mathrm{d} \phi_l}\right|_{\phi_l=0} \cdot  \phi_l$, we then investigate the key component $-\left.\frac{\mathrm{d} \overline{R}(\*Z_{\rm all}(\phi_l))}{\mathrm{d} \phi_l}\right|_{\phi_l=0}$ as follows. Recall that we have 
\begin{equation}
\small
\begin{aligned}
    &-\left.\frac{\mathrm{d} \overline{R}(\*Z_{\rm all}(\phi_l))}{\mathrm{d} \phi_l}\right|_{\phi_l=0}=   {\rm Tr} \left.\left(  \frac{\rm d\*Z_{\rm all}(\phi_l)\*Z_{\rm all}(\phi_l)^{\dagger}  }{\rm d \phi_l} \*y\*y^{\top}\right)  \right|_{\phi_l=0} \cdot \frac{2\phi_l}{N+M}   \\& = \frac{2\phi_l}{N+M} \cdot    \rm Tr\bigg(\bigg[ \*Z_{\rm all}'(\phi_l) \*Z_{\rm all}(\phi_l)^{\dagger}     +  \*Z_{\rm all}(\phi_l)\bigg(-\left( \*Z_{\rm all}(\phi_l)^\top  \*Z_{\rm all}(\phi_l)\right)^{-1}\bigg( \*Z_{\rm all}'(\phi_l)^\top \*Z_{\rm all}(\phi_l)\\&  \quad \quad \quad \quad\quad \quad +\*Z_{\rm all}(\phi_l)^\top \*Z_{\rm all}'(\phi_l)\bigg)\left(\*Z_{\rm all}(\phi_l)^\top \*Z_{\rm all}(\phi_l)\right)^{-1} \*Z_{\rm all}(\phi_l)^\top +\left(\*Z_{\rm all}(\phi_l)^\top \*Z_{\rm all}(\phi_l)\right)^{-1} \*Z_{\rm all}'(\phi_l)^\top\bigg) \bigg] \*y\*y^{\top}\bigg)\bigg|_{\phi_l=0} \\& 
    =   \frac{2\phi_l}{N+M}\rm Tr\bigg( \bigg[ \*Z_{\rm all}'(\phi_l) \*Z_{\rm all}(\phi_l)^{\dagger}-\left(\*Z_{\rm all}(\phi_l)^{\dagger}\right)^\top \*Z_{\rm all}'(\phi_l)^\top \*Z_{\rm all}(\phi_l) \*Z_{\rm all}(\phi_l)^{\dagger}\\& \quad \quad \quad \quad\quad \quad-\left(\*Z_{\rm all}(\phi_l)^{\dagger}\right)^\top \*Z_{\rm all}(\phi_l)^\top \*Z_{\rm all}'(\phi_l) \*Z_{\rm all}(\phi_l)^{\dagger}+\left(\*Z_{\rm all}(\phi_l)^{\dagger}\right)^\top \*Z_{\rm all}'(\phi_l)^\top \bigg]\*y\*y^{\top} \bigg)\bigg|_{\phi_l=0}\\& 
    = \frac{2\phi_l}{N+M}\rm Tr\bigg(
\bigg[\*Z_{\rm all}'(\phi_l) \*Z_{\rm all}(\phi_l)^{\dagger}-\left(\*Z_{\rm all}(\phi_l)^{\dagger}\right)^\top\*Z_{\rm all}'(\phi_l)^\top \*Z_{\rm all}(\phi_l)\*Z_{\rm all}(\phi_l)^{\dagger}\\& \quad \quad \quad \quad\quad \quad-(\*Z_{\rm all}(\phi_l)\*Z_{\rm all}(\phi_l)^{\dagger})^\top \*Z_{\rm all}'(\phi_l) \*Z_{\rm all}(\phi_l)^{\dagger}+\left(\*Z_{\rm all}(\phi_l)^{\dagger}\right)^\top \*Z_{\rm all}'(\phi_l)^\top \bigg] \*y\*y^{\top}
 \bigg) \bigg|_{\phi_l=0} \\& = \frac{2\phi_l}{N+M}
 \rm Tr\left( \bigg[(\*I-\*Z_{\rm all}(\phi_l)\*Z_{\rm all}(\phi_l)^{\dagger})^\top \*Z_{\rm all}'(\phi_l) \*Z_{\rm all}(\phi_l)^{\dagger}+\left(\*Z_{\rm all}(\phi_l)^{\dagger}\right)^\top \*Z_{\rm all}'(\phi_l)^\top(\*I-\*Z_{\rm all}(\phi_l)\*Z_{\rm all}(\phi_l)^{\dagger})\bigg] \*y\*y^{\top} \right) \bigg|_{\phi_l=0}   \\& 
 = \frac{2\phi_l}{N+M} \rm Tr\left( \bigg[(\*I-\*Z_{\rm all}(\phi_l)\*Z_{\rm all}(\phi_l)^{\dagger}) \*Z_{\rm all}'(\phi_l) \*Z_{\rm all}(\phi_l)^{\dagger}+\left(\*Z_{\rm all}(\phi_l)^{\dagger}\right)^\top \*Z_{\rm all}'(\phi_l)^\top(\*I-\*Z_{\rm all}(\phi_l)\*Z_{\rm all}(\phi_l)^{\dagger})^\top \bigg]   \*y\*y^{\top}\right)  \bigg|_{\phi_l=0}  \\& 
=  \frac{2\phi_l}{N+M} \rm Tr\left( \bigg[(\*I-\*Z_{\rm all}(\phi_l)\*Z_{\rm all}(\phi_l)^{\dagger}) \*Z_{\rm all}'(\phi_l) \*Z_{\rm all}(\phi_l)^{\dagger}+\left[(\*I-\*Z_{\rm all}(\phi_l)\*Z_{\rm all}(\phi_l)^{\dagger})  \*Z_{\rm all}'(\phi_l) \*Z_{\rm all}(\phi_l)^{\dagger}\right]^\top\bigg] \*y\*y^{\top} \right)  \bigg|_{\phi_l=0}   
 \\& 
  = \frac{2\phi_l}{N+M} \rm Tr\left(  (\*I-\*Z_{\rm all}(\phi_l)\*Z_{\rm all}(\phi_l)^{\dagger}) \cdot \left( \*Z_{\rm all}'(\phi_l) \*Z_{\rm all}(\phi_l)^{\dagger} + (\*Z_{\rm all}'(\phi_l) \*Z_{\rm all}(\phi_l)^{\dagger})^\top \right)  \*y\*y^{\top} \right)  \bigg|_{\phi_l=0}   \\& 
  \geq\frac{2\phi_l}{N+M} \rm Tr\left(  (\frac{2}{3}\*I-\*Z_{\rm all}(\phi_l)\*Z_{\rm all}(\phi_l)^{\dagger}) \cdot \left( \*Z_{\rm all}'(\phi_l) \*Z_{\rm all}(\phi_l)^{\dagger} + (\*Z_{\rm all}'(\phi_l) \*Z_{\rm all}(\phi_l)^{\dagger})^\top \right)   \*y\*y^{\top}\right)  \bigg|_{\phi_l=0}  
\\ & 
  \geq \frac{2\phi_l}{N+M}  \left[\nu(\frac{2}{3}\*I-\*Z_{\rm all}(\phi_l)\*Z_{\rm all}(\phi_l)^{\dagger})+\pi(\frac{2}{3}\*I-\*Z_{\rm all}(\phi_l)\*Z_{\rm all}(\phi_l)^{\dagger})\right] \cdot \\&  \quad \quad \quad \quad\quad \quad \rm Tr\bigg(\bigg( \*Z_{\rm all}'(\phi_l) (\*Z_{\rm all}(\phi_l)^\top\*Z_{\rm all}(\phi_l))^{-1}\*Z_{\rm all}(\phi_l)^\top  + \*Z_{\rm all}(\phi_l) (\*Z_{\rm all}(\phi_l)^\top\*Z_{\rm all}(\phi_l))^{-1}\*Z_{\rm all}'(\phi_l)^\top \bigg)\*y\*y^{\top}\bigg) \bigg|_{\phi_l=0}\\
 & =   \frac{2\phi_l}{N+M}  \frac{1}{3} \cdot \rm Tr\bigg( \bigg((\*Z_{\rm all}(\phi_l)^\top\*Z_{\rm all}(\phi_l))^{-1}\*Z_{\rm all}(\phi_l)^\top  \*Z_{\rm all}'(\phi_l) +  (\*Z_{\rm all}(\phi_l)^\top\*Z_{\rm all}(\phi_l))^{-1} \*Z_{\rm all}'(\phi_l)^\top\*Z_{\rm all}(\phi_l)\bigg) \*y\*y^{\top}\bigg)\bigg|_{\phi_l=0}
 \\& 
 \geq   \frac{2\phi_l}{N+M}\frac{1}{3}  \min \lambda((\*Z_{\rm all}(\phi_l)^\top\*Z_{\rm all}(\phi_l))^{-1}) \cdot   \rm Tr\bigg(\bigg(  \*Z_{\rm all}(\phi_l)^\top \*Z_{\rm all}'(\phi_l) + \*Z_{\rm all}'(\phi_l)^\top\*Z_{\rm all}(\phi_l)\bigg)\*y\*y^{\top}\bigg)\bigg|_{\phi_l=0}
 \\
 &   =  \frac{2\phi_l}{3(N+M)} \max \lambda(
  \*Z_{\rm all}(\phi_l)^\top\*Z_{\rm all}(\phi_l)) \cdot   \rm Tr\bigg(  \bigg((\*Z_{\rm all}(\phi_l)\*Z_{\rm all}(\phi_l)^\top)'  \bigg) \*y\*y^{\top}\bigg)\bigg|_{\phi_l=0}\,
\end{aligned}
\end{equation}
where $\nu(\frac{2}{3}\*I-\*Z_{\rm all}(\phi_l)\*Z_{\rm all}(\phi_l)^{\dagger})$ and $\pi(\frac{2}{3}\*I-\*Z_{\rm all}(\phi_l)\*Z_{\rm all}(\phi_l)^{\dagger})$ denote the
smallest negative eigenvalues and the smallest positive eigenvalues\footnote{The value of the two terms are $\frac{2}{3}$ and $-\frac{1}{3}$ according to \url{https://math.stackexchange.com/questions/188129}} of the matrix $\frac{2}{3}\*I-\*Z_{\rm all}(\phi_l)\*Z_{\rm all}(\phi_l)^{\dagger}$. 

The first seven equations calculate the derivative w.r.t.  the perturbation magnitude $\phi_l$ and use the cyclic property of the trace operator. The first inequality is true because we have 
\begin{equation}
\scriptsize
    \frac{1}{3}\rm Tr((\*Z'_{\rm all}(\phi_l) \*Z_{\rm all}(\phi_l)^{\dagger} + (\*Z'_{\rm all}(\phi_l) \*Z_{\rm all}(\phi_l)^{\dagger})^\top )\*y\*y^{\top})\big|_{\phi_l=0} \geq \frac{1}{3}\max \lambda(
  \*Z_{\rm all}(\phi_l)^\top\*Z_{\rm all}(\phi_l)) \max \lambda(\*y\*y^{\top} ) \cdot   \rm Tr(  (\*Z_{\rm all}(\phi_l)\*Z_{\rm all}(\phi_l)^\top)'  )\big|_{\phi_l=0}.
\end{equation}
 Since we know $\max \lambda(
  \*Z_{\rm all}(\phi_l)^\top\*Z_{\rm all}(\phi_l))\big|_{\phi_l=0} > 0$ (Lemma~\ref{lemma4_app}) and $\max \lambda(\*y\*y^{\top})>0$, and $\rm Tr(  (\*Z_{\rm all}(\phi_l)\*Z_{\rm all}(\phi_l)^\top)'  )\big|_{\phi_l=0}$ is lower bounded and can be large than 0 (Lemmas~\ref{lemma5_app} and~\ref{lemma6_app}).  The second inequality holds because of the main theorem in~\cite{baksalary1992inequality}. The third inequality holds because the inequality for the trace of matrix product of two square matrices~\cite{362841}, i.e., $\rm Tr(\*H \*M) \geq \min \lambda(\*H) \cdot \rm Tr( \*M)$.
\end{proof}
$~~$

\begin{lemma*} Suppose the perturbation magnitude $\phi_l$ is 0, denote the input feature of linear probing as $\*Z_{\rm all}(0)$, then we have that:
 \begin{equation}
        \max \lambda\left(\*Z_{\rm all}(\phi_l)^{\top} \*Z_{\rm all}(\phi_l)\right)\bigg|_{\phi_l=0} >  0,
    \end{equation}
here $\max \lambda\left(\*Z_{\rm all}(\phi_l)^{\top} \*Z_{\rm all}(\phi_l)\right)$ means the maximum eigenvalue of the matrix $\*Z_{\rm all}(\phi_l)^{\top} \*Z_{\rm all}(\phi_l)$. 
\label{lemma4_app}
\end{lemma*}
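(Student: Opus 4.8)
The plan is straightforward: $\*Z_{\rm all}(0)^{\top}\*Z_{\rm all}(0)\in\mathbb{R}^{k\times k}$ is a Gram matrix, hence positive semi-definite, so all of its eigenvalues are nonnegative and its largest eigenvalue satisfies $\max\lambda\big(\*Z_{\rm all}(0)^{\top}\*Z_{\rm all}(0)\big)\ge\frac{1}{k}\operatorname{Tr}\big(\*Z_{\rm all}(0)^{\top}\*Z_{\rm all}(0)\big)=\frac{1}{k}\|\*Z_{\rm all}(0)\|_F^2$. It therefore suffices to show that the stacked representation matrix $\*Z_{\rm all}(0)=\*Z_{\rm all}^{(u)}$ has strictly positive Frobenius norm.

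First I would restrict attention to the ID block: using the block form $\*Z_{\rm all}(0)=[\*Z(0)^{\top},\*Z_{\rm ood}(0)^{\top}]^{\top}$ from Equation~\ref{eq32_app}, we get $\|\*Z_{\rm all}(0)\|_F^2\ge\|\*Z(0)\|_F^2$. Next I would substitute the closed-form ID representation at $\phi_l=0$; with the simplification adopted in Step~1 of the proof of Theorem~\ref{MainT-2-app} (which strips off the diagonal normalizer $\*D$), this is $\*Z(0)=\*V_k^{(u)}[\*\Sigma_k^{(u)}]^{1/2}$. Because the columns of $\*V_k^{(u)}$ are orthonormal,
\[
\|\*Z(0)\|_F^2=\operatorname{Tr}\!\big([\*\Sigma_k^{(u)}]^{1/2}\*V_k^{(u)\top}\*V_k^{(u)}[\*\Sigma_k^{(u)}]^{1/2}\big)=\operatorname{Tr}\big(\*\Sigma_k^{(u)}\big)=\sum_{j=1}^{k}\boldsymbol{\lambda}_j^{(u)},
\]
the sum of the $k$ largest singular values of $\tilde{\*A}^{(u)}$. (If one prefers to keep the factor $[\*D^{(u)}]^{-1/2}$, the same conclusion follows since $[\*D^{(u)}]^{-1}\succ 0$ and $\*V_k^{(u)}[\*\Sigma_k^{(u)}]^{1/2}$ has full column rank.)

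It then remains to see that $\sum_{j=1}^{k}\boldsymbol{\lambda}_j^{(u)}>0$, which I would get from the fact that $\tilde{\*A}^{(u)}$ is a nonzero positive semi-definite matrix. Indeed, by Definition~\ref{Def4} the entries $\zeta^{(u)}_{\*x\*x'}=\mathbb{E}_{\bar{\*x}\sim\mathbb{P}_{\rm id}}\mathcal{T}(\*x\mid\bar{\*x})\mathcal{T}(\*x'\mid\bar{\*x})$ make $\*A^{(u)}=\*B^{\top}\*B\succeq 0$ for $\*B_{\bar{\*x},\*x}=\sqrt{\mathbb{P}_{\rm id}(\bar{\*x})}\,\mathcal{T}(\*x\mid\bar{\*x})$, whence $\tilde{\*A}^{(u)}=[\*D^{(u)}]^{-1/2}\*A^{(u)}[\*D^{(u)}]^{-1/2}\succeq 0$; and $\operatorname{Tr}(\tilde{\*A}^{(u)})=\sum_{\*x}\zeta^{(u)}_{\*x\*x}/\zeta^{(u)}_{\*x}>0$ since each $\zeta^{(u)}_{\*x\*x}=\mathbb{E}_{\bar{\*x}}\mathcal{T}(\*x\mid\bar{\*x})^2>0$. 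Hence $\boldsymbol{\lambda}_1^{(u)}>0$ and a fortiori $\sum_{j=1}^{k}\boldsymbol{\lambda}_j^{(u)}\ge\boldsymbol{\lambda}_1^{(u)}>0$. Chaining the inequalities yields $\max\lambda\big(\*Z_{\rm all}(0)^{\top}\*Z_{\rm all}(0)\big)\ge\frac{1}{k}\sum_{j=1}^{k}\boldsymbol{\lambda}_j^{(u)}>0$.

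There is no genuine obstacle here; the lemma is a positivity sanity check that keeps the constant extracted in Lemma~\ref{lemma3_app} strictly positive rather than zero. The only point worth a sentence is that the top-$k$ eigenvalues of $\tilde{\*A}^{(u)}$ are actually positive (equivalently $\operatorname{rank}\tilde{\*A}^{(u)}\ge k$); this is ensured by the positive-semidefinite, positive-trace structure above and is also implicit in Assumption~\ref{Ass1}, whose spectral-gap hypothesis $\boldsymbol{\lambda}_k^{(u)}\ge\tau\boldsymbol{\lambda}_{k+1}^{(u)}$ with $\tau>k\ge 1$ presumes a nontrivial $k$-dimensional leading eigenspace.
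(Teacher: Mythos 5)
Your proof is correct, but it takes a different route from the paper's. The paper argues via rank: it asserts that $\*Z(0)\in\mathbb{R}^{N\times k}$ has full column rank $k$, that appending the $M$ rows of $\*Z_{\rm ood}(0)$ cannot decrease the column rank, hence $\*Z_{\rm all}(0)$ has full column rank and $\*Z_{\rm all}(0)^{\top}\*Z_{\rm all}(0)$ is positive \emph{definite}, so every eigenvalue (in particular the largest) is strictly positive. You instead lower-bound the top eigenvalue by $\tfrac{1}{k}\operatorname{Tr}(\*Z_{\rm all}(0)^{\top}\*Z_{\rm all}(0))=\tfrac{1}{k}\|\*Z_{\rm all}(0)\|_F^2$ and reduce the claim to $\*Z_{\rm all}(0)\neq 0$, which you verify through $\operatorname{Tr}(\*\Sigma_k^{(u)})=\sum_{j\le k}\boldsymbol{\lambda}_j^{(u)}>0$. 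Your argument is more elementary and more robust: it needs only that the representation matrix is nonzero (equivalently $\boldsymbol{\lambda}_1^{(u)}>0$), whereas the paper's rank argument implicitly requires all top-$k$ singular values of $\tilde{\*A}^{(u)}$ to be strictly positive, a point the paper does not justify and you explicitly supply via the positive-semidefinite, positive-trace structure of $\*A^{(u)}$. What the paper's stronger route buys is positive definiteness, i.e.\ invertibility of $\*Z_{\rm all}(0)^{\top}\*Z_{\rm all}(0)$, which is tacitly used in the proof of Lemma~\ref{lemma3_app} where $(\*Z_{\rm all}(\phi_l)^{\top}\*Z_{\rm all}(\phi_l))^{-1}$ appears; your proof establishes exactly the stated lemma but not that invertibility. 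As written, the lemma only claims positivity of the maximum eigenvalue, so your proposal fully discharges it.
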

\begin{proof}[Proof of Lemma \ref{lemma4_app}] Since we know that the column rank of the ID feature matrix $\*Z(0) \in \mathbb{R}^{N\times k}$ is $k$, then when adding $M$ rows of OOD features  $\*Z_{\rm ood}(0)$ to get $\*Z_{\rm all}(0)$ will not increase the column rank. Therefore, the rank of matrix $\*Z_{\rm all}(0)$ is equal to $k$, which means $\*Z_{\rm all}(0)$ has a full column rank. Thus, the matrix $\*Z_{\rm all}(0)^{\top} \*Z_{\rm all}(0)$  is positive definite~\footnote{Proof idea is shown here~\url{https://math.stackexchange.com/questions/2202242}}, which means the eigenvalues of $\*Z_{\rm all}(0)^{\top} \*Z_{\rm all}(0)$  are all greater than 0 and the lemma is proved. 
\end{proof}
$~~$

\begin{lemma*}
\label{lemma5_app} If Assumption~\ref{Ass2} holds, the lower bound for  $  \operatorname{Tr}\left(\left(\mathbf{V}_{k}(\phi_l) \boldsymbol{\Sigma}_{k}(\phi_l) \mathbf{V}_{k}(\phi_l)^{\top}\right)^{\prime}\right)  \bigg|_{\phi_l=0}$ is given as follows:
\begin{equation}
      \operatorname{Tr}\left(\left(\mathbf{V}_k(\phi_l) \boldsymbol{\Sigma}_{k}(\phi_l) \mathbf{V}_{k}(\phi_l)^{\top}\right)^{\prime}\right)  \bigg|_{\phi_l=0}\geq \sum_{i=1}^c \|\mathfrak{q}_i\|_F^2-
2r^2  \sum_{i=1}^c\|\mathfrak{q}_{i} \|_1,
\end{equation}
where $\mathfrak{q} \in \mathbb{R}^{N\times c}$ with each column defined as $(\mathfrak{q}_i)_\*x = \mathbb{E}_{\Bar{\*x}_{l} \sim {\mathbb{P}_{l_i}}} \mathcal{T}(\*x | \Bar{\*x}_{l}), \*x \in \mathcal{X}_{\rm id}$, and $r$ is the maximum $l_2$ norm of the ID representations, i.e.,  $r = \max_{\*z \in \*Z^{(u)}} \|\*z\|_2$.
\end{lemma*}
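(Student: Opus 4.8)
The plan is to read $\mathbf{V}_k(\phi_l)\boldsymbol{\Sigma}_k(\phi_l)\mathbf{V}_k(\phi_l)^\top$ as the rank-$k$ eigentruncation $\tilde{\*A}_k(\phi_l)$ of the normalized adjacency matrix $\tilde{\*A}(\phi_l)$, so that $\operatorname{Tr}(\tilde{\*A}_k(\phi_l)) = \sum_{j=1}^k\boldsymbol{\lambda}_j(\phi_l)$ and the quantity to lower bound is $\sum_{j=1}^k\boldsymbol{\lambda}_j'(0)$. By first-order eigenvalue perturbation (Hellmann--Feynman), using that the $k$-th eigengap in Assumption~\ref{Ass1} makes the top-$k$ spectral projector differentiable at $\phi_l=0$, this derivative equals $\operatorname{Tr}\!\big(\*V_k^{(u)}\*V_k^{(u)\top}\tilde{\*A}'(0)\big)$, where $\tilde{\*A}'(0)$ is the derivative of $\tilde{\*A}(\phi_l)=\*D(\phi_l)^{-1/2}\*A(\phi_l)\*D(\phi_l)^{-1/2}$ with respect to the labeling-perturbation magnitude. (The projector-derivative contribution drops out since $\mathbf{V}_k^{\prime\top}\mathbf{V}_k$ is antisymmetric and $\boldsymbol{\Sigma}_k$ is diagonal.)

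First I would compute $\tilde{\*A}'(0)$ explicitly from the closed forms in Equations~\ref{eq21_app} and~\ref{eq24_app}. Since $\*A(0)=\phi_u\*A^{(u)}=\tilde{\*A}^{(u)}$, $\*D(0)=\*I$, $\*A'(0)=\sum_{i}\mathfrak{q}_i\mathfrak{q}_i^\top$, and $(\*D^{-1/2})'(0)=-\tfrac12\sum_i\operatorname{diag}(\mathfrak{q}_i)$, the product rule gives
\[
\tilde{\*A}'(0)=\sum_{i=1}^c\mathfrak{q}_i\mathfrak{q}_i^\top-\tfrac12\Big(\textstyle\sum_i\operatorname{diag}(\mathfrak{q}_i)\Big)\tilde{\*A}^{(u)}-\tfrac12\,\tilde{\*A}^{(u)}\Big(\textstyle\sum_i\operatorname{diag}(\mathfrak{q}_i)\Big).
\]
Substituting into $\operatorname{Tr}(\*V_k^{(u)}\*V_k^{(u)\top}\tilde{\*A}'(0))$ and applying the cyclic property of the trace together with $\*V_k^{(u)}\*V_k^{(u)\top}\tilde{\*A}^{(u)}=\tilde{\*A}^{(u)}\*V_k^{(u)}\*V_k^{(u)\top}=\tilde{\*A}_k^{(u)}$, the three terms collapse to $\sum_i\mathfrak{q}_i^\top\*V_k^{(u)}\*V_k^{(u)\top}\mathfrak{q}_i-\sum_i\operatorname{Tr}\!\big(\tilde{\*A}_k^{(u)}\operatorname{diag}(\mathfrak{q}_i)\big)$. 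Assumption~\ref{Ass2}, i.e.\ $\*V_k^{(u)}\*V_k^{(u)\top}\mathfrak{q}_i=\mathfrak{q}_i$, turns the first sum into $\sum_i\|\mathfrak{q}_i\|_F^2$.

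The last step is to upper bound $\sum_i\operatorname{Tr}(\tilde{\*A}_k^{(u)}\operatorname{diag}(\mathfrak{q}_i))=\sum_i\sum_{\*x}[\tilde{\*A}_k^{(u)}]_{\*x\*x}\,(\mathfrak{q}_i)_{\*x}$. Here I would invoke the simplified representation from Step~1, $\tilde{\*A}_k^{(u)}=\*Z^{(u)}\*Z^{(u)\top}$, so that $[\tilde{\*A}_k^{(u)}]_{\*x\*x}=\|\*z_{\*x}^{(u)}\|_2^2\le r^2$ by definition of $r$, while $(\mathfrak{q}_i)_{\*x}=\mathbb{E}_{\bar{\*x}_l\sim\mathbb{P}_{l_i}}\mathcal{T}(\*x\mid\bar{\*x}_l)\ge0$, hence $\sum_{\*x}(\mathfrak{q}_i)_{\*x}=\|\mathfrak{q}_i\|_1$. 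This yields $\operatorname{Tr}(\tilde{\*A}_k^{(u)}\operatorname{diag}(\mathfrak{q}_i))\le r^2\|\mathfrak{q}_i\|_1$, so that $\operatorname{Tr}((\mathbf{V}_k\boldsymbol{\Sigma}_k\mathbf{V}_k^\top)')|_{\phi_l=0}\ge\sum_i\|\mathfrak{q}_i\|_F^2-r^2\sum_i\|\mathfrak{q}_i\|_1\ge\sum_i\|\mathfrak{q}_i\|_F^2-2r^2\sum_i\|\mathfrak{q}_i\|_1$, which is the claim. I expect the delicate point to be rigorously justifying the perturbation-theoretic identity for $\sum_{j\le k}\boldsymbol{\lambda}_j'(0)$: one needs differentiability of the rank-$k$ spectral projector near $\phi_l=0$, which follows from Kato's theory given the eigengap of Assumption~\ref{Ass1}, together with the observation that $\operatorname{Tr}(\tilde{\*A}_k(\phi_l))$ is insensitive to eigenvalue crossings occurring strictly among the top $k$ indices — only the gap at position $k$ matters, so degeneracies inside the block cause no problem.
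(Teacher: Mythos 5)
Your proposal is correct, and it is a genuinely cleaner route than the paper's. The paper does not use the Hellmann--Feynman shortcut: it expands $\left(\*D(\phi_l)^{-1/2}\sum_{j\le k}\boldsymbol{\lambda}_j(\phi_l)\*\Phi_j(\phi_l)\*D(\phi_l)^{-1/2}\right)'$ term by term, carrying the eigenvalue derivatives, the eigenprojector derivatives (via the resolvent formula of Greenbaum et al.), and the $[\*D^{-1/2}]'$ terms through several pages before collapsing everything with Assumption~\ref{Ass2}. Your observation that $\operatorname{Tr}\bigl(\*\Sigma_k\*V_k^\top\*V_k'\bigr)=0$ (antisymmetry of $\*V_k^\top\*V_k'$ against a diagonal matrix) kills the projector contribution immediately and reduces the whole computation to $\operatorname{Tr}\bigl(\*V_k^{(u)}\*V_k^{(u)\top}\tilde{\*A}'(0)\bigr)$; your $\tilde{\*A}'(0)$ matches the paper's Equation~(70), and the final diagonal bound $[\tilde{\*A}_k^{(u)}]_{\*x\*x}\le r^2$ is exactly the paper's closing step. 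The one substantive divergence is the coefficient of the negative term: you obtain $-r^2\sum_i\|\mathfrak{q}_i\|_1$ while the paper obtains $-2r^2\sum_i\|\mathfrak{q}_i\|_1$. The reason is that the paper's proof actually differentiates the degree-normalized Gram matrix $\*D(\phi_l)^{-1/2}\*V_k\*\Sigma_k\*V_k^\top\*D(\phi_l)^{-1/2}$ (despite the lemma statement and the "remove $\*D$" simplification of Step~1), and the two extra product-rule terms $-\tfrac12\*D_l\tilde{\*A}_k^{(u)}-\tfrac12\tilde{\*A}_k^{(u)}\*D_l$ contribute exactly one more $-\operatorname{Tr}(\tilde{\*A}_k^{(u)}\*D_l)\ge -r^2\sum_i\|\mathfrak{q}_i\|_1$. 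Since each $\mathfrak{q}_i$ is entrywise nonnegative, your stronger bound implies the stated one, and under the paper's intended reading your argument extends verbatim by adding those two terms; so the lemma holds on either reading. Your closing remark about differentiability of the rank-$k$ projector under the gap of Assumption~\ref{Ass1}, and the insensitivity of the block trace to crossings inside the block, is exactly the right caveat and is in fact handled less carefully by the paper itself.
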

\begin{proof}[Proof of Lemma \ref{lemma5_app}] 
As $\mathbf{V}_k(\phi_l) \boldsymbol{\Sigma}_{k}(\phi_l) \mathbf{V}_{k}(\phi_l)^{\top}$ denotes the top-$k$ SVD components of the matrix $\tilde{\*A}(\phi_l)$, according to Equation~\ref{eq26_app}, we rewrite it using the eigenvalues and the eigen projectors as follows. 
    \begin{equation}
    \tiny
    \begin{aligned}
\operatorname{Tr}\left(\left(\mathbf{V}_k(\phi_l) \boldsymbol{\Sigma}_{k}(\phi_l) \mathbf{V}_{k}(\phi_l)^{\top}\right)^{\prime}\right)  \bigg|_{\phi_l=0} &=  \sum_{j=1}^{k}  { \rm Tr}\bigg(  [  \*D(\phi_l)^{-\frac{1}{2}} \boldsymbol{\lambda}_j(\phi_l) \*\Phi_j(\phi_l) \*D(\phi_l)^{-\frac{1}{2}}]'  \bigg)\bigg|_{\phi_l=0} \\ &=
          \sum_{j=1}^k \operatorname{Tr}\left(\left[\*D(\phi_l)^{-\frac{1}{2}}\right]^{\prime} \boldsymbol{\lambda}_j^{ (u)} \*\Phi_j^{ (u)}+\boldsymbol{\lambda}_j^{ (u)} \*\Phi_j^{ (u)}\left[\*D(\phi_l)^{-\frac{1}{2}}\right]^{\prime}+\left[\boldsymbol{\lambda}_j(\phi_l)\right]^{\prime} \*\Phi_j^{ (u)}+\boldsymbol{\lambda}_j^{ (u)}\left[\*\Phi_j(\phi_l)\right]^{\prime}\right)\bigg|_{\phi_l=0},
    \end{aligned}
    \label{eq68_app}
    \end{equation}
    where $\boldsymbol{\lambda}_j^{(u)}$ and $\*\Phi_j^{(u)}$ are the $j$-th eigenvalue and eigen projector of matrix $\tilde{\*A}(\phi_l)$ when $\phi_l=0$. Moreover, assume $\tilde{\*A}^{(u)}$ to be the normalized adjacency matrix when $\phi_l=0$, recall $\*D(\phi_l) = \*I_{N \times N} + \phi_l \*D_l$ where $\*D_l = \sum_{i=1}^c   {\rm diag}(\mathfrak{q}_i)$ (Equation~\ref{eq24_app}), we have the following calculation for the derivatives:
\begin{equation}
    \left[\*D(\phi_l)^{-\frac{1}{2}}\right]^{\prime} = -\frac{1}{2} \*D_l.
\end{equation}
\begin{equation}
    \begin{aligned} {\left.[\tilde{\*A}(\phi_l)]^{\prime}\right|_{\phi_l=0} } & =\left.\left[\*D(\phi_l)^{-\frac{1}{2}} \*A(\phi_l) \*D(\phi_l)^{-\frac{1}{2}}\right]^{\prime}\right|_{\phi_l=0} \\ & =\left[\*D(\phi_l)^{-\frac{1}{2}}\right]^{\prime} \tilde{\*A}^{(u)}+[\*A(\phi_l)]^{\prime}+\tilde{\*A}^{(u)}\left[\*D(\phi_l)^{-\frac{1}{2}}\right]^{\prime} \\ & =-\frac{1}{2} \*D_l \tilde{\*A}^{(u)}+\sum_{i=1}^c \mathfrak{q_i q_i}^{\top}-\frac{1}{2} \tilde{\*A}^{(u)} \*D_l\end{aligned}
    \label{eq70_pp}
\end{equation}
To get the derivative of the eigenvalues, according to Equation 3 in~\cite{greenbaum2020first},
\begin{equation}
    \begin{aligned} {\left.\left[\boldsymbol{\lambda}_j(\phi_l)\right]^{\prime}\right|_{\phi_l=0} } & =\operatorname{Tr}\left(\*\Phi_j^{ (u)}[\tilde{\*A}(\phi_l)]^{\prime}\right) \bigg|_{\phi_l=0}\\ & =\operatorname{Tr}\left(\*\Phi_j^{ (u)}\left(-\frac{1}{2} \*D_l \tilde{\*A}^{(u)}+\sum_{i=1}^c \mathfrak{q_i q_i}^{\top}-\frac{1}{2} \tilde{\*A}^{(u)} \*D_l\right)\right) \\ & =\operatorname{Tr}\left(-\frac{\boldsymbol{\lambda}_j^{ (u)}}{2} \*D_l \*\Phi_j^{ (u)}+\*\Phi_j^{ (u)} \sum_{i=1}^c \mathfrak{q_i q_i}^{\top}-\frac{\boldsymbol{\lambda}_j^{ (u)}}{2} \*\Phi_j^{ (u)} \*D_l\right) \\ & =\operatorname{Tr}\left(\*\Phi_j^{ (u)}\left(\sum_{i=1}^c \mathfrak{q_i q_i}^{\top}-\boldsymbol{\lambda}_j^{ (u)} \*D_l\right)\right) .\end{aligned}
    \label{eq71_pp}
\end{equation}
To get the derivative of the eigenvectors, according to Equation 10 in~\cite{greenbaum2020first},
\begin{equation}
\small
    \begin{aligned} {\left.\left[\*\Phi_j(\phi_l)\right]^{\prime}\right|_{\phi_l=0} } & =\left(\boldsymbol{\lambda}_j^{ (u)} \*I_N-\tilde{\*A}^{(u)}\right)^{\dagger}[\tilde{\*A}(\phi_l)]^{\prime} \*\Phi_j^{ (u)}+\*\Phi_j^{ (u)}[\tilde{\*A}(\phi_l)]^{\prime}\left(\boldsymbol{\lambda}_j^{ (u)} \*I_N-\tilde{\*A}^{(u)}\right)^{\dagger} \\
    & =\sum_{i \neq j}^N \frac{1}{\boldsymbol{\lambda}_j^{ (u)}-\boldsymbol{\lambda}_i^{ (u)}}\left(\*\Phi_i^{ (u)}[\tilde{\*A}(\phi_l)]^{\prime} \*\Phi_j^{ (u)}+\*\Phi_j^{ (u)}[\tilde{\*A}(\phi_l)]^{\prime} \*\Phi_i^{ (u)}\right) \\ 
    & =\sum_{i \neq j}^N \frac{1}{\boldsymbol{\lambda}_j^{ (u)}-\boldsymbol{\lambda}_i^{ (u)}}\left(\*\Phi_i^{ (u)}\left(-\frac{1}{2} \*D_l \tilde{\*A}^{(u)}+\sum_{i=1}^c \mathfrak{q_i q_i}^{\top}-\frac{1}{2} \tilde{\*A}^{(u)} \*D_l\right) \*\Phi_j^{ (u)}+\*\Phi_j^{ (u)}(\ldots) \*\Phi_i^{ (u)}\right) \\ 
    & =\sum_{i \neq j}^N \frac{1}{\boldsymbol{\lambda}_j^{ (u)}-\boldsymbol{\lambda}_i^{ (u)}}\left(\*\Phi_i^{ (u)}\left(\sum_{i=1}^c \mathfrak{q_i q_i}^{\top}-\frac{\boldsymbol{\lambda}_j^{ (u)}+\boldsymbol{\lambda}_i^{ (u)}}{2} \*D_l\right) \*\Phi_j^{ (u)}+\*\Phi_j^{ (u)}\left(\sum_{i=1}^c \mathfrak{q_i q_i}^{\top}-\frac{\boldsymbol{\lambda}_j^{ (u)}+\boldsymbol{\boldsymbol{\lambda}}_i^{ (u)}}{2} \*D_l\right) \*\Phi_i^{ (u)}\right) .\end{aligned}
\end{equation}
Put them together in Equation~\ref{eq68_app}, we can get 
\begin{equation}
\begin{aligned}
       &    \operatorname{Tr}\left(\left(\mathbf{V}_k(\phi_l) \boldsymbol{\Sigma}_{k}(\phi_l) \mathbf{V}_{k}(\phi_l)^{\top}\right)^{\prime}\right)  \bigg|_{\phi_l=0}  \\& =   \sum_{j=1}^k \operatorname{Tr}\left( -\frac{1}{2}\*D_l  \boldsymbol{\lambda}_j^{ (u)} \*\Phi_j^{ (u)}  -\frac{1}{2}\boldsymbol{\lambda}_j^{ (u)} \*\Phi_j^{ (u)}\*D_l + \operatorname{Tr}\left(\*\Phi_j^{ (u)}\left(\sum_{i=1}^c \mathfrak{q_i q_i}^{\top}-\boldsymbol{\lambda}_j^{ (u)} \*D_l\right)\right) \*\Phi_j^{ (u)}  + \boldsymbol{\lambda}_j^{ (u)}\left[\*\Phi_j(\phi_l)\right]^{\prime}\right)\bigg|_{\phi_l=0}
               \\& =
              \left[ \sum_{j=1}^k -\frac{\boldsymbol{\lambda}_j^{ (u)}}{2}\operatorname{Tr}\left( \*D_l  \*\Phi_j^{ (u)}  +  \*\Phi_j^{ (u)}\*D_l \right) + \operatorname{Tr}\left(\*\Phi_j^{ (u)}\left(\sum_{i=1}^c \mathfrak{q_i q_i}^{\top}-\boldsymbol{\lambda}_j^{ (u)} \*D_l\right)\right) \operatorname{Tr}\left(\*\Phi_j^{ (u)} \right) +\operatorname{Tr}\left( \boldsymbol{\lambda}_j^{ (u)}\left[\*\Phi_j(\phi_l)\right]^{\prime}\right)\right]\bigg|_{\phi_l=0}\\
\end{aligned}
\end{equation}
For $\operatorname{Tr}\left(\*\Phi_j^{ (u)}\left(\sum_{i=1}^c \mathfrak{q_i q_i}^{\top}-\boldsymbol{\lambda}_j^{ (u)} \*D_l\right)\right) \operatorname{Tr}\left(\*\Phi_j^{ (u)} \right) $, we have the following:
\begin{equation}
   \sum_{j=1}^k \operatorname{Tr}\left(\*\Phi_j^{ (u)}\left(\sum_{i=1}^c \mathfrak{q_i q_i}^{\top}-\boldsymbol{\lambda}_j^{ (u)} \*D_l\right)\right) \operatorname{Tr}\left(\*\Phi_j^{ (u)} \right) = \sum_{j=1}^k\operatorname{Tr}\left(\left(\sum_{i=1}^c \mathfrak{q_i q_i}^{\top}-\boldsymbol{\lambda}_j^{ (u)} \*D_l\right)\*\Phi_j^{ (u)}  \*\Phi_j^{ (u)}\right)
    \label{eq74_app}
\end{equation}

For $\operatorname{Tr}\left( \boldsymbol{\lambda}_j^{ (u)}\left[\*\Phi_j(\phi_l)\right]^{\prime}\right)$, we have the following:
\begin{equation}
\small
 \begin{aligned}
 & \sum_{j=1}^k\operatorname{Tr}\left( \boldsymbol{\boldsymbol{\lambda}}_j^{ (u)}\left[\*\Phi_j(\phi_l)\right]^{\prime}\right)  \\
     & =\sum_{j=1}^k \operatorname{Tr}\left(\sum_{i \neq j}^N \frac{\boldsymbol{\lambda}_j^{ (u)}}{\boldsymbol{\lambda}_j^{ (u)}-\boldsymbol{\lambda}_i^{ (u)}}\left( \*\Phi_i^{ (u)}\left(\sum_{i=1}^c \mathfrak{q_i q_i}^{\top}-\frac{\boldsymbol{\lambda}_j^{ (u)}+\boldsymbol{\lambda}_i^{ (u)}}{2} \*D_l\right) \*\Phi_j^{ (u)}+ \*\Phi_j^{ (u)}\left(\sum_{i=1}^c \mathfrak{q_i q_i}^{\top}-\frac{\boldsymbol{\lambda}_j^{ (u)}+\boldsymbol{\lambda}_i^{ (u)}}{2} \*D_l\right) \*\Phi_i^{ (u)}\right)\right) \\ 
     &= \sum_{j=1}^k \operatorname{Tr}\left(\sum_{i \neq j}^N \frac{\boldsymbol{\lambda}_j^{ (u)}}{\boldsymbol{\lambda}_j^{ (u)}-\boldsymbol{\lambda}_i^{ (u)}}\left(\left(\*\Phi_j^{ (u)}  \*\Phi_i^{ (u)}+\*\Phi_i^{ (u)}  \*\Phi_j^{ (u)}\right)\left(\sum_{i=1}^c \mathfrak{q_i q_i}^{\top}-\frac{\boldsymbol{\lambda}_j^{ (u)}+\boldsymbol{\lambda}_i^{ (u)}}{2} \*D_l\right)\right)\right) \\  
     & =\sum_{j=1}^k \operatorname{Tr}\left(\sum_{i \neq j, i \leq k} \frac{\boldsymbol{\lambda}_j^{ (u)}}{\boldsymbol{\lambda}_j^{ (u)}-\boldsymbol{\lambda}_i^{ (u)}}\left(\left(\*\Phi_j^{ (u)}  \*\Phi_i^{ (u)}+\*\Phi_i^{ (u)}  \*\Phi_j^{ (u)}\right)\left(\sum_{i=1}^c \mathfrak{q_i q_i}^{\top}-\frac{\boldsymbol{\lambda}_j^{ (u)}+\boldsymbol{\lambda}_i^{ (u)}}{2} \*D_l\right)\right)\right) \\ & +\sum_{j=1}^k \operatorname{Tr}\left(\sum_{i=k+1}^N \frac{\boldsymbol{\lambda}_j^{ (u)}}{\boldsymbol{\lambda}_j^{ (u)}-\boldsymbol{\lambda}_i^{ (u)}}\left(\left(\*\Phi_j^{ (u)}  \Phi_i^{(u)}+\*\Phi_i^{ (u)}  \*\Phi_j^{ (u)}\right)\left(\sum_{i=1}^c \mathfrak{q_i q_i}^{\top}-\frac{\boldsymbol{\lambda}_j^{ (u)}+\boldsymbol{\lambda}_i^{ (u)}}{2} \*D_l\right)\right)\right) \\ 
     = & \sum_{j=1}^k \operatorname{Tr}\left(\sum_{i<j}\left(\frac{\boldsymbol{\lambda}_j^{ (u)}}{\boldsymbol{\lambda}_j^{ (u)}-\boldsymbol{\lambda}_i^{ (u)}}+\frac{\boldsymbol{\lambda}_i^{ (u)}}{\boldsymbol{\lambda}_i^{ (u)}-\boldsymbol{\lambda}_j^{ (u)}}\right)\left(\left(\*\Phi_j^{ (u)}  \*\Phi_i^{ (u)}+\*\Phi_i^{ (u)}  \*\Phi_j^{ (u)}\right)\left(\sum_{i=1}^c \mathfrak{q_i q_i}^{\top}-\frac{\boldsymbol{\lambda}_j^{ (u)}+\boldsymbol{\lambda}_i^{ (u)}}{2} \*D_l\right)\right)\right) \\ & +\sum_{j=1}^k \operatorname{Tr}\left(\sum_{i=k+1}^N \frac{\boldsymbol{\lambda}_j^{ (u)}}{\boldsymbol{\lambda}_j^{ (u)}-\boldsymbol{\lambda}_i^{ (u)}}\left(\left(\*\Phi_j^{ (u)}  \*\Phi_i^{ (u)}+\*\Phi_i^{ (u)}  \*\Phi_j^{ (u)}\right)\left(\sum_{i=1}^c \mathfrak{q_i q_i}^{\top}-\frac{\boldsymbol{\lambda}_j^{ (u)}+\boldsymbol{\lambda}_i^{ (u)}}{2} \*D_l\right)\right)\right)
      \\
    = & \sum_{j=1}^k \operatorname{Tr}\left(\sum_{i<j}\left(\left(\*\Phi_j^{ (u)}  \*\Phi_i^{ (u)}+\*\Phi_i^{ (u)}  \*\Phi_j^{ (u)}\right)\left(\sum_{i=1}^c \mathfrak{q_i q_i}^{\top}-\frac{\boldsymbol{\lambda}_j^{ (u)}+\boldsymbol{\lambda}_i^{ (u)}}{2} \*D_l\right)\right)\right) \\
    & +\sum_{j=1}^k \operatorname{Tr}\left(\sum_{i=k+1}^N \frac{\boldsymbol{\lambda}_j^{ (u)}}{\boldsymbol{\lambda}_j^{ (u)}-\boldsymbol{\lambda}_i^{ (u)}}\left(\left(\*\Phi_j^{ (u)}  \*\Phi_i^{ (u)}+\*\Phi_i^{ (u)}  \*\Phi_j^{ (u)}\right)\left(\sum_{i=1}^c \mathfrak{q_i q_i}^{\top}-\frac{\boldsymbol{\lambda}_j^{ (u)}+\boldsymbol{\lambda}_i^{ (u)}}{2} \*D_l\right)\right)\right) \\ 
    = & \sum_{j=1}^k \operatorname{Tr}\left(\sum_{i \neq j, i \leq k} \frac{1}{2}\left(\left(\*\Phi_j^{ (u)}  \*\Phi_i^{ (u)}+\*\Phi_i^{ (u)}  \*\Phi_j^{ (u)}\right)\left(\sum_{i=1}^c \mathfrak{q_i q_i}^{\top}-\frac{\boldsymbol{\lambda}_j^{ (u)}+\boldsymbol{\lambda}_i^{ (u)}}{2} \*D_l\right)\right)\right) \\ & +\sum_{j=1}^k \operatorname{Tr}\left(\sum_{i=k+1}^N \frac{\boldsymbol{\lambda}_j^{ (u)}}{\boldsymbol{\lambda}_j^{ (u)}-\boldsymbol{\lambda}_i^{ (u)}}\left(\left(\*\Phi_j^{ (u)}  \*\Phi_i^{ (u)}+\*\Phi_i^{ (u)}  \*\Phi_j^{ (u)}\right)\left(\sum_{i=1}^c \mathfrak{q_i q_i}^{\top}-\frac{\boldsymbol{\lambda}_j^{ (u)}+\boldsymbol{\lambda}_i^{ (u)}}{2} \*D_l\right)\right)\right) 
     \end{aligned}
     \label{eq75_app}
\end{equation}

If we put the result in Equations~\ref{eq74_app} and~\ref{eq75_app} together, we can get:
\begin{equation}
\begin{aligned}
       & \sum_{j=1}^k\operatorname{Tr}\left(\left(\sum_{i=1}^c \mathfrak{q_i q_i}^{\top}-\boldsymbol{\lambda}_j^{ (u)} \*D_l\right) \*\Phi_j^{ (u)}\*\Phi_j^{ (u)}\right) + \sum_{j=1}^k\operatorname{Tr}\left( \boldsymbol{\lambda}_j^{ (u)}\left[\*\Phi_j(\phi_l)\right]^{\prime}\right)  \\
       & = \sum_{j=1}^k\operatorname{Tr}\left( \frac{\*\Phi_j^{ (u)}\*\Phi_j^{ (u)}+\*\Phi_j^{ (u)}\*\Phi_j^{ (u)}}{2}\left(\sum_{i=1}^c \mathfrak{q_i q_i}^{\top}-\frac{\boldsymbol{\lambda}_j^{ (u)}+\boldsymbol{\lambda}_j^{ (u)}}{2} \*D_l\right) \right)  + \sum_{j=1}^k\operatorname{Tr}\left( \boldsymbol{\lambda}_j^{ (u)}\left[\*\Phi_j(\phi_l)\right]^{\prime}\right) \\
       = & \sum_{j=1}^k \operatorname{Tr}\left(\sum_{ i \leq k} \frac{1}{2}\left(\left(\*\Phi_j^{ (u)}  \*\Phi_i^{ (u)}+\*\Phi_i^{ (u)}  \*\Phi_j^{ (u)}\right)\left(\sum_{i=1}^c \mathfrak{q_i q_i}^{\top}-\frac{\boldsymbol{\lambda}_j^{ (u)}+\boldsymbol{\lambda}_i^{ (u)}}{2} \*D_l\right)\right)\right) \\ & +\sum_{j=1}^k \operatorname{Tr}\left( \sum_{i=k+1}^N \frac{\boldsymbol{\lambda}_j^{ (u)}}{\boldsymbol{\lambda}_j^{ (u)}-\boldsymbol{\lambda}_i^{ (u)}}\left(\left(\*\Phi_j^{ (u)}  \*\Phi_i^{ (u)}+\*\Phi_i^{ (u)}  \*\Phi_j^{ (u)}\right)\left(\sum_{i=1}^c \mathfrak{q_i q_i}^{\top}-\frac{\boldsymbol{\lambda}_j^{ (u)}+\boldsymbol{\lambda}_i^{ (u)}}{2} \*D_l\right)\right)\right) 
\end{aligned}
\end{equation}
For the term $-\frac{\boldsymbol{\lambda}_j^{ (u)}}{2} \sum_{j=1}^k \operatorname{Tr}\left( \*D_l  \*\Phi_j^{ (u)}  +  \*\Phi_j^{ (u)}\*D_l \right) $, we can have the following equation:
\begin{equation}
    \begin{aligned}
        & -\frac{\boldsymbol{\lambda}_j^{ (u)}}{2} \sum_{j=1}^k \operatorname{Tr}\left( \*D_l  \*\Phi_j^{ (u)}  +  \*\Phi_j^{ (u)}\*D_l \right)  =- \sum_{j=1}^k \frac{\boldsymbol{\lambda}_j^{ (u)}}{2} \operatorname{Tr}\left( \*D_l  \*\Phi_j^{ (u)}  +  \*\Phi_j^{ (u)}\*D_l \right)\\
        &  =- \sum_{j=1}^k \frac{\boldsymbol{\lambda}_j^{ (u)}}{2} \operatorname{Tr}\left( ( \*\Phi_j^{ (u)} +  \*\Phi_j^{ (u)}) \*D_l  \right)\\
         &  =- \sum_{j=1}^k \frac{\boldsymbol{\lambda}_j^{ (u)}}{2} \operatorname{Tr}\left( ( \*\Phi_j^{ (u)} \*I+   \*I\*\Phi_j^{ (u)}) \*D_l  \right)\\
          &  =- \sum_{j=1}^k \frac{\boldsymbol{\lambda}_j^{ (u)}}{2} \operatorname{Tr}\left( ( \*\Phi_j^{ (u)} \sum_{i=1}^{N}  \*\Phi_i^{ (u)} 
 +\sum_{i=1}^{N}  \*\Phi_i^{ (u)} \*\Phi_j^{ (u)}  ) \*D_l  \right)\\
 &=- \sum_{j=1}^k \operatorname{Tr}\left(  \sum_{i=1}^{N} \frac{\boldsymbol{\lambda}_j^{ (u)}}{2}(   \*\Phi_j^{ (u)} \*\Phi_i^{ (u)} 
 +   \*\Phi_i^{ (u)}\*\Phi_j^{ (u)} ) \*D_l  \right) \\
    \end{aligned}
\end{equation}
Therefore, we can get the following result for $  \operatorname{Tr}\left(\left(\mathbf{V}_{k}(\phi_l) \boldsymbol{\Sigma}_{k}(\phi_l) \mathbf{V}_{k}(\phi_l)^{\top}\right)^{\prime}\right) \bigg|_{\phi_l=0}   $:
\begin{equation}
\scriptsize
    \begin{aligned}
    \operatorname{Tr}\left(\left(\mathbf{V}_k(\phi_l) \boldsymbol{\Sigma}_{k}(\phi_l) \mathbf{V}_{k}(\phi_l)^{\top}\right)^{\prime}\right)  \bigg|_{\phi_l=0}   & =
  \bigg[ \sum_{j=1}^k \operatorname{Tr}\left(\sum_{i=1}^k \frac{1}{2}\left(\left(\*\Phi_j^{ (u)}  \*\Phi_i^{ (u)}+\*\Phi_i^{ (u)} \*\Phi_j^{ (u)}\right)\left(\sum_{i=1}^c \mathfrak{q_i q_i}^{\top}-\frac{3\boldsymbol{\lambda}_j^{ (u)}+\boldsymbol{\lambda}_i^{ (u)}}{2} \*D_l\right)\right)\right) \\ 
 &+ \sum_{j=1}^k \operatorname{Tr}\left( \sum_{i=k+1}^N \frac{\boldsymbol{\lambda}_j^{ (u)}}{\boldsymbol{\lambda}_j^{ (u)}-\boldsymbol{\lambda}_i^{ (u)}}\left(\left(\*\Phi_j^{ (u)}  \*\Phi_i^{ (u)}+\*\Phi_i^{ (u)}  \*\Phi_j^{ (u)}\right)\left(\sum_{i=1}^c \mathfrak{q_i q_i}^{\top}-{\boldsymbol{\lambda}_j^{ (u)}} \*D_l\right)\right)\right) \bigg]  \\
 &  = \bigg[\sum_{j=1}^k \operatorname{Tr}\left(\sum_{i=1}^k \frac{1}{2}\left(\left(\*\Phi_j^{ (u)}  \*\Phi_i^{ (u)}+\*\Phi_i^{ (u)}  \*\Phi_j^{ (u)}\right)\left(\sum_{i=1}^c \mathfrak{q_i q_i}^{\top}-2 \boldsymbol{\lambda}_j^{ (u)} \*D_l\right)\right)\right) \\ 
& + \sum_{j=1}^k \operatorname{Tr}\left(\sum_{i=k+1}^N \frac{\boldsymbol{\lambda}_j^{ (u)}}{\boldsymbol{\lambda}_j^{ (u)}-\boldsymbol{\lambda}_i^{ (u)}}\left(\left(\*\Phi_j^{ (u)}  \*\Phi_i^{ (u)}+\*\Phi_i^{ (u)}  \*\Phi_j^{ (u)}\right)\left(\sum_{i=1}^c \mathfrak{q_i q_i}^{\top}-\boldsymbol{\lambda}_j^{ (u)} \*D_l\right)\right)\right) \bigg]\\ 
 & =\bigg[ \sum_{j=1}^k \sum_{i=1}^k \operatorname{Tr}\left(\*v_{j}^{(u) \top}  \*v_i^{ (u)} \cdot \*v_{i}^{(u) \top}\left(\sum_{i=1}^c \mathfrak{q_i q_i}^{\top}-2 \boldsymbol{\lambda}_j^{ (u)} \*D_l\right) \*v_j^{ (u)} \right) \\
& +\sum_{j=1}^k \sum_{i=k+1}^N \operatorname{Tr}\left(\frac{2 \boldsymbol{\lambda}_j^{ (u)}}{\boldsymbol{\lambda}_j^{ (u)}-\boldsymbol{\lambda}_i^{ (u)}} \*v_{j}^{(u) \top}  \*v_i^{ (u)} \cdot \*v_{i}^{(u) \top}\left(\sum_{i=1}^c \mathfrak{q_i q_i}^{\top}-\boldsymbol{\lambda}_j^{ (u)} \*D_l\right) \*v_j^{ (u)}\right)\bigg] .
    \end{aligned}
    \label{eq:mile1_eq9}
\end{equation}
Since $\tilde{\*A}^{(u)}$ has all positive eigenvalues, then we can get $\frac{ \boldsymbol{\lambda}_j^{ (u)}}{\boldsymbol{\lambda}_j^{ (u)}-\boldsymbol{\lambda}_i^{ (u)}} \geq 1$. Denote $\*V_k^{(u)}$ and $\*\Sigma_k^{(u)}$as the matrix of the first $k$ eigenvectors and eigenvalues of $\tilde{\*A}^{(u)}$ when $\phi_l=0$, if we rewrite the above formula in the matrix form,  we can get the following:
\begin{equation}
\small
\begin{aligned}
        &   \operatorname{Tr}\left(\left(\mathbf{V}_k(\phi_l) \boldsymbol{\Sigma}_{k}(\phi_l) \mathbf{V}_{k}(\phi_l)^{\top}\right)^{\prime}\right)  \bigg|_{\phi_l=0}    \geq 
          \bigg[  \operatorname{Tr}( \*V_{k}^{(u)\top} \*V_k^{(u)} \cdot\*V_{k}^{(u)\top} \sum_{i=1}^c \mathfrak{q_i q_i}^{\top} \*V_k^{(u)} ) - 2  \operatorname{Tr}( \*V_{k}^{(u)\top} \*V_k^{(u)} \cdot \*\Sigma_k^{(u)} 
 \*V_{k}^{(u)\top}\*D_l \*V_k^{(u)}) \\
        & + 2 \operatorname{Tr}( \*V_{\varnothing}^{(u)\top} \*V_k^{(u)} \cdot\*V_{k}^{(u)\top} \sum_{i=1}^c \mathfrak{q_i q_i}^{\top} \*V_{\varnothing}^{(u)} )  - 2 \operatorname{Tr}( \*V_{\varnothing}^{(u)\top} \*V_k^{(u)} \cdot \*\Sigma_k^{(u)} \*V_{k}^{(u)\top}  \*D_l\*V_{\varnothing}^{(u)} )   \\
        &=  \bigg[  \operatorname{Tr}( \*V_k^{(u)} \cdot\*V_{k}^{(u)\top} \sum_{i=1}^c \mathfrak{q_i q_i}^{\top} \*V_k^{(u)} \*V_{k}^{(u)\top} ) - 2  \operatorname{Tr}( \tilde{\*A}_{k}^{(u)} \*D_l \*V_k^{(u)} \cdot\*V_{k}^{(u)\top}  ) \\
     & + 2 \operatorname{Tr}\left( \*V_k^{(u)} \*V_{k}^{(u) \top}  \sum_{i=1}^c \mathfrak{q_i q_i}^{\top}\left(\*I_N- \*V_k^{(u)} \*V_{k}^{(u) \top}\right)\right)-2 \operatorname{Tr}\left( \tilde{\*A}_{k}^{(u)} \*D_l\left(\*I_N- \*V_k^{(u)} \*V_{k}^{(u) \top}\right)\right)  \bigg] \\
     & = \bigg[  2 \operatorname{Tr}\left( \*V_k^{(u)} \*V_{k}^{(u) \top}  \sum_{i=1}^c \mathfrak{q_i q_i}^{\top}\right)-2 \operatorname{Tr}\left( \tilde{\*A}_{k}^{(u)} \*D_l\right)-   \operatorname{Tr}\left( \*V_k^{(u)} \*V_{k}^{(u) \top}   \sum_{i=1}^c \mathfrak{q_i q_i}^{\top} \*V_k^{(u)} \*V_{k}^{(u) \top}  \right) \bigg] \\  
    & =  \bigg[ 2 \operatorname{Tr}\left(\*V_k^{(u)} \*V_{k}^{(u) \top}   \sum_{i=1}^c \mathfrak{q_i q_i}^{\top}-\tilde{\*A}_{k}^{(u)} \*D_l-\frac{1}{2}\*V_k^{(u)} \*V_{k}^{(u) \top}   \sum_{i=1}^c \mathfrak{q_i q_i}^{\top} \*V_k^{(u)} \*V_{k}^{(u) \top}  \right)  \bigg] \\
      & =  \bigg[ \operatorname{Tr}\left(\*V_k^{(u)} \*V_{k}^{(u) \top}   \sum_{i=1}^c \mathfrak{q_i q_i}^{\top}-2 \tilde{\*A}_{k}^{(u)} \*D_l+\*V_k^{(u)} \*V_{k}^{(u) \top}   \sum_{i=1}^c \mathfrak{q_i q_i}^{\top} \*V_{\varnothing}^{(u)} \*V_{\varnothing}^{(u)\top} \right) \bigg]. \\
      \label{penulitmate_bound_claim1}
\end{aligned}
\end{equation}
For $\operatorname{Tr}\left(\*V_k^{(u)} \*V_{k}^{(u) \top}   \sum_{i=1}^c \mathfrak{q_i q_i}^{\top}-2 \tilde{\*A}_{k}^{(u)} \*D_l+\*V_k^{(u)} \*V_{k}^{(u) \top}   \sum_{i=1}^c \mathfrak{q_i q_i}^{\top} \*V_{\varnothing}^{(u)} \*V_{\varnothing}^{(u)\top} \right) $, given that we let $\*D_l \triangleq \sum_{i=1}^c   {\rm diag}(\mathfrak{q}_i)$, and Assumption~\ref{Ass2}, i.e., $\mathfrak{q}_i^\top\*V_{\varnothing}^{(u)}$=0, we have that:
\begin{equation}
\begin{aligned}
    \operatorname{Tr}   \left(\left(\mathbf{V}_{k}(\phi_l) \boldsymbol{\Sigma}_{k}(\phi_l) \mathbf{V}_k(\phi_l)^{\top}\right)^{\prime}\right) \bigg|_{\phi_l=0}  & \geq\operatorname{Tr}\left(   \sum_{i=1}^c \mathfrak{q_i q_i}^{\top} -2 \tilde{\*A}_{k}^{(u)} \*D_l\right) = \operatorname{Tr}\left(   \sum_{i=1}^c \mathfrak{q_i q_i}^{\top} -2 \*F^{(u)}\*F^{(u)\top} \*D_l\right) \\& = \sum_{i=1}^c \|\mathfrak{q}_{i}\|_F^2 - 2 \*F_i^{ (u)\top}\*F_i^{ (u)}  \sum_{i=1}^c \|\mathfrak{q}_{i}\|_1 ,
\end{aligned}
\end{equation}

According to the definition of $r$, we have the upper bound of the ID features as $\*F_{ i}^{(u) \top}\*F_i^{ (u)}\leq r^2$. Therefore, we have the following lower bound:
\begin{equation}
     \operatorname{Tr}   \left(\left(\mathbf{V}_{k}(\phi_l) \boldsymbol{\Sigma}_{k}(\phi_l) \mathbf{V}_k(\phi_l)^{\top}\right)^{\prime}\right) \bigg|_{\phi_l=0} \geq \sum_{i=1}^c \|\mathfrak{q}_{i}\|_F^2 - 2 r^2 \sum_{i=1}^c \|\mathfrak{q}_{i}\|_1.
     \label{eq81_app}
\end{equation}

\end{proof}
$~~$

\begin{lemma*}
\label{lemma6_app} Under same conditions in Lemma~\ref{lemma5_app}, the lower bound for  $ \scriptsize \operatorname{Tr}\left(\left(\tilde{\*A}_{\rm OI}(\phi_l) \mathbf{V}_k(\phi_l) \boldsymbol{\Sigma}_k(\phi_l)^{-1} \mathbf{V}_k(\phi_l)^{\top} \tilde{\*A}_{\rm OI}(\phi_l)^{\top}\right)^{\prime}\right)\bigg|_{\phi_l=0}$ is given as follows:
\begin{equation}
\begin{aligned}
         &\operatorname{Tr}\left(\left(\tilde{\*A}_{{\rm OI}}(\phi_l) \mathbf{V}_k(\phi_l) \boldsymbol{\Sigma}_k(\phi_l)^{-1} \mathbf{V}_k(\phi_l)^{\top} \tilde{\*A}_{{\rm OI}}(\phi_l)^{\top}\right)^{\prime}\right)\bigg|_{\phi_l=0}  \\ &  \geq 2\sum_{i=1}^{c} \operatorname{Tr}\left(\mathfrak{p}_i\mathfrak{q}_i^\top \cdot   \tilde{\*A}_{\rm OI}^{(u)\top}\right)  -\|\tilde{\*A}_{\rm OI}^{(u)} \|_F^2  \|\tilde{\*A}^{(u)}\|_F^2  \cdot(  {\sum_{i=1}^c \|\mathfrak{q}_i\|_F^2 - \frac{ 2r^2(\tau- k)}{\tau-1}}\sum_{i=1}^c\| \mathfrak{q}_{i}\|_1),
\end{aligned}
\end{equation}
where $\mathfrak{p}\in \mathbb{R}^{M\times c}$ is defined as $(\mathfrak{p}_i)_\*x = \mathbb{E}_{\Bar{\*x}_{l} \sim {\mathbb{P}_{l_i}}} \mathcal{T}(\*x | \Bar{\*x}_{l}), \*x \in \mathcal{X}_{\rm ood}$.
\end{lemma*}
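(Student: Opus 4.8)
The plan is to reduce the claim to a first-order perturbation computation of the same type carried out in the proof of Lemma~\ref{lemma5_app}, and then estimate the resulting spectral quantities. Recall from Step~2 of the proof of Theorem~\ref{MainT-2-app} that $\*Z_{\rm ood}(\phi_l)\*Z_{\rm ood}(\phi_l)^{\top}=\tilde{\*A}_{\rm OI}(\phi_l)\,\mathbf{V}_k(\phi_l)\boldsymbol{\Sigma}_k(\phi_l)^{-1}\mathbf{V}_k(\phi_l)^{\top}\tilde{\*A}_{\rm OI}(\phi_l)^{\top}$. Writing $\*P_k(\phi_l)\triangleq\mathbf{V}_k(\phi_l)\boldsymbol{\Sigma}_k(\phi_l)^{-1}\mathbf{V}_k(\phi_l)^{\top}=\sum_{j=1}^{k}\boldsymbol{\lambda}_j(\phi_l)^{-1}\*\Phi_j(\phi_l)$ (the pseudo-inverse of the top-$k$ truncation of $\tilde{\*A}(\phi_l)$), the product rule together with transpose-invariance of the trace and symmetry of $\*P_k(0)$ gives
\[
\operatorname{Tr}\!\Big(\big(\tilde{\*A}_{\rm OI}(\phi_l)\*P_k(\phi_l)\tilde{\*A}_{\rm OI}(\phi_l)^{\top}\big)'\Big)\Big|_{\phi_l=0}=2\operatorname{Tr}\!\big(\tilde{\*A}_{\rm OI}'(0)\,\*P_k(0)\,\tilde{\*A}_{\rm OI}^{(u)\top}\big)+\operatorname{Tr}\!\big(\tilde{\*A}_{\rm OI}^{(u)}\,\*P_k'(0)\,\tilde{\*A}_{\rm OI}^{(u)\top}\big),
\]
so the task splits into a \emph{first-order transport term} and a \emph{spectral-perturbation term}.

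For the transport term, Equation~\ref{eq23_app} shows that the only $\phi_l$-dependent part of $\tilde{\*A}_{\rm OI}(\phi_l)$ is $\phi_l\sum_{i}\mathfrak{p}_i\mathfrak{q}_i^{\top}$, hence $\tilde{\*A}_{\rm OI}'(0)=\sum_{i}\mathfrak{p}_i\mathfrak{q}_i^{\top}$; since $\*P_k(0)$ is symmetric and Assumption~\ref{Ass2} gives $\*P_k(0)\mathfrak{q}_i=\mathfrak{q}_i$, this term collapses exactly to $2\sum_{i=1}^{c}\operatorname{Tr}(\mathfrak{p}_i\mathfrak{q}_i^{\top}\tilde{\*A}_{\rm OI}^{(u)\top})$, the leading term of the claimed bound. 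For the spectral-perturbation term I expand $\*P_k'(0)=\sum_{j\le k}\big(-\boldsymbol{\lambda}_j^{(u)-2}\boldsymbol{\lambda}_j'(0)\*\Phi_j^{(u)}+\boldsymbol{\lambda}_j^{(u)-1}\*\Phi_j'(0)\big)$ using the first-order eigenvalue and eigenprojector identities of \cite{greenbaum2020first} (the same formulas already used in the proof of Lemma~\ref{lemma5_app}), substitute $[\tilde{\*A}(\phi_l)]'|_{\phi_l=0}=-\tfrac12\*D_l\tilde{\*A}^{(u)}+\sum_i\mathfrak{q}_i\mathfrak{q}_i^{\top}-\tfrac12\tilde{\*A}^{(u)}\*D_l$ from Equation~\ref{eq70_pp} with $\*D_l=\sum_i{\rm diag}(\mathfrak{q}_i)$, and invoke Assumption~\ref{Ass2} to annihilate every contribution of the $\sum_i\mathfrak{q}_i\mathfrak{q}_i^{\top}$ piece that meets the orthogonal complement $\*V_{\varnothing}^{(u)}$ and to let the ``within-top-$k$'' cross terms telescope exactly as in Lemma~\ref{lemma5_app}.

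The genuinely new part, and the step I expect to be the main obstacle, is the residual block of eigenprojector-derivative cross terms between the top-$k$ and bottom-$(N-k)$ eigenspaces. In Lemma~\ref{lemma5_app} these vanished identically because the outer matrices were the identity and orthonormality forced $\*v_j^{\top}\*v_i=0$; here the outer matrices are $\tilde{\*A}_{\rm OI}^{(u)}$, so each such term has the shape $\big(\*v_j^{\top}\tilde{\*A}_{\rm OI}^{(u)\top}\tilde{\*A}_{\rm OI}^{(u)}\*v_i\big)\big(\*v_i^{\top}[\tilde{\*A}]'\*v_j\big)$ with $j\le k<i$ and does not vanish. I will bound $|\*v_j^{\top}\tilde{\*A}_{\rm OI}^{(u)\top}\tilde{\*A}_{\rm OI}^{(u)}\*v_i|\le\|\tilde{\*A}_{\rm OI}^{(u)}\|_F^2$, note that on these indices Assumption~\ref{Ass2} reduces $\*v_i^{\top}[\tilde{\*A}]'\*v_j$ to $-\tfrac{\boldsymbol{\lambda}_i^{(u)}+\boldsymbol{\lambda}_j^{(u)}}{2}\*v_i^{\top}\*D_l\*v_j$, control $|\*v_i^{\top}\*D_l\*v_j|$ and the total mass of the diagonal $\*D_l$ through $\|\*z\|_2\le r$ and $\sum_i\|\mathfrak{q}_i\|_1$, and use the spectral-gap hypothesis $\boldsymbol{\lambda}_k^{(u)}\ge\tau\boldsymbol{\lambda}_{k+1}^{(u)}$ to bound the energy factors $\tfrac{\boldsymbol{\lambda}_i^{(u)}+\boldsymbol{\lambda}_j^{(u)}}{\boldsymbol{\lambda}_j^{(u)}(\boldsymbol{\lambda}_j^{(u)}-\boldsymbol{\lambda}_i^{(u)})}$ uniformly in $j\le k<i$; together with $\sum_{j\le k}(\boldsymbol{\lambda}_j^{(u)})^2\le\|\tilde{\*A}^{(u)}\|_F^2$ and the count of $N-k$ bottom indices this produces exactly the coefficient $\|\tilde{\*A}_{\rm OI}^{(u)}\|_F^2\|\tilde{\*A}^{(u)}\|_F^2\tfrac{2(\tau-k)}{\tau-1}$. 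Throughout one must keep the orientation consistent (discard only nonnegative pieces, bound negative pieces from below) so that the assembled inequality is a genuine \emph{lower} bound; combining the transport term, the within-top-$k$ contribution, and this residual estimate then yields the statement.
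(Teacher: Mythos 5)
Your product-rule decomposition and your treatment of the transport term are exactly what the paper does: using $\tilde{\*A}_{\rm OI}'(0)=\sum_i\mathfrak{p}_i\mathfrak{q}_i^{\top}$ and Assumption~\ref{Ass2} to collapse $\mathbf{V}_k^{(u)}[\boldsymbol{\Sigma}_k^{(u)}]^{-1}\mathbf{V}_k^{(u)\top}\mathfrak{q}_i=\mathfrak{q}_i$ yields the leading term $2\sum_i\operatorname{Tr}(\mathfrak{p}_i\mathfrak{q}_i^{\top}\tilde{\*A}_{\rm OI}^{(u)\top})$, as in the paper. The gap is in the spectral-perturbation term, which you correctly flag as the main obstacle but whose sketch would not deliver the stated constants. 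Your plan bounds each top-$k$/bottom cross term individually via $|\*v_j^{\top}\tilde{\*A}_{\rm OI}^{(u)\top}\tilde{\*A}_{\rm OI}^{(u)}\*v_i|\le\|\tilde{\*A}_{\rm OI}^{(u)}\|_F^2$ and then sums "over the count of $N-k$ bottom indices." This discards the projector identity $\sum_{i>k}\*\Phi_i^{(u)}=\*I_N-\*V_k^{(u)}\*V_k^{(u)\top}$, which is what lets the bottom eigenspace be summed exactly rather than counted; a term-by-term absolute-value bound over the $k(N-k)$ pairs generically introduces a dependence on $N-k$, whereas the target coefficient contains $\tau-k$. In the paper's argument the factor $\tau-k$ does not come from counting bottom indices at all: it arises as $\frac{2}{\tau-1}\|\*V_k^{(u)}\|_F^2-\frac{2\tau}{\tau-1}=\frac{2(k-\tau)}{\tau-1}$ after bounding $\operatorname{Tr}(\tilde{\*A}_k^{(u)}\*D_l\*V_k^{(u)}\*V_k^{(u)\top})\le\|\*V_k^{(u)}\|_F^2\operatorname{Tr}(\tilde{\*A}_k^{(u)}\*D_l)$ with $\|\*V_k^{(u)}\|_F^2=k$. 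So your claimed mechanism for producing "exactly the coefficient" does not match any route that actually produces it.

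The paper's route for this term is structurally different and worth adopting: it first decouples the OOD matrix entirely, writing the term as $\operatorname{Tr}\bigl((\tilde{\*A}_k(\phi_l)^{-1})'\,\tilde{\*A}_{\rm OI}^{(u)\top}\tilde{\*A}_{\rm OI}^{(u)}\bigr)$, uses $(\tilde{\*A}_k^{-1})'=-\tilde{\*A}_k^{-1}(\tilde{\*A}_k)'\tilde{\*A}_k^{-1}$ together with trace--eigenvalue inequalities to peel off the scalar prefactor $\|\tilde{\*A}_{\rm OI}^{(u)}\|_F^2\|\tilde{\*A}^{(u)}\|_F^2$, and only then upper-bounds the remaining scalar $\operatorname{Tr}\bigl((\tilde{\*A}_k(\phi_l))'\bigr)\big|_{\phi_l=0}$ by rerunning the Lemma~\ref{lemma5_app} computation in matrix form, with the single change that the ratios $\boldsymbol{\lambda}_j^{(u)}/(\boldsymbol{\lambda}_j^{(u)}-\boldsymbol{\lambda}_i^{(u)})$ for $j\le k<i$ are bounded above by $\tau/(\tau-1)$ via the spectral-gap assumption. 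If you want to keep your direct expansion of $\*P_k'(0)$, you must at minimum (i) retain the bottom eigenprojectors in aggregated form $\*I_N-\*V_k^{(u)}\*V_k^{(u)\top}$ rather than bounding pairwise, and (ii) verify that your route reproduces the specific combination $\sum_i\|\mathfrak{q}_i\|_F^2-\frac{2r^2(\tau-k)}{\tau-1}\sum_i\|\mathfrak{q}_i\|_1$ multiplying $\|\tilde{\*A}_{\rm OI}^{(u)}\|_F^2\|\tilde{\*A}^{(u)}\|_F^2$; as written, neither is established.
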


\begin{proof}[Proof of Lemma \ref{lemma6_app}] 
    Given the fact that $\tilde{\*A}_{\rm OI}(\phi_l) = \phi_u \tilde{\*A}_{\rm OI}^{(u)}  + \phi_l \cdot \sum_{i=1}^c\mathfrak{p}_i\mathfrak{q}_i^\top$, we have the following:
    \begin{equation}
    \begin{aligned}
             & \operatorname{Tr}\left(\left(\tilde{\*A}_{\rm OI}(\phi_l) \mathbf{V}_k(\phi_l) \boldsymbol{\Sigma}_k(\phi_l)^{-1} \mathbf{V}_k(\phi_l)^{\top} \tilde{\*A}_{\rm OI}(\phi_l)^{\top}\right)^{\prime}\right)\bigg|_{\phi_l=0}  \\&=  2 \operatorname{Tr}\left( \sum_{i=1}^c\mathfrak{p}_i\mathfrak{q}_i^\top \cdot  \mathbf{V}_k^{(u)} {\boldsymbol{\Sigma}_k^{(u)}}^{-1}
 \mathbf{V}_k^{(u)\top} \tilde{\*A}_{\rm OI}^{(u)\top}\right) +  \operatorname{Tr}\left(\tilde{\*A}_{\rm OI}^{(u)} \left(\mathbf{V}_k(\phi_l) \boldsymbol{\Sigma}_k(\phi_l)^{-1} \mathbf{V}_k(\phi_l)^{\top}\right)^\prime \tilde{\*A}_{\rm OI}^{(u)\top} \right)\bigg|_{\phi_l=0} 
  \\& = 2 \operatorname{Tr}\left( \sum_{i=1}^c\mathfrak{p}_i\mathfrak{q}_i^\top \cdot ( \tilde{\*A}^{(u)})^{-1} \tilde{\*A}_{\rm OI}^{(u)\top}\right) + \operatorname{Tr}\left( (\tilde{\*A}_{k}(\phi_l)^{-1})^\prime \cdot  \tilde{\*A}_{\rm OI}^{(u)\top} \tilde{\*A}_{\rm OI}^{(u)}\right) \bigg|_{\phi_l=0}
 \\& \geq   2 \operatorname{Tr}\left( \sum_{i=1}^c\mathfrak{p}_i\mathfrak{q}_i^\top \cdot  ( \tilde{\*A}^{(u)})^{-1} \tilde{\*A}_{\rm OI}^{(u)\top}\right) + \min \lambda(\tilde{\*A}_{\rm OI}^{(u)\top} \tilde{\*A}_{\rm OI}^{(u)}) \cdot \operatorname{Tr}\left((\tilde{\*A}_{k}(\phi_l)^{-1})^\prime\right) \bigg|_{\phi_l=0}
 \\& =  2 \operatorname{Tr}\left( \sum_{i=1}^c\mathfrak{p}_i\mathfrak{q}_i^\top \cdot  ( \tilde{\*A}^{(u)})^{-1} \tilde{\*A}_{\rm OI}^{(u)\top}\right) - \min \lambda( \tilde{\*A}_{\rm OI}^{(u)\top} \tilde{\*A}_{\rm OI}^{(u)}) \cdot  \operatorname{Tr}\left( (\tilde{\*A}^{(u)})^{-2} \cdot  (\tilde{\*A}_{k}(\phi_l))^\prime\right) \bigg|_{\phi_l=0}
 \\& \geq   2 \operatorname{Tr}\left( \sum_{i=1}^c\mathfrak{p}_i\mathfrak{q}_i^\top \cdot  ( \tilde{\*A}^{(u)})^{-1} \tilde{\*A}_{\rm OI}^{(u)\top}\right) - \min \lambda( \tilde{\*A}_{\rm OI}^{(u)\top} \tilde{\*A}_{\rm OI}^{(u)}) \cdot \max \lambda((\tilde{\*A}^{(u)})^{-2}) \cdot  \operatorname{Tr}\left(   (\tilde{\*A}_{k}(\phi_l))^\prime\right) \bigg|_{\phi_l=0}
 \\& = 2 \operatorname{Tr}\left( \sum_{i=1}^c\mathfrak{p}_i\mathfrak{q}_i^\top \cdot  ( \tilde{\*A}^{(u)})^{-1} \tilde{\*A}_{\rm OI}^{(u)\top}\right) - \min \lambda( \tilde{\*A}_{\rm OI}^{(u)\top} \tilde{\*A}_{\rm OI}^{(u)}) \cdot (\min \lambda(\tilde{\*A}^{(u)}))^2 \cdot  \operatorname{Tr}\left(   (\tilde{\*A}_{k}(\phi_l))^\prime\right) \bigg|_{\phi_l=0}
 \\& \geq 2 \operatorname{Tr}\left( \sum_{i=1}^c\mathfrak{p}_i\mathfrak{q}_i^\top \cdot  ( \tilde{\*A}^{(u)})^{-1} \tilde{\*A}_{\rm OI}^{(u)\top}\right) - \|\tilde{\*A}_{\rm OI}^{(u)} \|_F^2  \cdot (\min \lambda(\tilde{\*A}^{(u)}))^2 \cdot  \operatorname{Tr}\left(   (\tilde{\*A}_{k}(\phi_l))^\prime\right) \bigg|_{\phi_l=0}
  \\& \geq 2 \operatorname{Tr}\left( \sum_{i=1}^c\mathfrak{p}_i\mathfrak{q}_i^\top \cdot  ( \tilde{\*A}^{(u)})^{-1} \tilde{\*A}_{\rm OI}^{(u)\top}\right) - \|\tilde{\*A}_{\rm OI}^{(u)} \|_F^2  \cdot \|\tilde{\*A}^{(u)}\|_F^2 \cdot  \operatorname{Tr}\left(   (\tilde{\*A}_{k}(\phi_l))^\prime\right) \bigg|_{\phi_l=0}
  \\& = 2  \operatorname{Tr}\left( \sum_{i=1}^c\mathfrak{p}_i\mathfrak{q}_i^\top \tilde{\*A}_{\rm OI}^{(u)\top}\right) - \|\tilde{\*A}_{\rm OI}^{(u)} \|_F^2  \cdot \|\tilde{\*A}^{(u)}\|_F^2 \cdot  \operatorname{Tr}\left(   (\tilde{\*A}_{k}(\phi_l))^\prime\right) \bigg|_{\phi_l=0}
    \end{aligned}
    \end{equation} 
The first two inequalities are again the inequality for the trace of matrix product of two square matrices~\cite{362841}. The third inequality holds because:
\begin{equation}
    \min \lambda( \tilde{\*A}_{\rm OI}^{(u)\top} \tilde{\*A}_{\rm OI}^{(u)}) < \operatorname{Tr}(\tilde{\*A}_{\rm OI}^{(u)\top} \tilde{\*A}_{\rm OI}^{(u)}) = \|\tilde{\*A}_{\rm OI}^{(u)} \|_F^2.
\end{equation}
The last inequality holds because:
\begin{equation}
    (\min \lambda(\tilde{\*A}^{(u)}))^2  = \min \lambda(\tilde{\*A}^{(u)2}) = \min \lambda(\tilde{\*A}^{(u)} \cdot \tilde{\*A}^{(u)\top}) \leq  \|\tilde{\*A}^{(u)}\|_F^2
\end{equation}
Finally, the last equation holds true because of Assumption~\ref{Ass2} where the perturbation vector $\mathfrak{q}_i$ lies in the linear span of the matrix $\*V_k^{(u)}[{\*\Sigma_k^{(u)}}]^{-\frac{1}{2}}$: 
\begin{equation}
    \mathfrak{q}_i^\top \cdot  ( \tilde{\*A}^{(u)})^{-1} = \mathfrak{q}_i^\top \cdot   \mathbf{V}_k^{(u)} {\boldsymbol{\Sigma}_k^{(u)}}^{-1}
 \mathbf{V}_k^{(u)\top}  =  \mathfrak{q}_i^\top .
\end{equation}

For the key term $\operatorname{Tr}\left(\tilde{\*A}_{k}(\phi_l)^\prime \right) \bigg|_{\phi_l=0}$, it is easy to check that $\frac{ \boldsymbol{\lambda}_j^{ (u)}}{\boldsymbol{\lambda}_j^{ (u)}-\boldsymbol{\lambda}_i^{ (u)}} \leq  \frac{\tau}{\tau-1},j\leq k<k+1\leq i$ because $\tau$ is a constant that measures the $k$-th spectral gap of matrix $\tilde{\*A}^{(u)}$, i.e., $\boldsymbol{\lambda}_{k}^{(u)} \geq \tau \boldsymbol{\lambda}_{k+1}^{(u)}$, we can similarly get its upper bound to Equation~\ref{penulitmate_bound_claim1} as follows:
\begin{equation}
\small
\begin{aligned}
        \operatorname{Tr}\left(\tilde{\*A}_{k}(\phi_l)^\prime \right) \bigg|_{\phi_l=0} &= \operatorname{Tr}   \left(\left(\mathbf{V}_{k}(\phi_l) \boldsymbol{\Sigma}_{k}(\phi_l) \mathbf{V}_{k}(\phi_l)^{\top}\right)^{\prime}\right)\bigg|_{\phi_l=0}  \\& \leq   \bigg[  \operatorname{Tr}( \*V_k^{(u)} \cdot\*V_{k}^{(u)\top} \sum_{i=1}^c \mathfrak{q_i q_i}^{\top} \*V_k^{(u)} \*V_{k}^{(u)\top} ) - 2  \operatorname{Tr}( \tilde{\*A}_{k}^{(u)} \*D_l \*V_k^{(u)} \cdot\*V_{k}^{(u)\top}  ) \\
     & + 2\frac{\tau}{\tau-1} \operatorname{Tr}\left( \*V_k^{(u)} \*V_{k}^{(u) \top}  \sum_{i=1}^c \mathfrak{q_i q_i}^{\top}\left(\*I_N- \*V_k^{(u)} \*V_{k}^{(u) \top}\right)\right)-2 \frac{\tau}{\tau-1}\operatorname{Tr}\left( \tilde{\*A}_{k}^{(u)} \*D_l\left(\*I_N- \*V_k^{(u)} \*V_{k}^{(u) \top}\right)\right)  \bigg] \\ & = 2 \frac{\tau}{\tau-1}\operatorname{Tr}\left( \*V_k^{(u)} \*V_{k}^{(u) \top}  \sum_{i=1}^c \mathfrak{q_i q_i}^{\top}\right) - \frac{\tau+1}{\tau-1}    \operatorname{Tr}\left( \*V_k^{(u)} \*V_{k}^{(u) \top}   \sum_{i=1}^c \mathfrak{q_i q_i}^{\top} \*V_k^{(u)} \*V_{k}^{(u) \top}  \right) \\& + \frac{2}{\tau-1} \operatorname{Tr}\left( \tilde{\*A}_{k}^{(u)} \*D_l\*V_k^{(u)} \*V_{k}^{(u) \top}\right) - 2 \frac{\tau}{\tau-1}\operatorname{Tr}\left( \tilde{\*A}_{k}^{(u)} \*D_l\right) \\& = \operatorname{Tr}\left( \*V_k^{(u)} \*V_{k}^{(u) \top}  \sum_{i=1}^c \mathfrak{q_i q_i}^{\top}\right)+ \frac{2}{\tau-1} \operatorname{Tr}\left( \tilde{\*A}_{k}^{(u)} \*D_l\*V_k^{(u)} \*V_{k}^{(u) \top}\right) - 2 \frac{\tau}{\tau-1}\operatorname{Tr}\left( \tilde{\*A}_{k}^{(u)} \*D_l\right) \\&  \leq \operatorname{Tr}\left( \*V_k^{(u)} \*V_{k}^{(u) \top}  \sum_{i=1}^c \mathfrak{q_i q_i}^{\top}\right) + \frac{2}{\tau-1}\cdot \|\*V_k^{(u)}\|_F^2 \operatorname{Tr}\left( \tilde{\*A}_{k}^{(u)} \*D_l\right) - 2 \frac{\tau}{\tau-1}\operatorname{Tr}\left( \tilde{\*A}_{k}^{(u)} \*D_l\right)\\& \leq 
      \operatorname{Tr}\left( \*V_k^{(u)} \*V_{k}^{(u) \top}  \sum_{i=1}^c \mathfrak{q_i q_i}^{\top}\right) + \frac{2(k-\tau)}{\tau-1} \operatorname{Tr}\left( \tilde{\*A}_{k}^{(u)} \*D_l\right)
      \\& =   {\sum_{i=1}^c \|\mathfrak{q}_i\|_F^2 - \frac{ 2\*F_{ i}^{ (u)\top}\*F_i^{ (u)}(\tau- k)}{\tau-1}}\sum_{i=1}^c\| \mathfrak{q}_{i}\|_1  
       =  {\sum_{i=1}^c \|\mathfrak{q}_i\|_F^2 - \frac{ 2r^2(\tau- k)}{\tau-1}}\sum_{i=1}^c\| \mathfrak{q}_{i}\|_1
\end{aligned}
\end{equation}

Therefore, it is natural to obtain that:
\begin{equation}
\begin{aligned}
      &   \operatorname{Tr}\left(\left( \tilde{\mathbf{A}}_{\rm OI}(\phi_l) \mathbf{V}_k(\phi_l) \boldsymbol{\Sigma}_k(\phi_l)^{-1} \mathbf{V}_k(\phi_l)^{\top} \tilde{\mathbf{A}}_{\rm OI}(\phi_l)^{\top}\right)^{\prime}\right)\bigg|_{\phi_l=0} \geq\\& 2\sum_{i=1}^{c} \operatorname{Tr}\left(\mathfrak{p}_i\mathfrak{q}_i^\top \cdot   \tilde{\*A}_{\rm OI}^{(u)\top}\right)  -\|\tilde{\*A}_{\rm OI}^{(u)} \|_F^2  \|\tilde{\*A}^{(u)}\|_F^2  \cdot (  {\sum_{i=1}^c \|\mathfrak{q}_i\|_F^2 - \frac{ 2r^2(\tau- k)}{\tau-1}}\sum_{i=1}^c\| \mathfrak{q}_{i}\|_1).
\end{aligned}
\label{final_bound_claim2}
\end{equation}

\end{proof}
\newpage

\section{Empirical Verification of the Main Theorems}
\label{sec:verification_discrepancy}

\textbf{Verification of Theorems.} We provide more verification results on \textsc{Cifar10}. Firstly, we verify how the value of $\mathcal{G}$ will change given a larger Frobenius norm of $\tilde{\*A}_{\rm OI}^{(u)}$.

\begin{table}[!h]
    \centering
    \begin{tabular}{c|cc}
    \hline 
    OOD dataset & \multicolumn{1}{c}{\textsc{Svhn}}  & \multicolumn{1}{c}{\textsc{C100}}\\
   \cline{2-3}
   & {\small \textsc{Far OOD}}& {\small\textsc{Near OOD}}\\
   \hline
 $\| \tilde{\*A}_{\rm OI}^{(u)} \|_F$  $\uparrow$  & $2583$& $2876$ \\
$\mathcal{G}$ $\uparrow$ & 0.00 & \textbf{0.14} \\
       \hline
    \end{tabular}
    \caption{ Verification with different $\| \tilde{\*A}_{\rm OI}^{(u)} \|_F$ (\textsc{Cifar10} as ID).}
    \label{tab:veri_a_ood_id_app}
\end{table}

Next we verify the relationship of $\|\tilde{\*A}^{(u)}\|_F$ and the error difference (\textsc{Cifar100} as OOD) in Table~\ref{tab:veri_a_id_id_app}.
\begin{table}[!h]
    \centering
    \begin{tabular}{c|ccccccc}
    \hline 
    Epochs  & 40 & 80 & 120  & 160& 200 & 240 \\
    \hline
      $\| \tilde{\*A}^{(u)} \|_F$   $\downarrow$  &19873 & 19762& 17539 & 16640& 15982 & 15361\\
     $ \mathcal{G} $ $\uparrow$& 0.03 & 0.04 & 0.07 & 0.11& 0.14 & 0.15\\
       \hline
    \end{tabular}
    \caption{Verification with different $\| \tilde{\*A}^{(u)} \|_F$ (\textsc{Cifar10} dataset as ID).}
    \label{tab:veri_a_id_id_app}
\end{table}

These two tables show a similar result as when the ID dataset is \textsc{Cifar100}, where the error difference on \textsc{Cifar100} (near OOD, with larger $\|\tilde{\*A}_{\rm OI}^{(u)}\|_F$) is consistently larger than that on \textsc{Svhn} (far OOD). Moreover, the difference in linear probing error $\mathcal{G}$ tends to increase with decreasing $\|\tilde{\*A}^{(u)}\|_F$, which aligns with Theorem~\ref{MainT-3}. 

\textbf{Verification of the assumptions.} Here are the empirical verifications of the assumptions on real-world datasets, i.e., \textsc{Cifar10} and \textsc{Cifar100} datasets and simulated datasets as shown in Figure~\ref{fig:toy_example}. 

For \textsc{Cifar10} and \textsc{Cifar100}, we have checked the largest eigengap of matrix $ \tilde{\*A}^{(u)}$ and observed that $\tau=7740.92$ for \textsc{Cifar10} and $\tau=8834.78$ for \textsc{Cifar100}, which are much larger than the feature dimension $512$. Therefore, we can always find a proper $k$ such that $\tau > k$ is satisfied because $k\leq 512$.

For the simulated datasets as shown in Figure~\ref{fig:toy_example}, the largest eigengap of matrix $ \tilde{\*A}^{(u)}$ is 872.00. Therefore the condition $\tau>k$ always holds because $k=2$ in this case. We have provided more details about the matrix itself in Section~\ref{detail_toy_example_app}.

\section{Details of the Illustrative Example}
\label{detail_toy_example_app}
For Figure~\ref{fig:toy_example} in the main paper, we generate the augmentation graph $\mathcal{T}$ as follows:
\begin{equation}
   \mathcal{T} =  \begin{pmatrix}
        \*B_1 &   \*B_2 & \*B_2  \\
          \*B_2 &   \*B_3 & \*B_2  \\
            \*B_2 &   \*B_2 & \*B_4 
    \end{pmatrix},
\end{equation}
where the block-wise matrices $\*B_1, \*B_2, \*B_3, \*B_4 \in \mathbb{R}^{40 \times 40}$ are square matrices. Specifically, $\*B_1$ has the following definition:
\begin{equation}
\begin{aligned}
    &  \*B_1 = \*1_{40 \times 40} \cdot B_1 + \frac{1}{2} \boldsymbol{\varepsilon}_1 + \frac{1}{2}\boldsymbol{\varepsilon}_1^\top, \\&
    \*B_2 = \*1_{40 \times 40} \cdot B_2 + \frac{1}{2} \boldsymbol{\varepsilon}_2 + \frac{1}{2}\boldsymbol{\varepsilon}_2^\top,\\&
     \*B_3 = \*1_{40 \times 40} \cdot B_3 + \frac{1}{2} \boldsymbol{\varepsilon}_3 + \frac{1}{2}\boldsymbol{\varepsilon}_3^\top,\\&
      \*B_4 = \*1_{40 \times 40} \cdot B_4 + \frac{1}{2} \boldsymbol{\varepsilon}_4 + \frac{1}{2}\boldsymbol{\varepsilon}_4^\top,
\end{aligned}
\end{equation}
here $ \boldsymbol{\varepsilon}_1,  \boldsymbol{\varepsilon}_2,  \boldsymbol{\varepsilon}_3,  \boldsymbol{\varepsilon}_4\in \mathbb{R}^{40 \times 40}$ are matrices where each element of them is sampled from a truncated normal distribution (lower bound is -0.1, upper bound is 0.1, mean is 0 and variance is 1). $\*1_{40 \times 40}$ is a matrix where each element in it is 1. Essentially, the matrix $\*B_1$ measures the connectivity pattern between the data that belongs to the first ID class. We set $B_1=0.8$. 

Similarly, $\*B_3$ and $\*B_4$ measure the connectivity pattern between the data that belongs to the second/third ID class, and we set $B_3=0.75, B_4=0.7$ respectively for $\*B_3$ and $\*B_4$. Moreover, the matrix  $\*B_2$ measures the connectivity pattern between the data that belongs to different ID classes and we set $B_2=0.1$ in this case. For the ID adjacency matrix $\*A^{(u)}$ and $\*A^{(l)}$, we can follow the definition in Definitions~\ref{Def4} and~\ref{Def5} of the main paper for calculation. 

In addition, we generate the OOD-ID adjacency matrix $\tilde{\*A}_{\rm OI}^{(u)}$ by sampling from a truncated normal distribution where the lower bound is 0, the upper bound is 0.5, the mean is 0.5 and variance is 0.05 in the near OOD scenario. In the far OOD scenario, we set the lower bound to be 0, the upper bound to be 0.2, the mean to be 0.2, and the variance to be 0.05 for the truncated normal distribution. In the labeled case, the OOD-ID adjacency matrix $\tilde{\*A}_{\rm OI}^{(l)}$ can be calculated according to Equation~\ref{eq23_app}. Here $\mathfrak{q}$ can be calculated based on the augmentation graph $\mathcal{T}$ and each element of $\mathfrak{p}$ (that represents the connectivity probability between the OOD and ID data) is set to 0.1 because we let the connectivity probability be 0.1 when the data belongs to different classes in the augmentation graph $\mathcal{T}$. Finally, we set $\phi_u=1$ and $\phi_l=0.5$ for the calculation of all the adjacency matrices and the data representations.

\section{Additional Results on the Illustrative Example}
\label{result_toy_example_app}
In this section, we provide additional visualization results on changing the Frobenius norm of the ID adjacency matrix $\tilde{\*A}^{(u)}$ and the semantic connection $\mathfrak{q}$ in Figure~\ref{fig:toy_example_app}. For all the examples, we visualize under the near OOD scenario (meaning that we use the same $\tilde{\*A}_{\rm OI}$ as in Section~\ref{detail_toy_example_app}). Please check the caption for a detailed explanation.

\begin{figure*}[!h]
  \begin{center}
   {\includegraphics[width=1\linewidth]{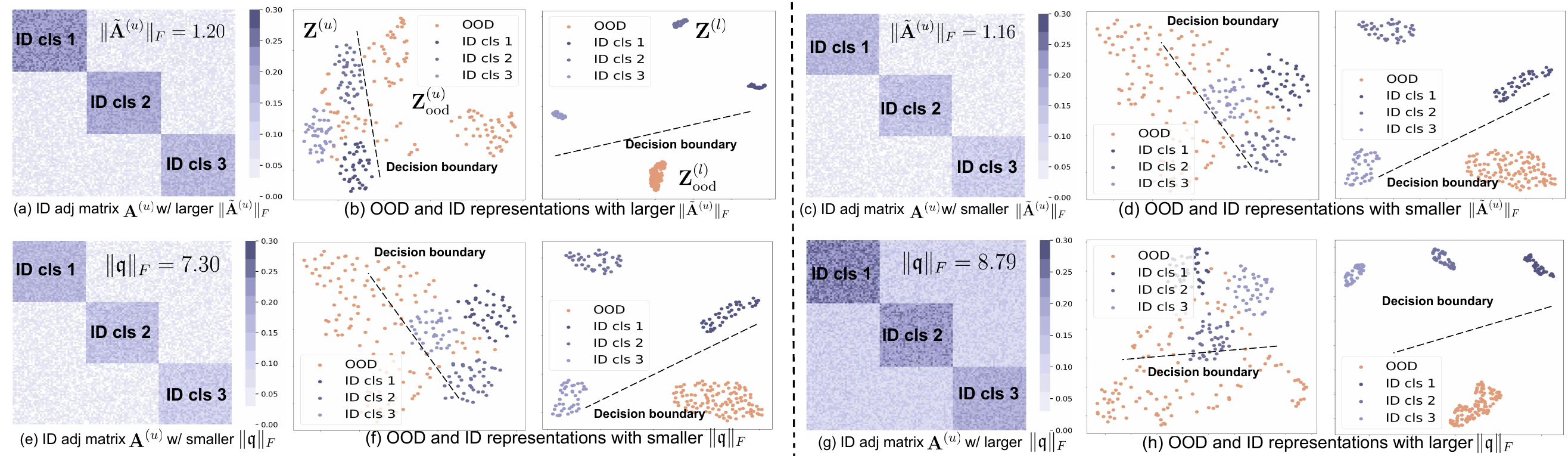}}
  \end{center}
  \vspace{-1em}
  \caption{\small  Additional example showcasing the contrast between adjacency matrices and representations w/ (l) and w/o (u) ID labels. (a) The ID adjacency matrix in the unlabeled case $\*A^{(u)}$ with a larger $\|\tilde{\*A}^{(u)}\|_F$ ($B_1= 0.8, B_2=0.1, B_3=0.75, B_4=0.7$). (b) The contrast of the learned representations in both labeled and unlabeled cases when $\|\tilde{\*A}^{(u)}\|_F=1.20$. (c) The ID adjacency matrix in the unlabeled case $\*A^{(u)}$ with a smaller $\|\tilde{\*A}^{(u)}\|_F$ ($B_1= 0.7, B_2=0.1, B_3=0.65, B_4=0.6$). (d) The contrast of the learned representations in both labeled and unlabeled cases when $\|\tilde{\*A}^{(u)}\|_F=1.16$. Compared with (b) where the difference in the linear probing loss $\mathcal{G}$ is 0.09, the linear probing loss reduces from 0.16 to 0.02. (e) The ID adjacency matrix in the unlabeled case $\*A^{(u)}$ with a smaller $\|\mathfrak{q}\|_F$ ($B_1= 0.7, B_2=0.1, B_3=0.65, B_4=0.6$). (f) The contrast of the learned representations in both labeled and unlabeled cases when $\|\mathfrak{q}\|_F=7.30$. (g) The ID adjacency matrix in the unlabeled case $\*A^{(u)}$ with a larger $\|\mathfrak{q}\|_F$ ($B_1= 0.8, B_2=0.2, B_3=0.75, B_4=0.7$). (h) The contrast of the learned representations in both labeled and unlabeled cases when $\|\mathfrak{q}\|_F=8.79$. Compared with (f) where the difference in the linear probing loss $\mathcal{G}$ is 0.14, the linear probing loss reduces from 0.16 to 0.00. The visualization aligns with our theoretical reasoning as shown in Section~\ref{sec:4.3}.}
  \label{fig:toy_example_app}
  \end{figure*}

\section{Additional Experimental Details}
\label{add_experiment_detail}
We provide more training details for both contrastive learning and linear probing. For contrastive training, we use the same data augmentation strategies as SimSiam~\cite{chen2021exploring}. For \textsc{Cifar10}, we set $\phi_u=0.5,\phi_l=0.25$ with training epoch 200, and we evaluate using features extracted from the layer preceding the projection. For \textsc{Cifar100}, we
set $\phi_u=3,\phi_l=0.0225$ with 200 training epochs and assess based on the projection layer’s features. We use SGD with momentum 0.9 as an optimizer with cosine annealing (lr=0.03), weight decay 5e-4,
and batch size 512. 

For linear probing, we train a linear layer on the extracted features from the pretrained model by contrastive learning. We use SGD for 50 epochs with momentum 0.9 as an optimizer with a decayed learning rate in epoch 30 by 0.2 (The initial learning rate is 5),
and batch size 512.

\section{Additional Experimental Results}
\label{add_result_c10_app}

\textbf{Results on \textsc{Cifar10}.} We present the experimental results on \textsc{Cifar10} in Table~\ref{exp_result_app}, where the effect of the ID labels is similar to the results on \textsc{Cifar100}.

\begin{table*}[!h]
    \centering
    \small
    \begin{tabular}{cc|ccccccc}
    \hline
   OOD category&     OOD dataset & ID labels & FPR95 & AUROC & LP error &   FPR95 & AUROC & LP error\\
        \hline
       &   &  &\multicolumn{3}{c}{ $\mathbb{P}_{\rm ood}^{\rm test} =\mathbb{P}_{\rm ood}^{\rm LP} $}      &  \multicolumn{3}{c}{  $\mathbb{P}_{\rm ood}^{\rm test} \neq\mathbb{P}_{\rm ood}^{\rm LP} $}     \\
        \cline{4-9}
\multirow{10}{*}{\textsc{Far OOD}}      &  \multirow{2}{*}{\textsc{Svhn} }     & - & 0.05$^{\pm 0.01}$&\textbf{99.98}$^{\pm 0.00}$ &{0.01}$^{\pm 0.00}$   & 36.60$^{\pm 2.84}$ & 91.71$^{\pm 1.90}$ & 0.33$^{\pm 0.11}$ \\
  &   &+& \cellcolor[HTML]{EFEFEF}\textbf{0.03}$^{\pm 0.00}$&\cellcolor[HTML]{EFEFEF}\textbf{99.98}$^{\pm 0.01}$ & \cellcolor[HTML]{EFEFEF}\textbf{0.00}$^{\pm 0.00}$&   \cellcolor[HTML]{EFEFEF}\textbf{8.62}$^{\pm 1.12}$ & \cellcolor[HTML]{EFEFEF}\textbf{98.43}$^{\pm 0.47}$  & \cellcolor[HTML]{EFEFEF}\textbf{0.25}$^{\pm 0.19}$
    \\
   &    \multirow{2}{*}{\textsc{Textures} }     &-&  \textbf{0.21}$^{\pm 0.03}$&\textbf{99.96}$^{\pm 0.01}$& \textbf{0.01}$^{\pm 0.00}$ &\textbf{10.16}$^{\pm 0.72}$ &  \textbf{98.13}$^{\pm 2.03}$ & \textbf{0.52}$^{\pm 0.10}$ \\
&     & +& \cellcolor[HTML]{EFEFEF}{0.57}$^{\pm 0.06}$&\cellcolor[HTML]{EFEFEF}{99.80}$^{\pm 0.10}$ & \cellcolor[HTML]{EFEFEF}\textbf{0.01}$^{\pm 0.00}$ & \cellcolor[HTML]{EFEFEF}13.87$^{\pm 0.82}$ & \cellcolor[HTML]{EFEFEF}{97.60}$^{\pm 1.39}$ &  \cellcolor[HTML]{EFEFEF}0.58$^{\pm 0.18}$\\
  &     \multirow{2}{*}{\textsc{Places365} }     &- & \textbf{0.36}$^{\pm 0.01}$&\textbf{99.91}$^{\pm 0.00}$ & \textbf{0.02}$^{\pm 0.00}$& 20.19$^{\pm 1.23}$   & 96.18$^{\pm 0.28}$ & 0.43$^{\pm 0.16}$\\
  &   &+&\cellcolor[HTML]{EFEFEF}{0.68}$^{\pm 0.02}$&\cellcolor[HTML]{EFEFEF}{99.68}$^{\pm 0.06}$ & \cellcolor[HTML]{EFEFEF}\textbf{0.02}$^{\pm 0.01}$&  \cellcolor[HTML]{EFEFEF}\textbf{15.83}$^{\pm 1.83}$ & \cellcolor[HTML]{EFEFEF}\textbf{97.01}$^{\pm 0.96}$ & \cellcolor[HTML]{EFEFEF}\textbf{0.26}$^{\pm 0.06}$\\ 
 &    \multirow{2}{*}{\textsc{Lsun-Resize} }     & - & {0.68}$^{\pm 0.01}$&\textbf{99.78}$^{\pm 0.18}$ &  {0.03}$^{\pm 0.00}$   &18.21$^{\pm 1.71}$ & 96.87$^{\pm 3.28}$& 0.42$^{\pm 0.21}$ \\
  &   &+& \cellcolor[HTML]{EFEFEF}\textbf{0.56}$^{\pm 0.23}$&\cellcolor[HTML]{EFEFEF}\textbf{99.78}$^{\pm 0.10}$&  \cellcolor[HTML]{EFEFEF}\textbf{0.01}$^{\pm 0.01}$&\cellcolor[HTML]{EFEFEF}\textbf{10.83}$^{\pm 1.18}$ & \cellcolor[HTML]{EFEFEF}\textbf{97.94}$^{\pm 0.23}$ & \cellcolor[HTML]{EFEFEF}\textbf{0.39}$^{\pm 0.03}$\\ 
   &    \multirow{2}{*}{\textsc{Lsun-C} }     & - & 0.68$^{\pm 0.12}$&\textbf{99.80}$^{\pm 0.05}$ &0.04$^{\pm 0.02}$ & 11.10$^{\pm 1.78}$ & 97.93$^{\pm 1.89}$ & 0.42$^{\pm 0.07}$ \\
  &   &+&  \cellcolor[HTML]{EFEFEF}\textbf{0.32}$^{\pm 0.16}$&\cellcolor[HTML]{EFEFEF}{99.65}$^{\pm 0.05}$ &  \cellcolor[HTML]{EFEFEF}\textbf{0.02}$^{\pm 0.02}$&\cellcolor[HTML]{EFEFEF}\textbf{7.95}$^{\pm 1.92}$ & \cellcolor[HTML]{EFEFEF}\textbf{98.44}$^{\pm 0.21}$  & \cellcolor[HTML]{EFEFEF}\textbf{0.40}$^{\pm 0.19}$\\
  \hline
\multirow{2}{*}{\textsc{Near OOD}}   &   \multirow{2}{*}{\textsc{Cifar100} }     & -& 49.88$^{\pm 1.81}$&89.29$^{\pm 0.99}$ &0.24$^{\pm 0.07}$ &54.76$^{\pm 2.21}$ & 86.65$^{\pm 3.04}$ & 0.53$^{\pm 0.08}$ \\
 &    &+& \cellcolor[HTML]{EFEFEF}\textbf{41.56}$^{\pm 1.49}$&\cellcolor[HTML]{EFEFEF}\textbf{92.83}$^{\pm 0.73}$ & \cellcolor[HTML]{EFEFEF}\textbf{0.10}$^{\pm 0.02}$&\cellcolor[HTML]{EFEFEF}\textbf{40.04}$^{\pm 2.92}$ & \cellcolor[HTML]{EFEFEF}\textbf{92.42}$^{\pm 0.18}$ & \cellcolor[HTML]{EFEFEF}\textbf{0.41}$^{\pm 0.19}$\\
    \hline
    \end{tabular}
    \caption{\small OOD detection results w/ and w/o ID labels (\textsc{Cifar10} as ID). Mean and std are estimated on three different runs. Better results are highlighted in bold. ``+,-" denotes the labeled and unlabeled case. ``LP error" denotes the error of linear probing. }
    \label{exp_result_app}
\end{table*}

\textbf{Results on using post-hoc OOD detection scores.} Instead of using linear probing to evaluate the OOD detection performance, we investigate another approach, which directly calculates the $k$-NN score~\cite{sun2022out} on top of the extracted representations for both the ID and OOD data and then computes the OOD detection metrics for comparison. The result is shown in Table~\ref{exp_result_app_post_hoc}, where the OOD detection performance is usually better in the labeled case.

\begin{table*}[!h]
    \centering
    \small
    \begin{tabular}{cc|ccccc}
    \hline
   OOD category&     OOD dataset & ID labels & FPR95 & AUROC &    FPR95 & AUROC \\
        \hline
       &   &  &\multicolumn{2}{c}{ \textsc{Cifar10} as ID}      &  \multicolumn{2}{c}{  \textsc{Cifar100} as ID}     \\
        \cline{4-7}
\multirow{10}{*}{\textsc{Far OOD}}      &  \multirow{2}{*}{\textsc{Svhn} }     & - & 37.35$^{\pm 3.12}$ & 88.08$^{\pm 2.91}$&  \textbf{27.27}$^{\pm 0.88}$ & \textbf{93.85}$^{\pm 2.16}$\\
  &   &+& \cellcolor[HTML]{EFEFEF}\textbf{14.97}$^{\pm 1.81}$ & \cellcolor[HTML]{EFEFEF}\textbf{97.82}$^{\pm 0.96}$ &   \cellcolor[HTML]{EFEFEF}55.39$^{\pm 4.78}$ &  \cellcolor[HTML]{EFEFEF}86.31$^{\pm 1.12}$
     \\
   &    \multirow{2}{*}{\textsc{Textures} }     &-& 63.91$^{\pm 2.16}$ & 78.29$^{\pm 3.06}$ & 77.87$^{\pm 1.97}$ & 61.68$^{\pm 1.68}$ \\
&     & +& \cellcolor[HTML]{EFEFEF}\textbf{44.91}$^{\pm 2.90}$ & \cellcolor[HTML]{EFEFEF}\textbf{92.89}$^{\pm 0.95}$  & \cellcolor[HTML]{EFEFEF}\textbf{67.19}$^{\pm 2.08}$ & \cellcolor[HTML]{EFEFEF}\textbf{81.11}$^{\pm 2.89}$ \\
  &     \multirow{2}{*}{\textsc{Places365} }     &- & 65.90$^{\pm 4.02}$ & 86.91$^{\pm 3.38}$ & 76.52$^{\pm 4.29}$ & 70.68$^{\pm 2.90}$\\
  &   &+&\cellcolor[HTML]{EFEFEF}\textbf{61.02}$^{\pm 2.61}$ & \cellcolor[HTML]{EFEFEF}\textbf{92.65}$^{\pm 1.91}$  &  \cellcolor[HTML]{EFEFEF}\textbf{64.16}$^{\pm 0.72}$ &  \cellcolor[HTML]{EFEFEF}\textbf{85.28}$^{\pm 1.29}$ \\ 
 &    \multirow{2}{*}{\textsc{Lsun-Resize} }     & - & 68.91$^{\pm 2.17}$ & 85.89$^{\pm 0.90}$ &  90.00$^{\pm 1.38}$ & 57.91$^{\pm 3.96}$  \\
  &   &+& \cellcolor[HTML]{EFEFEF}\textbf{56.81}$^{\pm 0.61}$ & \cellcolor[HTML]{EFEFEF}\textbf{92.81}$^{\pm 1.74}$ & \cellcolor[HTML]{EFEFEF}\textbf{67.40}$^{\pm 3.19}$ & \cellcolor[HTML]{EFEFEF}\textbf{85.89}$^{\pm 3.77}$ \\ 
   &    \multirow{2}{*}{\textsc{Lsun-C} }     & - & \textbf{26.08}$^{\pm 1.48}$ & 93.21$^{\pm 0.27}$ &62.57$^{\pm 0.97}$ & 75.50$^{\pm 1.98}$ \\
  &   &+&  \cellcolor[HTML]{EFEFEF}36.32$^{\pm 2.20}$ & \cellcolor[HTML]{EFEFEF}\textbf{94.97}$^{\pm 1.88}$ &   \cellcolor[HTML]{EFEFEF}\textbf{57.10}$^{\pm 2.93}$ &  \cellcolor[HTML]{EFEFEF}\textbf{83.23}$^{\pm 2.26}$  \\
  \hline
\multirow{2}{*}{\textsc{Near OOD}}   &   \multirow{2}{*}{\textsc{Cifar100}/\textsc{Cifar10} }     & -&71.76$^{\pm 2.49}$ & 81.97$^{\pm 0.71}$ & 92.10$^{\pm 2.90}$ & 58.57$^{\pm 4.86}$\\
&   &+& \cellcolor[HTML]{EFEFEF}\textbf{53.22}$^{\pm 2.10}$ & \cellcolor[HTML]{EFEFEF}\textbf{90.94}$^{\pm 1.17}$  & \cellcolor[HTML]{EFEFEF}\textbf{84.12}$^{\pm 2.26}$ & \cellcolor[HTML]{EFEFEF}\textbf{74.31}$^{\pm 0.63}$\\
    \hline
    \end{tabular}
    \caption{\small OOD detection results measured by post-hoc $k$-NN score w/ and w/o ID labels (\textsc{Cifar10} and \textsc{Cifar100} as ID). Mean and std are estimated on three different runs. Better results are highlighted in bold. ``+,-" denotes the labeled and unlabeled case and $k$ is set to 25 for all the experiments. }
    \vspace{-1em}
    \label{exp_result_app_post_hoc}
\end{table*}

\begin{wraptable}{r}{0.4\linewidth}
    \centering
    \vspace{-2em}
    \begin{tabular}{c|cc}
    \hline
       $\|\tilde{\mathbf{A}}_{\mathrm{OI}}^{(u)} \|_F$   & $\mathcal{G}$ &  Our bound \\
       \hline
       60& 0.09& 0.07\\ 
 72&0.16 &0.12 \\
 84& 0.21 &0.16\\
 96& 0.39&0.37 \\
 108&0.40 & 0.34\\
 120& 0.61& 0.56\\
         \hline 
    \end{tabular}
    \caption{\small Numerical results on the bound tightness.}
    \label{tab:bound_tightness}
\end{wraptable}

\section{Tightness of the Bound}
\label{sec:tightness_app}
We provide the evidence to verify the tightness of our bound. Specifically, we present numerical results on the illustrative example (Figure~\ref{fig:toy_example}) to show the proximity between the value of the error difference $\mathcal{G}$ and the bound in our Theorem in Table~\ref{tab:bound_tightness}. Specifically, we set a different value of $\|\tilde{\mathbf{A}}_{\mathrm{OI}}^{(u)} \|_F$ and observed our lower bound is sufficiently close to the error difference $\mathcal{G}$ (The details of the dataset used are the same as those described in Appendix Section~\ref{detail_toy_example_app} except for the Frobenius norm of the OOD-ID adjacency matrix).

\end{document}